\documentclass[sigconf]{acmart}

\copyrightyear{2025}
\acmYear{2025}
\setcopyright{acmlicensed}\acmConference[WWW '25]{Proceedings of the ACM Web Conference 2025}{April 28-May 2, 2025}{Sydney, NSW, Australia}
\acmBooktitle{Proceedings of the ACM Web Conference 2025 (WWW '25), April 28-May 2, 2025, Sydney, NSW, Australia}
\acmDOI{10.1145/3696410.3714556}
\acmISBN{979-8-4007-1274-6/25/04}

\settopmatter{printacmref=true}

\AtBeginDocument{%
  }





\usepackage[utf8]{inputenc} 
\usepackage{url}            
\usepackage{amsfonts}       
\usepackage{nicefrac}       

\usepackage{bbm, comment, amsmath}

\usepackage{multirow}
\usepackage{wrapfig}
\usepackage[utf8]{inputenc}
\usepackage{subcaption}
\usepackage{xcolor}
\usepackage{enumitem}
\usepackage{url}
\usepackage[ruled]{algorithm2e} 

\usepackage{amsthm}
\usepackage{hyperref}     
\usepackage{cleveref}

\usepackage{natbib}

\usepackage{ifthen, kvoptions}
	



\usepackage{amsmath,amsfonts,bm}









\def\eqref#1{equation~\ref{#1}}









\def\1{\bm{1}}



\def\rx{{\textnormal{x}}}


\def\rvx{{\mathbf{X}}}





\def\vg{{\bm{g}}}

\def\vp{{\bm{p}}}

\def\vx{{\bm{x}}}
\def\vy{{\bm{y}}}



\DeclareMathAlphabet{\mathsfit}{\encodingdefault}{\sfdefault}{m}{sl}
\SetMathAlphabet{\mathsfit}{bold}{\encodingdefault}{\sfdefault}{bx}{n}










\newcommand{\E}{\mathbb{E}}



\newtheorem*{theorem*}{Theorem}
\newtheorem{theorem}{Theorem}[section]

\newtheorem{corollary}[theorem]{Corollary}
\newtheorem{example}[theorem]{Example}
\newtheorem{definition}[theorem]{Definition}







\begin{document}

\title{Mitigating the Participation Bias by Balancing Extreme Ratings}

\author{Yongkang Guo}
\email{yongkang_guo@pku.edu.cn}
\authornote{All authors are listed in alphabetical order.}
\affiliation{%
  \institution{School of Computer Science, Peking University}
  \city{Beijing}
  \country{China}
}

\author{Yuqing Kong}
\email{yuqing.kong@pku.edu.cn}
\authornote{Corresponding author}
\affiliation{%
  \institution{School of Computer Science, Peking University}
  \city{Beijing}
  \country{China}
}

\author{Jialiang Liu}
\email{2100012938@stu.pku.edu.cn}
\affiliation{%
  \institution{School of Electronics Engineering and Computer Science, Peking University}
  \city{Beijing}
  \country{China}
}

\begin{abstract}
  Rating aggregation plays a crucial role in various fields, such as product recommendations, hotel rankings, and teaching evaluations. However, traditional averaging methods can be affected by participation bias, where some raters do not participate in the rating process, leading to potential distortions. In this paper, we consider a robust rating aggregation task under the participation bias. We assume that raters may not reveal their ratings with a certain probability depending on their individual ratings, resulting in partially observed samples. Our goal is to minimize the expected squared loss between the aggregated ratings and the average of all underlying ratings (possibly unobserved) in the worst-case scenario.

  We focus on two settings based on whether the sample size (i.e. the number of raters) is known. In the first setting, where the sample size is known, we propose an aggregator, named as the Balanced Extremes Aggregator. It estimates unrevealed ratings with a balanced combination of extreme ratings. When the sample size is unknown, we derive another aggregator, the Polarizing-Averaging Aggregator, which becomes optimal as the sample size grows to infinity. Numerical results demonstrate the superiority of our proposed aggregators in mitigating participation bias, compared to simple averaging and the spectral method. Furthermore, we validate the effectiveness of our aggregators on a real-world dataset.
\end{abstract}

\begin{CCSXML}
<ccs2012>
   <concept>
       <concept_id>10003752.10010070.10010099.10010100</concept_id>
       <concept_desc>Theory of computation~Algorithmic game theory</concept_desc>
       <concept_significance>500</concept_significance>
       </concept>
   <concept>
       <concept_id>10003752.10010070.10010099.10010101</concept_id>
       <concept_desc>Theory of computation~Algorithmic mechanism design</concept_desc>
       <concept_significance>500</concept_significance>
       </concept>
 </ccs2012>
\end{CCSXML}

\ccsdesc[500]{Theory of computation~Algorithmic game theory}
\ccsdesc[500]{Theory of computation~Algorithmic mechanism design}
\keywords{Rating Aggregation, Participation Bias, Robust Aggregation}

\maketitle

\section{Introduction}

Suppose you come back from a hotel stay that left you frustrated. The service was terrible, and the noise kept you up all night. The experience was bad enough that, as soon as you got home, you felt compelled to go online and leave a negative review. This is not something you normally do—in fact, the last time you stayed at a hotel that was just okay, you did not even bother to rate it. 

Now, imagine you're on the other side, trying to decide between two hotels on an online platform. Both have identical average ratings, but as shown in \Cref{fig:hotel}, their rating distributions are quite different. Which one would you pick?

\begin{figure}
    \centering
    \includegraphics[width=1\linewidth]{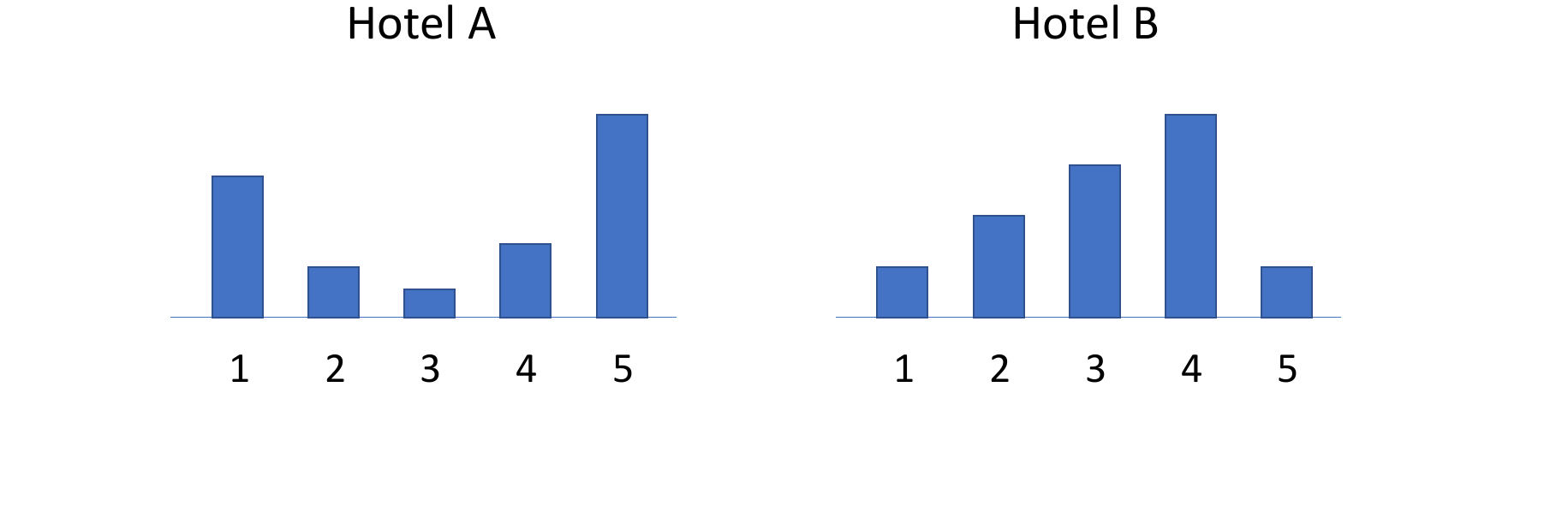}
    \caption{The observed ratings of two hotels.}
    \label{fig:hotel}
\end{figure}

While many people rely on average ratings to make their choices, these scores can be misleading due to biases in the way they are collected. One such bias is participation bias, where the ratings do not represent the views of all guests—e.g., only the ones who felt strongly enough to leave feedback \cite{bhole2017effectiveness,zhu2022bias}. 


At Hotel A, reviews are polarized: some guests, possibly those staying in the premium rooms, report excellent stays with 5-star ratings, while others leave very negative feedback. In contrast, Hotel B’s reviews are consistently around 4 stars, indicating generally positive but not outstanding experiences. This suggests that Hotel A's displayed rating might be inflated, as guests with neutral opinions may have skipped reviewing, whereas Hotel B’s balanced ratings offer a more accurate reflection. \Cref{fig:hotel2} shows a possible way of the true distribution.

\begin{figure}
    \centering
    \includegraphics[width=1\linewidth]{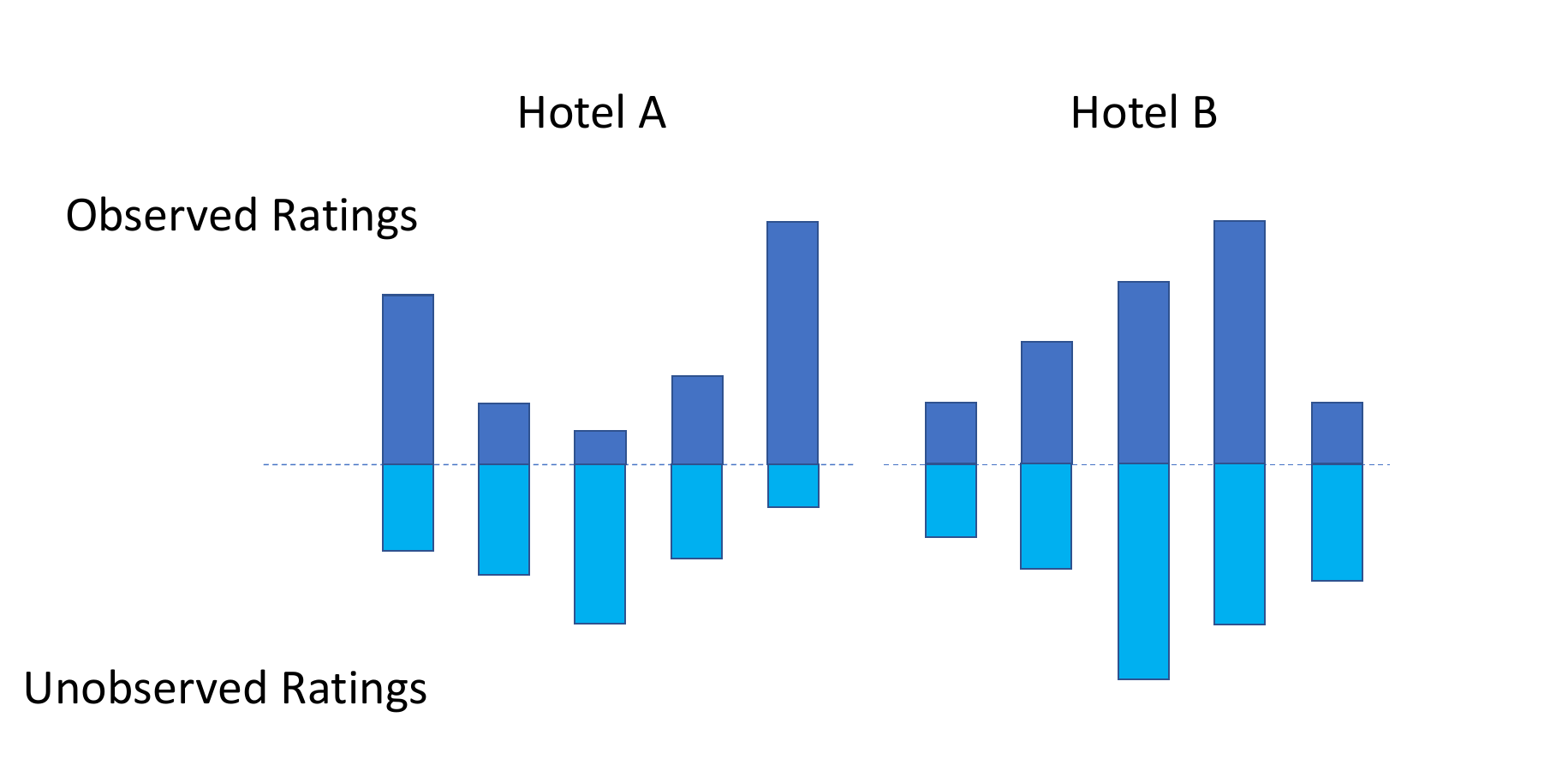}
    \caption{The true ratings of two hotels.}
    \label{fig:hotel2}
\end{figure}

Such participation bias affects other areas too, like course evaluations, movie reviews, and product ratings on e-commerce platforms, where only extreme opinions tend to be represented \cite{hu2009online,koh2010online,bhole2017effectiveness,zhu2022bias}.

If we could estimate the likelihood of participation across different rater groups, we could apply traditional methods like importance sampling to approximate the true average rater ratings \cite{tokdar2010importance,tabandeh2022review,vogel2020weighted}. However, without data from non-participants, quantifying the precise extent of this bias remains challenging. Therefore, we need to use alternative metrics to evaluate the aggregation methods without full knowledge of the participation bias. Following the field of robustness analysis \cite{arieli2018robust}, we employ a robust aggregation paradigm aimed at minimizing the worst-case error to mitigate the participation bias. 

\subsection{Problem Statement}
For all positive integer $k$, $[k]$ denotes the set $\{1,2,\cdots,k\}$.\ 

We aim to find a robust function $f$ to aggregate the discrete ratings from $n$ raters. Each rating $\rx_i\in [m]$ follows an underlying distribution $\vp=(p_1,p_2,\cdots,p_m)$ and we want to estimate the expectation $\mu=\E_{\rx\sim\vp}[\rx]$. We assume the ratings are independent. Otherwise, the correlations between ratings may make it impossible for any aggregator to recover the true expectation. Under this assumption, the best estimator for $\mu$ is the empirical average of all ratings $\frac{1}{n}\sum_i x_i$. We denote all the reports by $\rvx=(x_1,\cdots,x_n)$.

However, due to the participation bias, raters will report their ratings with probabilities that depend on the rating values. Let \(\vg = (g_1, g_2, \dots, g_m)\) represent these probabilities, where \(g_{r}\) is the probability of an rater reporting her rating when it is \(r\). We assume raters are homogeneous and have the same participation bias. As a result, only a subset of elements in \(\rvx\) is observed, which we denote by \(\hat{\rvx}\). For each rater \(i\):
\[
\hat{\rx}_i = 
\begin{cases}
{\rx}_i & \text{with probability } g_{{\rx}_i} \\
0 & \text{otherwise}
\end{cases}
\]
where \(0\) indicates that the rating is unobserved. Let $D_{\vp}$ denote the distribution generating $\rvx$ and $D_{\vp,\vg}$ denote the distribution generating $\hat{\rvx}$. We also call $D_{\vp,\vg}$ the information structure.

Without any information on the likelihood of reporting, the worst-case scenario would be no one reports, resulting in a lack of any useful aggregator. Thus, we assume there exists a known parameter $q$, indicating the lowest reporting probability among raters. That is, $g_r\in [q,1]$ for any rating $r$. When $q=1$, there is no participation bias and as $q$ decreases, the bias may increase accordingly. We use an example to illustrate our model.

\begin{example}[Why direct average is not good]
Consider a binary rating system where \( x_i \in \{1, 2\} \) with equal probability. Here, let \( g_1 = 1 \) and \( g_2 = q \), meaning that negative reviews (\( x_i = 1 \)) are always observed, while positive reviews (\( x_i = 2 \)) are observed only with probability \( q \).

Under this setup, even with a large sample size \( n \), the direct average of observed ratings will tend toward \( \frac{1+2q}{1+q} \), which significantly underestimates the true expectation of \( 3/2 \) when \( q \) is small.

\end{example}


We measure the performance of an aggregator by its expected squared error, defined as \(L(f,\vp,\vg)=\E_{\hat{\rvx} \sim D_{\vp, \vg}} \left[(f(\hat{\rvx}) - \mu)^2\right]\). One objective could be finding \(f\) that minimizes the worst-case error: $\max_{\vp,\vg}L(f,\vp,\vg)$. However, when the sample size $n$ is small, selecting $\vp$ with high variances can lead to large errors even for the ideal aggregator who observes \(\vx\). Since we focus on addressing the errors attributable to uncertainties in $\vg$, we adopt a robust approach and instead minimize the regret relative to the ideal aggregator who observes the full data \(\vx\), and outputs \(\frac{1}{n} \sum_i x_i\): $$\min_f\max_{\vp,\vg}\E_{\rvx\sim D_{\vp}, \hat{\rvx}\sim D_{\vp,\vg}}[(f(\hat{\rvx})-\mu)^2-(\frac{1}{n}\sum_i {\rx}_i-\mu)^2].$$


For short, we denote $R(f,\vp,\vg)=\E_{\rvx\sim D_{\vp}, \hat{\rvx}\sim D_{\vp,\vg}}[(f(\hat{\rvx})-\mu)^2-(\frac{1}{n}\sum_i {\rx}_i-\mu)^2]$.

Because we have assumed the raters are homogeneous, observing $\hat{\vx}$ is equivalent as observing the histograms $n_u, n_1,n_2,\cdots,n_m$ where $n_r$ counts the number of ratings of $r$ for all $r\in\{1,\cdots,m\}$, and $n_u$ indicates the number of unobserved ratings. That is, the aggregator's input is $n_u, n_1, n_2,\cdots,n_m$. This is applicable in the scenarios when the sample size $n$ is known. For example, in the teaching evaluation, the number of students is known to the instructors but not all students will give feedback.

\paragraph{A variant with unknown $n$} However, in practice, we may not know $n$. For example, in the movie rating platforms, we cannot obtain the number of people watching the movies. Thus, we consider a variant of the above problem where $n$ is unknown. In this case, the aggregation problem remains the same except that the aggregator's input is $n_1,n_2,\cdots,n_m$. 

\subsection{Summary of Results}

We explore the optimal aggregator in two settings, depending on the knowledge of the sample size $n$ (i.e., the number of raters).

\paragraph{The sample size $n$ is known} In \Cref{sec:known}, we assume $n$ is known. We construct a lower bound for the regret by considering a mixture of two specific information structures. Based on the lower bound, we provide a new aggregator, the Balanced Extreme Aggregator (BEA), which is the best response to those two specific information structures. Intuitively, BEA estimates the expected ratings of those unreported raters based on the difference between $n_1$ and $n_m$, the counts of extreme ratings. Then BEA adjusts the observed rating average with the estimated unobserved rating average. \Cref{fig:finite} illustrates the whole process. We illustrate its near-optimal performance numerically in various cases.

\paragraph{The sample size $n$ is unknown} In \Cref{sec:unknown}, we assume $n$ is unknown. Since we do not know how many unobserved ratings there are, the above aggregator is not applicable. Instead, we obtain a new aggregator, the Polarizing-Averaging Aggregator (PAA), and show its optimality when $n$ goes to infinite. Furthermore, we numerically show that PAA performs well for a finite sample size. \Cref{fig:infinite} illustrates the whole process. We create two modified histograms from the original observed data. For the first histogram, we identify a threshold $k_1$, and only keep $q$ fraction of the counts for ratings above $k_1$. In the second histogram, we identify a threshold $k_2$, and only keep $q$ fraction of the counts for ratings below $k_2$. We then calculate the empirical mean for each of these adjusted histograms and output their average. 

\Cref{sec:vir} shows the virtualization of our aggregators, which helps to establish some insights about the aggregators.
We validate our aggregators using both real-world and numerical datasets in \Cref{sec:exp}. The results demonstrate the superiority of our method.

\begin{figure}[h]
  \centering
  \includegraphics[width=0.5\textwidth,keepaspectratio]{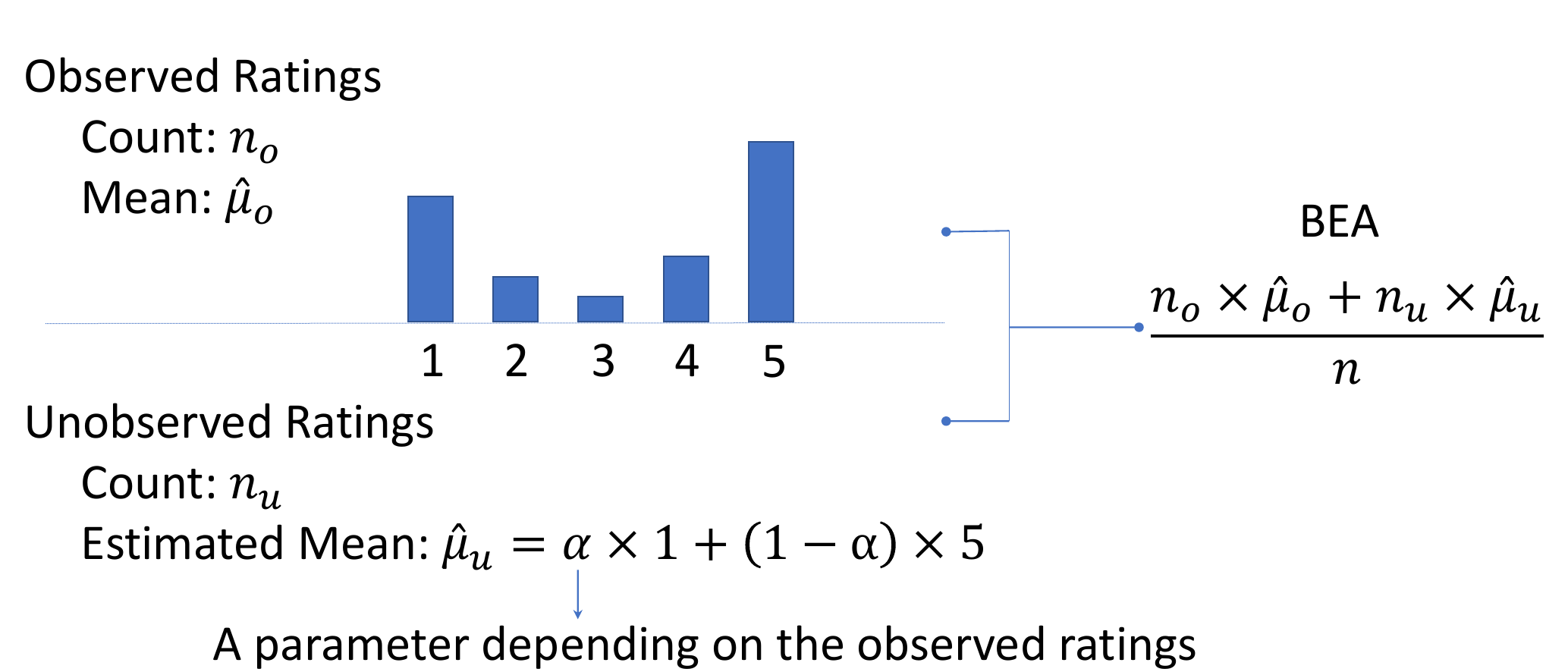}
  \caption{Illustration of the Balanced Extreme Aggregator (BEA) when the sample size $n$ is known. BEA estimates the expected ratings of those unreported raters based on the difference between $n_1$ and $n_m$, the counts of extreme ratings. Then BEA adjusts the observed rating average with the estimated unobserved rating average.}
  \label{fig:finite}
\end{figure}

\begin{figure}[h]
  \centering
  \includegraphics[width=0.47\textwidth,keepaspectratio]{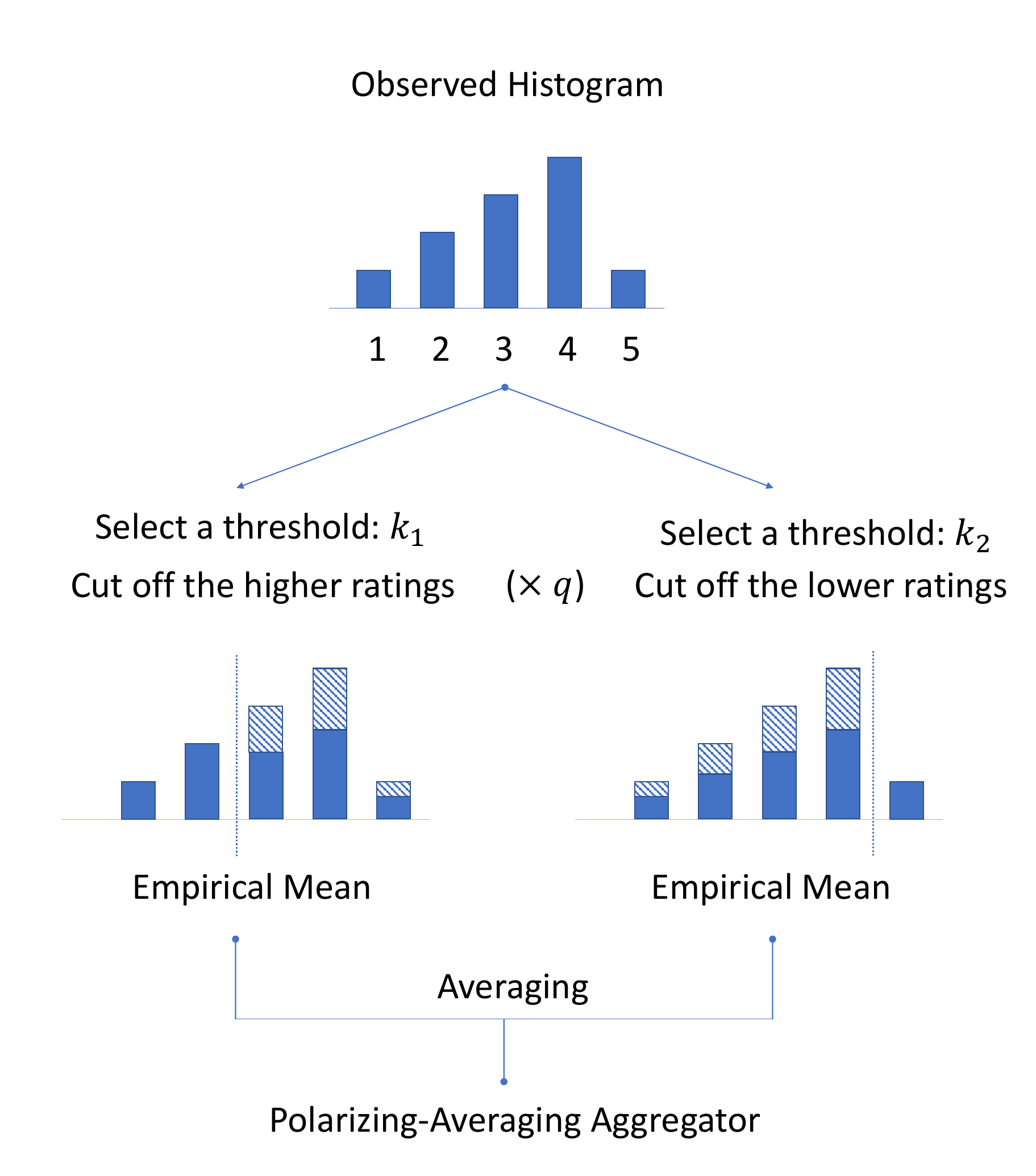}
  \caption{Illustration of the Polarizing-Averaging Aggregator (PAA) when the sample size $n$ is unknown. We create two modified histograms from the original observed data. For the first histogram, we identify a threshold $k_1$, and only keep $q$ fraction of the counts for ratings above $k_1$. In the second histogram, we identify a threshold $k_2$, and only keep $q$ fraction of the counts for ratings below $k_2$. We then calculate the empirical mean for each of these adjusted histograms and output their average. }
  \label{fig:infinite}
\end{figure}

\section{Related Literature}

\paragraph{Rating Aggregation}

Rating aggregation can be defined as the problem of integrating different ratings into a single representative value. Previous works aim to resist individual disturbance from random or malicious spammers \cite{chirita2005preventing,benevenuto2009detecting} by building a reputation
system \cite{resnick2000reputation,fujimura2003reputation,josang2007survey}. The basic idea of reputation system is weighted average where a rater's weight is measured by his reputation score which is calculated by using the rater-object weighed bipartite network. The quality-based methods measure a raters’s reputation by the difference between the rating values and the objects’ weighted average rating values  \cite{laureti2006information,zhou2011robust,liao2014ranking,zhu2023robust}. Unlike quality-based methods, raters are grouped based on their rating similarities and their reputation is calculated by the corresponding group sizes in group-based methods
\cite{gao2015group,gao2017evaluating,fu2021iterative}. Another branch of literature considers how to resist collusive disturbance where malicious raters simultaneously promote or demote the qualities of the targeted objects by giving consistent rating score \cite{mukherjee2012spotting,wang2016detecting,wang2018graph,zhang2020label,zhu2024robust}.

Our work focuses on the simple setting where there is a single object. Thus, we can not use the rater-object weighed bipartite network to learn raters' rating behavioral patterns. We primarily address participation bias, excluding other forms of malicious behavior. Lastly, we adopt an adversarial approach to evaluate the performance of an aggregation scheme.

\paragraph{Participation Bias}

Participation bias is a well-studied topic within the causal inference community which arises when participants disproportionately possess certain traits that affect participation \cite{elston2021participation}. It is a common source of error in clinical trials and survey studies \cite{dahrouge2019high,schoeler2023participation,schoeler2022correction,fakhouri2020investigation,coon2020evaluating}. 

When participation bias is only related to some observed (measured) variables and other representative samples are available (e.g. the whole population data from government), weighting 
strategies make it possible to create a pseudo-sample
representative for these variables \cite{robins1994estimation,schoeler2022correction} under the assumption that non-participants have equivalent behaviours to participants in the same socio-demographic category. An alternative way is multiple imputation (MI), which is viable when data are missing at random \cite{sterne2009multiple,peytchev2012multiple,alanya2015comparing,gray2013use,gorman2017adjustment}. If the data are thought to be
missing-not-at-random then methods involving sensitivity analyses like pattern mixture modelling \cite{little1993pattern} and NARFCS \cite{tompsett2018use} can be helpful. These methods can also be combined to correct participation bias \cite{gray2020correcting}.

Methods used to correct sample selection bias can also be used to correct participation bias \cite{verger2021online}. The first solution to sample selection bias was suggested by \citet{heckman1974shadow} who proposed a maximum likelihood estimator which heavily relies on the normality assumption. One way to relax normality while remaining in the maximum likelihood framework was suggested by \citet{lee1982some,lee1983generalized}. Two-Step estimators are frequently employed in empirical work including parametric models \cite{heckman1976common,heckman1979sample} and semi-parametric models \cite{heckman1985alternative,ahn1993semiparametric,lee1994semiparametric}. Recent works on sample selection models have aimed to address robust alternatives to the
Heckman model \cite{marchenko2012heckman, zhelonkin2016robust, de2022generalized}.

Our work significantly differs from the above literature in a few aspects. First, we are only interested in recovering the average rating rather than the causal effect. Second, we consider a setting where we only know the reported rating and the lower bound of participation rate instead of various socio-demographic data. Finally, we adopt a robust approach to evaluate the aggregation scheme.

\paragraph{Robust Aggregation}
\citet{arieli2018robust} first introduced a robust paradigm for forecast aggregation, aiming to minimize the aggregator's regret compared to an omniscient aggregator. Later, \citet{de2021robust} explored a robust absolute paradigm focused on minimizing the aggregator's own loss. Since then, a growing number of researches have addressed robust information aggregation under various information structures, including the projective substitutes condition~\citep{neyman2022you}, conditional independent setting~\citep{arieli2018robust}, and scenarios involving second-order information~\citep{pan2023robust}. Furthermore, \citet{guo2024algorithmic} provide an algorithmic approach for computing a near-optimal aggregator across general information structures. \citet{kong2024surprising} tackle robust aggregation in the context of base rate neglect, a specific bias in forecast aggregation.

We consider a different setting from the above literature. We focus on addressing participation bias in rating aggregation.

\section{Rating Aggregation with Known Sample Size}
\label{sec:known}
This section will discuss the setting with a known number of agents $n$. We provide the Balanced Extremes Aggregator (BEA), which performs near-optimal numerically in various cases. 


BEA estimates the expectation of the unobserved ratings, \(\hat{\mu}_u\), using a convex combination of extreme ratings, given by: $\hat{\mu}_u = \alpha \times 1 + (1-\alpha) \times m$ where \(\alpha\) is a parameter that depends on the observed ratings. Intuitively, \(\alpha\) increases with the number of observed ratings of 1. BEA then combines \(\hat{\mu}_u\) with the average observed ratings to produce a final aggregated result. 

\begin{definition}[Balanced Extremes Aggregator]
Given sample size $n$, the observed ratings $\hat{\vx}$, the aggregator's output is defined as
\(f^{BEA}(\hat{\vx})=\frac{n_o\hat{\mu}_o+n_u \hat{\mu}_u}{n}.\) Here $n_o=\sum_i \mathbbm{1}(\hat{x}_i\neq 0)$ is the number of observed ratings, $n_u=n-n_o$ is the number of unobserved ratings. $\hat{\mu}_o=\frac{\sum_i \hat{x}_i}{n_o}$ is the average of the observed ratings, $\hat{\mu}_u=\alpha\times 1+(1-\alpha)\times m$ is the estimation of the unobserved ratings' expectation. \(\alpha\) \footnote{$\alpha=\frac{(a^*q)^{n_1-n_m}}{(a^*q)^{n_1-n_m}+(1-a^*)^{n_1-n_m}}$. $a^*$ is the solution to
 $$\max_a \sum_{s,t} \binom{n}{t}\binom{n-t}{s}a^{n-t}(1-a)^tq^s(1-q)^{n-s-t}\left(\frac{(n-s-t)(m-1)(1-a)^{s-t}}{n((aq)^{s-t}+(1-a)^{s-t})}\right)^2$$.} depends on the difference between the count of ratings of 1 and the count of ratings of \(m\), denoted as \(n_1-n_m\), and the lower bound of the participation probability, denoted as $q$.
\end{definition}

The design of BEA is based on a specific mixture of information structures which helps to establish a lower bound for the regret. In particular, solving $\min_f\max_{\vp,\vg} R(f,\vp,\vg)$ is equivalent to solving a zero-sum game between nature who selects an information structure $(\vp,\vg)$, and a decision-maker who selects the aggregator $f$. We will present a specific mixed strategy of the nature, and use this mixed strategy to establish a lower bound for the regret. BEA is the best response to this mixed strategy. 

\begin{lemma}[Lower Bound of the Regret]\label{lem:lower}
    Consider a pair of information structures:  
    \begin{itemize}
        \item $\theta_1=(\vp_1=[a,0,\cdots,0,1-a],\vg_1=[q,1,\cdots, 1,1])$
        \item $\theta_2=(\vp_2=[1-a,0,\cdots,0,a],\vg_2=[1,1,\cdots, 1,q])$
    \end{itemize}
    where $q$ is the lower bound of the participation probability, and $a\in[0,1]$ is a parameter. For any aggregator $f$, the regret has a lower bound
    \begin{footnotesize}
        \begin{align*}
            &\max_{\vp,\vg} 
            R(f,\vp,\vg)\\
            &\ge\max(R(f,\vp_1,\vg_1),R(f,\vp_2,\vg_2))\\
            &\ge\sum_{s,t} \binom{n}{t}\binom{n-t}{s}a^{n-t}(1-a)^tq^s(1-q)^{n-s-t}\left(\frac{(n-s-t)(m-1)(1-a)^{s-t}}{n((aq)^{s-t}+(1-a)^{s-t})}\right)^2.
        \end{align*}
    \end{footnotesize}
\end{lemma}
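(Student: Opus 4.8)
The plan is to exploit the standard minimax $\geq$ maximin inequality: since $\min_f \max_{\vp,\vg} R(f,\vp,\vg)$ is the value of a zero-sum game, any fixed mixed strategy of nature gives a lower bound on the value, and hence a lower bound that every aggregator $f$ must incur against at least one structure in the support. Concretely, I would first note the trivial bound $\max_{\vp,\vg} R(f,\vp,\vg) \ge \max\big(R(f,\vp_1,\vg_1), R(f,\vp_2,\vg_2)\big) \ge \tfrac12\big(R(f,\vp_1,\vg_1)+R(f,\vp_2,\vg_2)\big)$, which reduces the problem to lower-bounding the \emph{average} regret against the two symmetric structures $\theta_1,\theta_2$. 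The key structural observation is that $\theta_1$ and $\theta_2$ are mirror images under the relabeling $r \mapsto m+1-r$ of ratings (which swaps $n_1 \leftrightarrow n_m$ and $p_1 \leftrightarrow p_m$), so the two structures are information-theoretically indistinguishable to the aggregator except through the sign/magnitude of $n_1 - n_m$; this symmetry is what forces a nonzero error.

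Next I would set up the observation model explicitly for these structures. Under $\theta_1$, every rating is either $1$ (with prob.\ $a$) or $m$ (with prob.\ $1-a$); ratings equal to $1$ are hidden with prob.\ $1-q$, while ratings equal to $m$ are always revealed. So if among the $n$ raters there are $n-t$ underlying $1$'s and $t$ underlying $m$'s, and among those $n-t$ ones exactly $s$ are revealed, then the observed histogram is $(n_1,n_m,n_u) = (s, t, n-s-t)$, and this occurs with probability $\binom{n}{t}\binom{n-t}{s} a^{n-t}(1-a)^t q^s (1-q)^{n-s-t}$ — matching the summand's prefactor. The true mean is $\mu = a\cdot 1 + (1-a)\cdot m = m - a(m-1)$, and the ideal aggregator's loss subtracted off is $\big(\tfrac1n\sum x_i - \mu\big)^2$. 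The crucial point is that, conditioned on the observed histogram being $(n_1,n_m,n_u)$, structures $\theta_1$ and $\theta_2$ generate it with a Bayes posterior ratio proportional to $(aq)^{n_1-n_m} : (1-a)^{n_1-n_m}$ (the revealed $m$'s under $\theta_1$ cost a factor $1-a$ each, the revealed $1$'s cost $aq$ each, and symmetrically for $\theta_2$). Therefore, against the uniform prior on $\{\theta_1,\theta_2\}$, the Bayes-optimal estimate of the unobserved contribution is the posterior-weighted average of the two candidate unobserved means, which is exactly $\hat\mu_u = \alpha\cdot 1 + (1-\alpha)\cdot m$ with $\alpha = \tfrac{(a^*q)^{n_1-n_m}}{(a^*q)^{n_1-n_m}+(1-a^*)^{n_1-n_m}}$ — this is precisely why BEA is defined the way it is.

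The remaining work is to compute the residual Bayes risk. I would condition on the event that the true structure is $\theta_1$ and the "complementary" histogram pattern arises: with $n-s-t$ unobserved raters under $\theta_1$, their true average is $1$, but any symmetric estimator that cannot distinguish $\theta_1$ from $\theta_2$ must hedge, and the irreducible squared error per such configuration works out to $\Big(\tfrac{(n-s-t)(m-1)(1-a)^{s-t}}{n((aq)^{s-t}+(1-a)^{s-t})}\Big)^2$: the factor $(n-s-t)/n$ is the weight of the unobserved block in the final average, $(m-1)$ is the gap $m-1$ between the two extreme means, and $\tfrac{(1-a)^{s-t}}{(aq)^{s-t}+(1-a)^{s-t}}$ is the posterior probability mass on the "wrong" hypothesis $\theta_2$ given the observation (here $s-t = n_1-n_m$). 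Summing over all $(s,t)$ weighted by the generation probabilities under $\theta_1$ gives the stated bound; by the mirror symmetry the same sum equals $R$ against $\theta_2$ for the optimal $f$, and any suboptimal $f$ only does worse, so the average — hence the max — is at least this quantity. The main obstacle I anticipate is the bookkeeping in the conditional-error computation: one must carefully separate the contribution of the observed block (which $f$ can match essentially exactly, so it contributes nothing to the \emph{regret} relative to the ideal aggregator on those coordinates) from the unobserved block, and verify that the cross terms with $(\tfrac1n\sum x_i - \mu)^2$ vanish in expectation — this orthogonality is what lets the ideal aggregator's loss cancel cleanly and leaves only the posterior-variance term.
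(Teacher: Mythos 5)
Your plan is the same as the paper's appendix proof: lower-bound the max by the uniform mixture of $\theta_1,\theta_2$, compute the posterior odds $(aq)^{n_1-n_m}:(1-a)^{n_1-n_m}$ from the likelihoods $\binom{n}{t}\binom{n-t}{s}a^{n-t}(1-a)^tq^s(1-q)^{n-s-t}$, take as best response the aggregator that completes the observed histogram with the posterior-weighted mixture of $1$ and $m$ (exactly BEA's $\hat\mu_u$), and read the stated summand as (probability of $(s,t)$ under $\theta_1$) times (hidden-block weight $\tfrac{n-s-t}{n}$, times gap $m-1$, times posterior mass on the wrong hypothesis)$^2$. All of these computations agree with the paper.

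The gap is the step you defer to ``bookkeeping,'' namely the claim that the cross terms with $(\tfrac1n\sum_i x_i-\mu)^2$ vanish so that the ideal aggregator's loss cancels and only the posterior-variance term survives. That orthogonality does not hold in this model: conditional on $\hat{\rvx}$, the ideal average $\bar x=\tfrac1n\sum_i x_i$ is a \emph{deterministic} function of which hypothesis generated the data (under $\theta_1$ every hidden rating is a $1$, under $\theta_2$ every hidden rating is an $m$), so $\bar x-\mu$ is not conditionally mean-zero and is correlated with $f(\hat{\rvx})-\bar x$; the term $\E[(f-\bar x)(\bar x-\mu)]$ survives. Relatedly, the only $f$-dependent part of the regret is $\E[(f-\mu)^2]$, whose Bayes act against the mixture is $\E[\mu\mid\hat{\rvx}]$, not the histogram completion $\E[\bar x\mid\hat{\rvx}]$ that you (and BEA) use, so ``any suboptimal $f$ only does worse'' cannot be invoked for your $f'$ without further argument. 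A concrete check that the cancellation cannot be established in general: at $a=\tfrac12$ both structures have $\mu=\tfrac{m+1}{2}$, and the constant aggregator $f\equiv\tfrac{m+1}{2}$ gives $R(f,\theta_1)=R(f,\theta_2)=-\E[(\bar x-\mu)^2]<0$, while the displayed sum is strictly positive whenever $q<1$; hence the cross term is genuinely of the same order as the main term and must be handled (or the minimization redone with the correct Bayes act and the resulting risk related to the stated sum). To be fair, this is precisely the point the paper's own proof compresses into ``by simple calculation,'' so you have reproduced the published argument including its weakest step — but as a standalone proof, the orthogonality assertion is the part that fails as written.
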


\paragraph{Proof Sketch}
We provide a proof sketch here, and the complete proof is deferred to \Cref{sec:apx}.

Consider a mixture of these two information structures: $\theta=\theta_1$ or $\theta_2$ with equal probability. Then given the observed ratings $\hat{\rvx}$, we can compute the posterior
$\Pr[\theta=\theta_1|\hat{\rvx}]$. The best predictor for the expectation of unobserved ratings will be the conditional expectation $\E[\rx|\hat{\rx}=0]$ given the information structure is $\theta$. Then we can calculate the regret of the best predictor to obtain our lower bound.

By enumerating all possible $a$, we obtain a corollary 

\begin{corollary}\label{cor:lower}
For any aggregator $f$, 
\begin{footnotesize}
\begin{align*}
    &\max_{\vp,\vg}R(f,\vp,\vg)\\
    \ge&\max_a \sum_{s,t} \binom{n}{t}\binom{n-t}{s}a^{n-t}(1-a)^tq^s(1-q)^{n-s-t}\left(\frac{(n-s-t)(m-1)(1-a)^{s-t}}{n((aq)^{s-t}+(1-a)^{s-t})}\right)^2
\end{align*}
\end{footnotesize}
\end{corollary}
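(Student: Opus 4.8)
The plan is to obtain \Cref{cor:lower} as an immediate strengthening of \Cref{lem:lower}. The key observation is that the parameter $a$ appearing in \Cref{lem:lower} is entirely free: for every $a \in [0,1]$ the pair of information structures $\theta_1,\theta_2$ is legitimate — $\vp_1$ and $\vp_2$ are genuine distributions on $[m]$ and $\vg_1,\vg_2$ lie in $[q,1]^m$ — so the lemma yields a valid inequality $\max_{\vp,\vg} R(f,\vp,\vg) \ge \Phi(a)$, where $\Phi(a)$ abbreviates the double sum displayed in the lemma. Since the left-hand side is independent of $a$, I would fix an arbitrary aggregator $f$, instantiate \Cref{lem:lower} once for each $a \in [0,1]$, and then take the pointwise supremum over $a$ of the resulting family of lower bounds, giving $\max_{\vp,\vg} R(f,\vp,\vg) \ge \sup_{a\in[0,1]} \Phi(a)$, which is the content of the corollary.

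The only thing left to check is that the supremum is actually attained, so that writing $\max_a$ is justified and the maximizer $a^*$ (the constant used in the definition of BEA's weight $\alpha$) is well defined. For this I would verify that $\Phi$ extends to a continuous function on the compact interval $[0,1]$. Each summand is a product of the Bernoulli-type factor $a^{n-t}(1-a)^t q^s(1-q)^{n-s-t}$ — a polynomial in $a$ — with $\bigl(\tfrac{(n-s-t)(m-1)(1-a)^{s-t}}{n((aq)^{s-t}+(1-a)^{s-t})}\bigr)^2$. When $s \ge t$ the bracketed quantity is itself a polynomial in $a$; when $s < t$, clearing the negative powers rewrites it as $\tfrac{(n-s-t)(m-1)(aq)^{t-s}}{n((aq)^{t-s}+(1-a)^{t-s})}$, which is continuous and bounded by $\tfrac{(n-s-t)(m-1)}{n}$ on all of $[0,1]$ (the denominator never vanishes since $q>0$). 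Hence $\Phi$ is continuous on $[0,1]$, attains its maximum there, and $\sup$ may be replaced by $\max$.

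I do not expect any real obstacle: the corollary is essentially a one-line consequence of the lemma, and all the substantive work — constructing the $\theta_1/\theta_2$ mixture, computing the posterior $\Pr[\theta=\theta_1\mid\hat{\rvx}]$, and evaluating the regret of the conditional-expectation predictor — is already carried out in the proof of \Cref{lem:lower}. The mildest point requiring attention is the bookkeeping for summands with negative exponent $s-t$ and the endpoint behavior of $\Phi$, both of which are routine.
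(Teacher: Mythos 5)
Your proposal is correct and matches the paper's own treatment: the paper derives \Cref{cor:lower} from \Cref{lem:lower} simply "by enumerating all possible $a$," i.e., instantiating the lemma for each $a$ and taking the maximum, exactly as you do, with your continuity/attainment check being a harmless extra refinement. (One trivial slip: for $s>t$ the bracketed factor is a rational function of $a$, not a polynomial, but since its denominator $(aq)^{s-t}+(1-a)^{s-t}$ never vanishes on $[0,1]$ for $q>0$, your continuity conclusion stands.)
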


Let \begin{scriptsize}
    $a^*=\arg\max_a \sum_{s,t} \binom{n}{t}\binom{n-t}{s}a^{n-t}(1-a)^tq^s(1-q)^{n-s-t}\left(\frac{(n-s-t)(m-1)(1-a)^{s-t}}{n((aq)^{s-t}+(1-a)^{s-t})}\right)^2$,
\end{scriptsize} then according to our proof of \Cref{lem:lower}, BEA is the best response to $\theta_1(a^*),\theta_2(a^*)$. That is, for BEA,

\begin{footnotesize}

\begin{align*}
    &R(f,\theta_1(a^*))=R(f,\theta_2(a^*))\\
    =&\max_a \sum_{s,t}\binom{n}{t}\binom{n-t}{s}a^{n-t}(1-a)^tq^s(1-q)^{n-s-t}\left(\frac{(n-s-t)(m-1)(1-\mu)}{n}\right)^2
\end{align*}
\end{footnotesize}

Based on \Cref{cor:lower}, BEA outperforms any other aggregator in our lower bound situation. Regarding general situations, we evaluate the performance of BEA numerically using Matlab. \Cref{fig:regret} shows the results, where the grey line is the theoretical lower bound given by \Cref{cor:lower}, the red line is BEA and the blue line is the simple averaging. BEA is almost optimal for a wide range of the parameter $q$, and outperforms simple averaging especially when $q$ is small. When $q$ is close to $1$, indicating there is almost no bias, BEA is not optimal. We will discuss more details in \Cref{sec:fig}.

\begin{figure}[htbp]
    \centering
    \begin{subfigure}[t]{0.45\columnwidth}
        \centering
        \includegraphics[width=\textwidth]{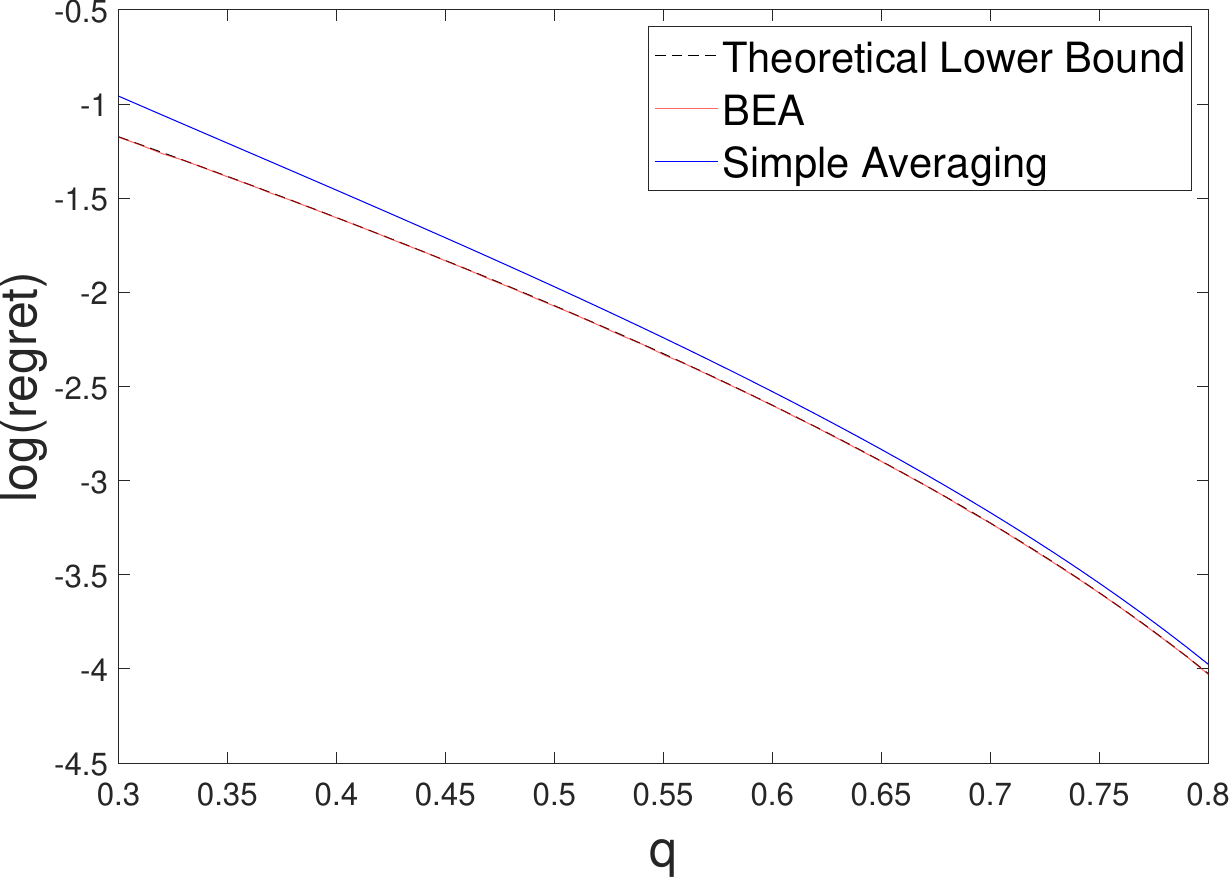}
        \caption{$n=20$, $m=3$}
    \end{subfigure}
    \hfill
    \begin{subfigure}[t]{0.45\columnwidth}
        \centering
        \includegraphics[width=\textwidth]{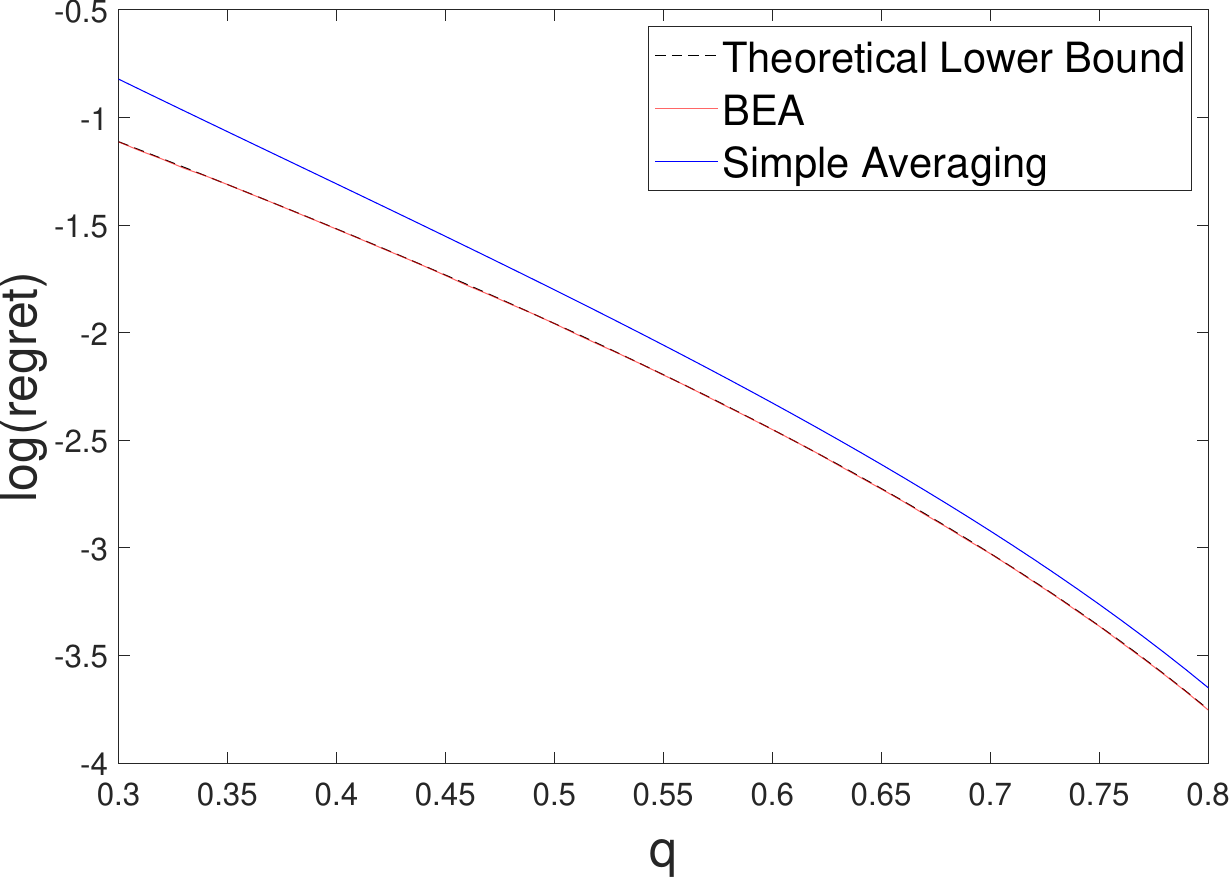}
        \caption{$n=10$, $m=3$}
    \end{subfigure}
    
    \vspace{0.01\textheight} 
    
    \begin{subfigure}[t]{0.45\columnwidth}
        \centering
        \includegraphics[width=\textwidth]{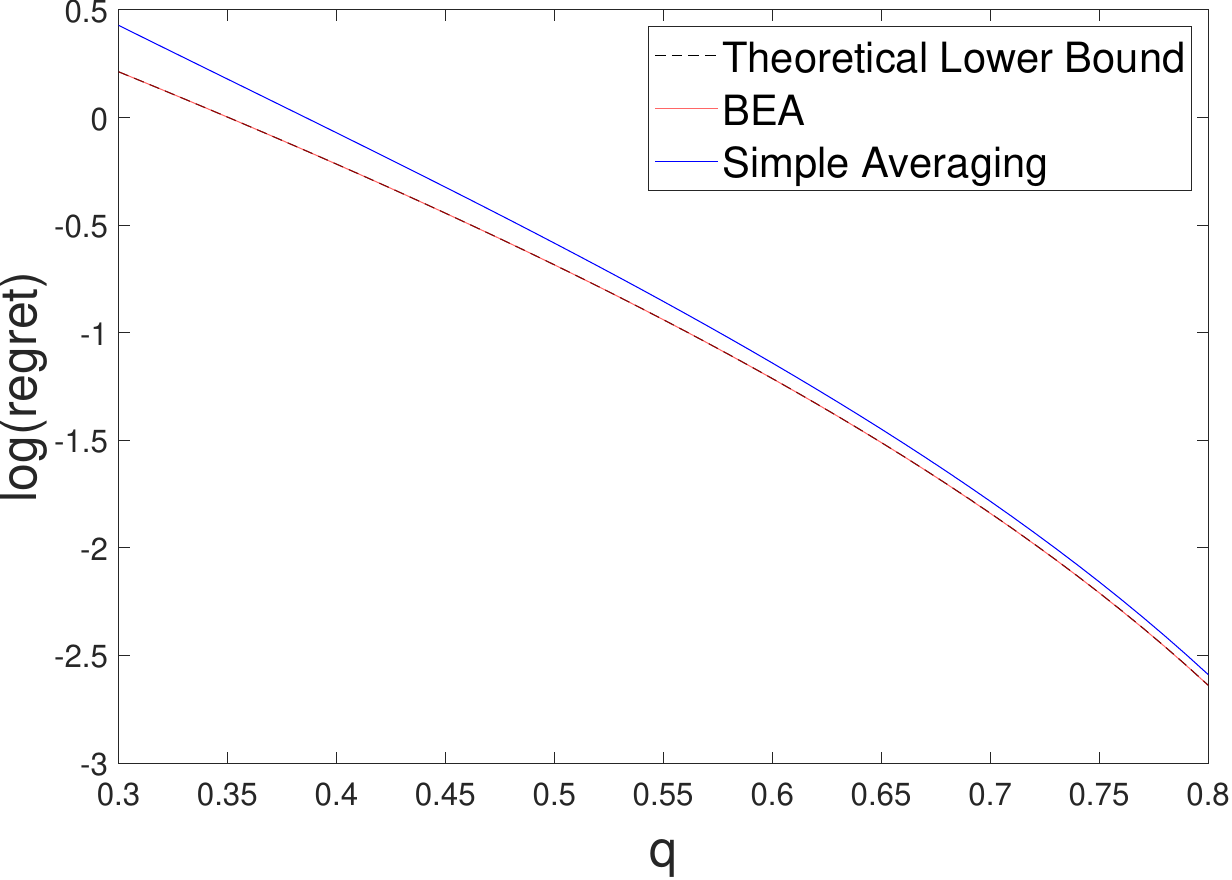}
        \caption{$n=20$, $m=5$}
    \end{subfigure}
    \hfill
    \begin{subfigure}[t]{0.45\columnwidth}
        \centering
        \includegraphics[width=\textwidth]{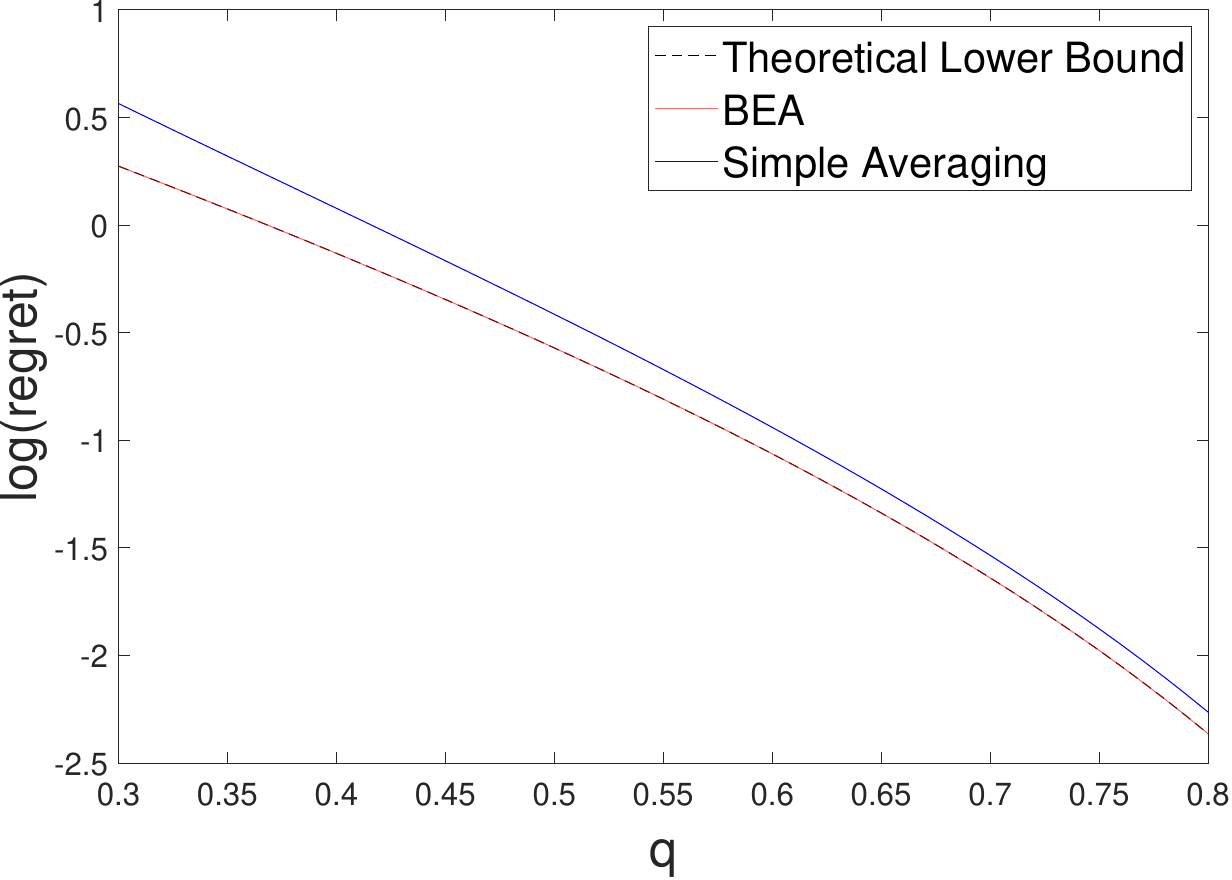}
        \caption{$n=10$, $m=5$}
    \end{subfigure}
    
    \caption{Simple averaging vs. BEA for different sample size $n$ and the number of rating categories $m$. The x-axis is the lower bound of the participation probability, $q$, and the y-axis is the natural logarithm of the regret. The regret of BEA almost matches the theoretical lower bound for a wide range of $q$.}
    \label{fig:regret}
\end{figure}

\section{Rating Aggregation with Unknown Sample Size}
\label{sec:unknown}

This section will discuss the case where the number of agents $n$ is unknown. Fortunately, when $n\to\infty$, meaning that we have a large number of agents, we can obtain a closed-form of the optimal aggregator, which is called the Polarizing-Averaging Aggregator.

When $n$ is unknown, the aggregator only has access to the observed histogram $n_1,n_2,\cdots,n_m$. PAA outputs the average of the empirical mean of two modified histograms. For the first histogram, we identify a threshold $k_1$, and only keep $q$ fraction of the counts for ratings above $k_1$. In the second histogram, we identify a threshold $k_2$, and only keep $q$ fraction of the counts for ratings below $k_2$. 

Notice that in the described process, only the empirical mean of histograms is relevant. Therefore, we can use the normalized histograms, specifically the empirical distribution, $\hat{p}_r=\frac{n_r}{\sum_j n_j},\forall r\in[m]$, as input.


\begin{definition}[Polarizing-Averaging Aggregator]\label{def:paa}
   Define the Polarizing-Averaging Aggregator $f^{PAA}(\hat{\vp})=\frac{u(\hat{\vp})+l(\hat{\vp})}{2}$,
    where         $$l(\hat{\vp})=\frac{\sum_{r=1}^{k_1(\hat{\vp})}r\times \hat{p}_r+q\sum_{r=k_1(\hat{\vp})+1}^{m}r\times\hat{p}_r}{\sum_{r=1}^{k_1(\hat{\vp})}\hat{p}_r+q\sum_{r=k_1(\hat{\vp})+1}^{m}\hat{p}_r} \footnote{$$k_1(\hat{\vp})=\arg\max_{1\leq k\leq m}\sum_{r=1}^{k-1}(r-k)\hat{p}_r+q\sum_{r=k+1}^{m}(r-k)\hat{p}_r\geq 0$$},$$
    $$u(\hat{\vp})=\frac{q\sum_{r=1}^{k_2(\hat{\vp})}r\times \hat{p}_r+\sum_{r=k_2(\hat{\vp})+1}^{m}r\times \hat{p}_r}{q\sum_{r=1}^{k_2(\hat{\vp})}\hat{p}_r+\sum_{r=k_2(\hat{\vp})+1}^{m}\hat{p}_r}\footnote{$$k_2(\hat{\vp})=\arg\max_{1\leq k\leq m}q \sum_{r=1}^{k-1}(r-k)\hat{p}_r+\sum_{r=k+1}^{m}(r-k)\hat{p}_r\geq 0.$$}.$$
    
\end{definition}

We will show that $l(\hat{\vp})$ is the lower bound of the true expectation $\mu$ given the empirical distribution $\hat{\vp}$ and the lower bound of the participation probability $q$, and $u(\hat{\vp})$ is the upper bound. 

\subsection{Analysis of PAA in the asymptotic case}

This section will analyze PAA in the asymptotic case. We first show that in the asymptotic case, PAA is optimal. We will then present the regret and the worst information structure of PAA in the asymptotic case.

\subsubsection{Optimality of PAA}

\begin{theorem}\label{thm:PAA}
    When $n\to\infty$, PAA is optimal.
\end{theorem}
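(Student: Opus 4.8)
The plan is to exhibit a matching upper and lower bound on the asymptotic regret. First I would establish that, given the empirical distribution $\hat{\vp}$ (which, by the law of large numbers, converges almost surely to the true \emph{observed} distribution $\vp^{obs}$ with $p^{obs}_r \propto g_r p_r$ as $n\to\infty$), the quantities $l(\hat{\vp})$ and $u(\hat{\vp})$ are precisely the infimum and supremum of the true mean $\mu$ over all information structures $(\vp,\vg)$ consistent with the observed distribution and the constraint $\vg\in[q,1]^m$. To see this, note that for a fixed observed distribution, reconstructing $\vp$ amounts to choosing participation probabilities $g_r\in[q,1]$ and setting $p_r \propto p^{obs}_r/g_r$; minimizing (resp.\ maximizing) the resulting $\mu$ is a fractional/linear-type optimization whose solution puts $g_r=q$ on one tail of the ratings and $g_r=1$ on the other, with the threshold $k_1$ (resp.\ $k_2$) characterized exactly by the $\arg\max$ condition in the footnotes. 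I would verify the threshold characterization by a simple exchange/monotonicity argument: moving the threshold by one step changes the ratio-mean in a direction controlled by the sign of $\sum_{r<k}(r-k)\hat p_r + q\sum_{r>k}(r-k)\hat p_r$.

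Next, for the lower bound on regret: since the adversary may pick either the $\mu$-minimizing or the $\mu$-maximizing structure and these are asymptotically indistinguishable from the aggregator's viewpoint (both induce the same observed distribution in the limit), any aggregator outputting a value $v$ incurs worst-case asymptotic loss at least $\max\big((v-l)^2,(v-u)^2\big)$, which is minimized at $v=\frac{l+u}{2}$ giving regret at least $\big(\frac{u-l}{2}\big)^2$. Here I use that the ideal benchmark $\frac1n\sum_i x_i \to \mu$ almost surely, so the $(\frac1n\sum_i x_i - \mu)^2$ term vanishes in the limit and the regret reduces to the squared bias $(f-\mu)^2$. For the upper bound: PAA outputs exactly $\frac{l(\hat{\vp})+u(\hat{\vp})}{2}$, and since $\mu\in[l(\hat{\vp}),u(\hat{\vp})]$ for the true structure (by the first step, as $\hat{\vp}\to\vp^{obs}$), its loss is at most $\big(\frac{u(\hat{\vp})-l(\hat{\vp})}{2}\big)^2$, matching the lower bound in the limit. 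Combining the two gives asymptotic optimality.

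I would organize the write-up as: (i) a lemma that $l(\hat{\vp})=\min\{\mu : (\vp,\vg)\text{ consistent with }\hat{\vp},\ \vg\ge q\}$ and $u(\hat{\vp})$ the corresponding max, with the threshold formulas; (ii) a continuity/consistency remark that $\hat{\vp}\to\vp^{obs}$ a.s.\ and $l,u$ are continuous in $\hat{\vp}$, so $l(\hat{\vp})\to l(\vp^{obs})\le\mu\le u(\vp^{obs})\leftarrow u(\hat{\vp})$; (iii) the minimax argument above. The main obstacle I anticipate is step (i)—proving that the optimal reconstruction is a threshold (bang-bang) policy in $\vg$ and that the threshold is exactly the stated $\arg\max$. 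The subtlety is that $\mu = \frac{\sum_r r\, p^{obs}_r/g_r}{\sum_r p^{obs}_r/g_r}$ is a ratio, so it is not linear in the $1/g_r$; I would handle this by fixing the denominator (equivalently, parametrizing by the total rescaling mass) or by a direct perturbation argument showing that at any optimum each $g_r$ is at an endpoint except possibly one, and then identifying which endpoint via the ordering of ratings. A secondary point to be careful about is interchanging limits and the $\max_{\vp,\vg}$ (the adversary's choice may depend on $n$); I would address this by noting the regret expression is continuous and the feasible set compact, so the worst-case over structures converges to the worst-case in the limiting problem.
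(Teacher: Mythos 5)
Your proposal is correct and follows essentially the same route as the paper's proof: reduce the asymptotic regret to the loss given the observed distribution $\hat{\vp}\propto\vp\circ\vg$, show the optimal output is the midpoint $(l^*+u^*)/2$ with worst-case loss $\bigl(\frac{u^*-l^*}{2}\bigr)^2$, and identify $l^*,u^*$ with $l(\hat{\vp}),u(\hat{\vp})$ via a bang-bang/threshold characterization of the extremal $\vg$ (your perturbation-and-exchange argument is the same in substance as the paper's sign analysis of $\partial F/\partial g_r$ and its single-crossing threshold condition). No gaps worth flagging.
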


\paragraph{Proof Sketch}
We provide a proof sketch here, the complete proof is deferred to \Cref{sec:apx}. 

We first notice that when $n\to\infty$, the regret is the loss. When $n\to\infty$, the ideal aggregator who observes full ratings knows the true distribution $\vp$. Therefore, the regret of $f$ to the ideal aggregator equals the loss $\max_{\vp,\vg}(f(\hat{\vp})-\E_{\rx\sim\vp}[\rx])^2$. The problem becomes solving $\min_f\max_{\vp,\vg}(f(\hat{\vp})-\E_{\rx\sim\vp}[\rx])^2$ conditional on knowing the empirical distribution of the observed ratings $\hat{\vp}$. 

Furthermore, when $n\to\infty$, $\hat{\vp}$ is propositional to the element-wise product\footnote{For any vectors $\vx,\vy$, their element-wise product is $\vx\circ\vy=(x_1y_1,\cdots,x_ny_n).$} of the true distribution $\vp$ and the participation probabilities $\vg$: $\hat{\vp}\propto \vp\circ\vg$. 



The proof is divided into the following steps. 

\paragraph{Optimal aggregator is the midpoint of extremes} Given $\hat{\vp}$, define the lower bound of the expectation of $\vp$ which satisfy the propositional constraint $l^*(\hat{\vp})=\min_{\vp,\vg:\hat{\vp}\propto \vp\circ\vg}\E_{\rx\sim\vp}[\rx]$ and the upper bound $u^*(\hat{\vp})=\max_{\vp,\vg:\hat{\vp}\propto \vp\circ\vg}\E_{\rx\sim\vp}[\rx]$. 

Since $(f(\hat{\vp})-\E_{\rx\sim\vp}[\rx])^2$ measures the squared distance to $\E_{\rx\sim\vp}[\rx]$, the best aggregator must be the midpoint between the extreme values to minimize this squared distance. That is, the best $f$ has the format of $(u^*(\hat{\vp})+l^*(\hat{\vp}))/2$.

It's left to show that $l^*(\hat{\vp})=l(\hat{\vp})$ and $u^*(\hat{\vp})=u(\hat{\vp})$ where $l(\hat{\vp})$ and $u(\hat{\vp})$ are described in the definition of PAA (\Cref{def:paa}).

\paragraph{Minimize or maximize $\E_{\rx\sim\vp}[\rx]$ given $\hat{\vp}$} Given $\hat{\vp}$, we start to find $(\vp',\vg')$ to minimize or maximize $\E_{\rx\sim\vp}[\rx]$, conditional on $\hat{\vp}\propto \vp\circ\vg$. We first characterize $\vg'$. Here are two properties of the optimal $\vg'$. 
\begin{itemize}
    \item $\vg'$ is extreme: $g_r=1$ or $q$ for any rating $r$.
    \item $\vg'$ is monotonic: $g_r\le g_{r+1}$ or $g_r\ge g_{r+1}$ for any rating $r$.
\end{itemize}

Here is how we derive the above properties. Given $\hat{\vp}$, because $\hat{\vp}\propto \vp\circ\vg$, $\E_{\rx\sim\vp}[\rx]$ can be viewed as a function, denoted as $F$, of $\vg$. By calculating the partial derivative of $F$, we observe the sign of $\frac{dF}{dg_r}$ is independent with $g_r$ as long as $g_r$ is positive. The sign of $\frac{dF}{dg_r}$ can be determined by $F(\vg)-r$.

Using these two properties of $\vg'$, $\vg'$ can be constructed as follows: for the minimum, there exists an index $k_1(\hat{\vp})$ such that $g_r=q$ for any $r\le k_1(\hat{\vp})$ and $g_r=1$ otherwise. While for the maximum, there exists an index $k_2(\hat{\vp})$ such that $g_r=1$ for any $r\le k_2(\hat{\vp})$ and $g_r=q$ otherwise.

To find the optimal $k_1(\hat{\vp}),k_2(\hat{\vp})$, we can simply enumerate all the $m$ values, but we make a more careful analysis to give the closed-form of the optimal $k_1(\hat{\vp}),k_2(\hat{\vp})$ in the appendix. 

This finishes the proof sketch of the optimality of PAA in the asymptotic case. 

\begin{figure}[h]
  \centering
  \includegraphics[width=0.3\textwidth,keepaspectratio]{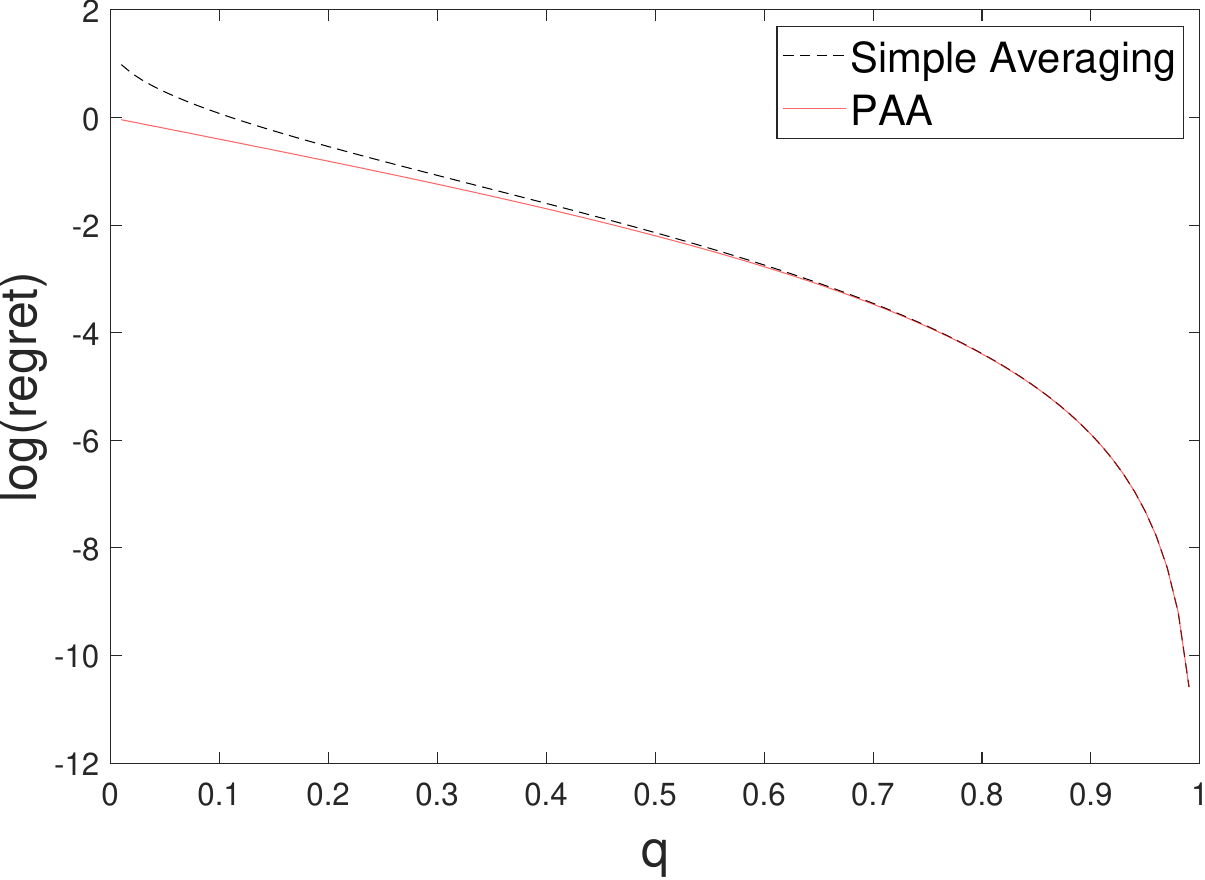}
  \caption{Simple averaging vs. PAA for a specific case where the number of rating categories is $m=3$: The x-axis is the lower bound of the participation probability, $q$, and the y-axis is the natural logarithm of the regret.}
  \label{fig:PAA}
\end{figure}

\subsubsection{Regret and the worst information structure of PAA}

We also obtain the closed forms of regret and the corresponding worst information structure of PAA.

\begin{proposition}
    \label{prop:PAA}
    When $n\to\infty$, the regret of PAA is $\left(\frac{(m-1)(1-q)}{2(1+q)}\right)^2$. In addition, the corresponding worst pair of information structure is 
    \begin{itemize}
        \item $\theta_1=(\vp_1=[\frac{1}{q+1},0,\cdots,0,\frac{q}{q+1}],\vg_1=[q,1,\cdots,1,1])$
        \item $\theta_2=(\vp_2=[\frac{q}{q+1},0,\cdots,0,\frac{1}{q+1}],\vg_2=[1,1,\cdots, 1,q])$
    \end{itemize}
    
    That is, $$R(f^{PAA},\theta_1)=R(f^{PAA},\theta_2)=\left(\frac{(m-1)(1-q)}{2(1+q)}\right)^2.$$
\end{proposition}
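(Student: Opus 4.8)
The plan is to combine the reduction already worked out in the proof of \Cref{thm:PAA} with a sharp bound on the length of the ``confusion interval'' $[l(\hat\vp),u(\hat\vp)]$, and then exhibit $\theta_1,\theta_2$ as the two information structures that realise its endpoints for the single worst observable. \textbf{Step 1 (reduction).} When $n\to\infty$ the regret equals the loss, and by the analysis in \Cref{thm:PAA} the thresholded quantities $l(\hat\vp)$ and $u(\hat\vp)$ from \Cref{def:paa} are exactly the $\min$ and $\max$ of $\E_{\rx\sim\vp}[\rx]$ over all $(\vp,\vg)$ with $\hat\vp\propto\vp\circ\vg$, $\vg\in[q,1]^m$. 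Since $\vp$ is pinned down by $\vg$ on $\operatorname{supp}(\hat\vp)$, this feasible set of means is the continuous image of a connected set, hence the whole interval $[l(\hat\vp),u(\hat\vp)]$, and $f^{PAA}(\hat\vp)$ is its midpoint. Therefore the adversary's best response against PAA for a fixed observable $\hat\vp$ produces squared error $\big(\tfrac{u(\hat\vp)-l(\hat\vp)}{2}\big)^2$, attained at an endpoint, so
\[
\max_{\vp,\vg}R(f^{PAA},\vp,\vg)=\sup_{\hat\vp\in\Delta([m])}\Big(\frac{u(\hat\vp)-l(\hat\vp)}{2}\Big)^{2}.
\]

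\textbf{Step 2 (upper bound $u(\hat\vp)-l(\hat\vp)\le\frac{(m-1)(1-q)}{1+q}$).} Rewrite $\hat\vp\propto\vp\circ\vg$ as $p_r\propto\hat p_r w_r$ with $w_r=1/g_r$; since $\E_{\rx\sim\vp}[\rx]$ is scale-invariant in $w$, the admissible $\vp$ are exactly the reweightings $\nu$ of $\hat\vp$ with $\max_r(\nu_r/\hat p_r)\le\frac1q\min_r(\nu_r/\hat p_r)$. Let $\nu^u,\nu^l$ be reweightings attaining $u(\hat\vp),l(\hat\vp)$; writing $\nu^u-\nu^l$ as its positive part minus its negative part, each of total mass $d:=d_{\mathrm{TV}}(\nu^u,\nu^l)$, and bounding the relocated mass by its extreme possible positions ($r=m$ on the positive side, $r=1$ on the negative side) gives $u(\hat\vp)-l(\hat\vp)\le(m-1)\,d$. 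It remains to show $d\le\frac{1-q}{1+q}$ for any two admissible reweightings of a common $\hat\vp$. After normalising $\sum_r\hat p_r w_r=\sum_r\hat p_r w'_r=1$, the function $d_{\mathrm{TV}}(\nu,\nu')=\tfrac12\sum_r\hat p_r|w_r-w'_r|$ is convex on a polytope, hence maximised at a vertex, where $w$ and $w'$ are each two-valued with values in ratio $1/q$; a short exchange argument then forces the two ``high'' sets to be complementary, reducing $d$ to the explicit one-variable function
\[
d(p)=(1-q)(1+q)\cdot\frac{p(1-p)}{\big(q+(1-q)p\big)\big(1-(1-q)p\big)},
\]
which is symmetric about $p=\tfrac12$ and maximised there with value $\tfrac{1-q}{1+q}$. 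Hence $\max_{\vp,\vg}R(f^{PAA},\vp,\vg)\le\big(\tfrac{(m-1)(1-q)}{2(1+q)}\big)^{2}$.

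\textbf{Step 3 (tightness at $\theta_1,\theta_2$).} Both structures yield the same observable: $\vp_1\circ\vg_1=\vp_2\circ\vg_2=\big[\tfrac{q}{q+1},0,\dots,0,\tfrac{q}{q+1}\big]$, so $\hat\vp=[\tfrac12,0,\dots,0,\tfrac12]$. A direct computation gives $\E_{\rx\sim\vp_1}[\rx]=\tfrac{1+mq}{1+q}$ and $\E_{\rx\sim\vp_2}[\rx]=\tfrac{q+m}{1+q}$, and one checks these equal $l(\hat\vp)$ and $u(\hat\vp)$ respectively (take $g_1=q,g_m=1$, resp.\ $g_1=1,g_m=q$, which maximise the mass placed on $1$, resp.\ on $m$). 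Consequently $f^{PAA}(\hat\vp)=\tfrac{u(\hat\vp)+l(\hat\vp)}{2}=\tfrac{m+1}{2}$, and
\[
R(f^{PAA},\theta_1)=R(f^{PAA},\theta_2)=\Big(\frac{m+1}{2}-\frac{1+mq}{1+q}\Big)^{2}=\Big(\frac{(m-1)(1-q)}{2(1+q)}\Big)^{2},
\]
matching the bound of Step 2; this proves the claimed regret value and that $(\theta_1,\theta_2)$ is a worst-case pair.

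\textbf{Main obstacle.} The crux is Step 2 — showing that the total-variation distance between two admissible reweightings of $\hat\vp$ is largest for the balanced, two-valued, opposite-threshold configuration, so that $u(\hat\vp)-l(\hat\vp)\le\frac{(m-1)(1-q)}{1+q}$ holds uniformly over $\hat\vp$. This can be done either via the extreme-point/exchange argument sketched above, or by substituting the closed-form thresholds $k_1(\hat\vp),k_2(\hat\vp)$ of \Cref{def:paa} into $u(\hat\vp)-l(\hat\vp)$ and optimising over $\hat\vp$ directly; the former appears cleaner. Step 1 is inherited from \Cref{thm:PAA} and Step 3 is routine arithmetic.
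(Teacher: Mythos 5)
Your proposal is correct and arrives at the same constants, but it takes a genuinely different route from the paper's proof. The paper fixes $f^{PAA}$ and characterizes the worst observable distribution directly: it iteratively concentrates mass (using the thresholds $k_1(\hat\vp),k_2(\hat\vp)$) to show the maximizer of $u(\hat\vp)-l(\hat\vp)$ has at most three, then exactly two, non-zero entries, then solves a one-variable optimization to get $\hat\vp^*=(\tfrac12,0,\dots,0,\tfrac12)$, and finally reads off $\theta_1,\theta_2$. You instead prove a uniform upper bound $u(\hat\vp)-l(\hat\vp)\le (m-1)\,d_{\mathrm{TV}}(\nu^u,\nu^l)\le (m-1)\tfrac{1-q}{1+q}$ valid for every $\hat\vp$ and then match it with the explicit pair $\theta_1,\theta_2$; this bypasses the paper's structural case analysis and is shorter, though unlike the paper it does not identify the worst observable $\hat\vp^*$ (which the proposition does not require). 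The only place you merely sketch --- that the total-variation distance between two admissible reweightings of a common $\hat\vp$ is at most $\tfrac{1-q}{1+q}$ --- is indeed the load-bearing step, but your vertex/exchange plan is sound and can even be replaced by a two-line event-wise bound: for any set $A$ of ratings with $a=\hat\vp(A)$, every admissible $\vp$ satisfies $\tfrac{qa}{qa+1-a}\le \vp(A)\le \tfrac{a}{a+q(1-a)}$ (monotonicity of $N/(N+M)$ in $N=\sum_{r\in A}\hat p_r/g_r$ and $M=\sum_{r\notin A}\hat p_r/g_r$), so $d_{\mathrm{TV}}(\nu^u,\nu^l)\le\max_{a\in[0,1]}\bigl(\tfrac{a}{a+q(1-a)}-\tfrac{qa}{qa+1-a}\bigr)=\tfrac{1-q}{1+q}$, the maximum being attained at $a=\tfrac12$ exactly as in your single-variable function $d(p)$. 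Your Step 1 coincides with the paper's reduction (midpoint aggregator, loss $\bigl(\tfrac{u-l}{2}\bigr)^2$), and your Step 3 is the same final arithmetic as the paper's fourth step, so with the TV bound completed the argument is a valid, and arguably cleaner, alternative proof.
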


\paragraph{Proof Sketch}

Fix the optimal aggregator PAA, we aim to find the worst information structure $(\vp^*,\vg^*)$ that maximizes the loss/regret $(f^{PAA}(\hat{\vp})-\E_{\rx\sim\vp}[\rx])^2$. 

We have proved that the optimal aggregator is the midpoint of the extremes $l(\hat{\vp})$ and $u(\hat{\vp})$. The maximal loss is $\left(u(\hat{\vp})-l(\hat{\vp})\right)^2/4$. To obtain the maximal loss, we aim to find $\hat{\vp}^*$ to maximize $u(\hat{\vp})-l(\hat{\vp})$. We analyze the maximizer in the following four steps.

\begin{itemize}
    \item We first show that $\hat{\vp}^*$ has at most three non-zero entries: $|\{r:\hat{p}_r>0\}|\le 3$.
    \item We further show that $\hat{\vp}^*$ has exactly two non-zero entries: $|\{r:\hat{p}_r^*>0\}|= 2$.
    \item We then obtain the concrete value: $\hat{\vp}^*=(\frac{1}{2},0,\cdots,0,\frac{1}{2})$.
    \item Finally, we construct two worst information structures $$\theta_1=(\vp_1=[\frac{1}{q+1},0,\cdots,0,\frac{q}{q+1}],\vg_1=[q,1,\cdots,1,1]),$$
    $$\theta_2=(\vp_2=[\frac{q}{q+1},0,\cdots,0,\frac{1}{q+1}],\vg_2=[1,1,\cdots, 1,q]).$$ In particular, $\theta_1$ is the maximizer of $\E_{\rx\sim\vp}[\rx]$ given $\hat{\vp}=\hat{\vp}^*$. $\theta_2$ is the minimizer of $\E_{\rx\sim\vp}[\rx]$ given $\hat{\vp}=\hat{\vp}^*$. 
\end{itemize}

\textbf{First Step:} For any empirical distribution $\hat{\vp}$, recall that the participation probabilities in the maximizer of $\E_{\rx\sim\vp}[\rx]$ given $\hat{\vp}$ has the format that $g_r=1$ for any $r\le k_1(\hat{\vp})$ and $g_r=q$ otherwise. The participation probabilities in the minimizer of $\E_{\rx\sim\vp}[\rx]$ given $\hat{\vp}$ has the format that $g_r=q$ for any $r\le k_2(\hat{\vp})$ and $g_r=1$ otherwise. 

The aim is to maximize the gap $u(\hat{\vp})-l(\hat{\vp})$. We first divide $\hat{\vp}$ into three components by $k_1(\hat{\vp})\leq k_2(\hat{\vp})$. Then we prove that by alternately concentrating all probabilities in the first component at rating $1$ and all probabilities in the third component at rating $m$, we can achieve a new distribution with a non-decreasing gap. Note that in the new distribution, the threshold will also change. We repeat this concentration step until the distribution stabilizes at $\hat{\vp}'$, which has the following form: $(\hat{p}_1',0,\cdots,0,\hat{p}_{k_1(\hat{\vp}')+1}',\cdots,\hat{p}_{k_2(\hat{\vp}')}',0,\cdots,0,\hat{p}_m')$. At last we concentrate all probabilities in the second component of $\hat{\vp}'$ at one of rating $(k_1(\hat{\vp}')+1)$ and rating $k_2(\hat{\vp}')$ to obtain a final distribution with a non-decreasing gap and at most three non-zero entries.

\textbf{Second Step:} Now we have proved that $\hat{\vp}^*$ has at most three non-zero entries at position $1,k\in(1,m),m$. We further find that we can redistribute the density at the middle position $k$ to one of the end positions $1,m$ will not decrease the gap $u(\hat{\vp})-l(\hat{\vp})$. Thus, we show that $\hat{\vp}^*$ has exactly two non-zero entries: $|\{r:\hat{p}_r^*>0\}|= 2$.


\textbf{Third Step:} Given the simple format of $\hat{\vp}^*$, we obtain its value by differentiation.

\textbf{Fourth Step:} Finally, we calculate the worst information structures $(\vp^*,\vg^*)$ given $\hat{\vp}^*$ by computing the maximizer and the minimizer of $\E_{\rx\sim\vp}[\rx]$ given $\hat{\vp}=\hat{\vp}^*$. 

\subsection{Analysis of PAA in the finite case}

PAA is also applicable for the finite case. Though PAA is not optimal when $n$ is finite, we prove that it is near-optimal, with the error bound depending on the sample size $n$, and the number of rating categories $m$.

\begin{theorem}\label{thm:finite}
When $m=o\left((\frac{n}{\ln n})^{\frac{1}{4}}\right)$, PAA is $O\left(m^2\sqrt{\frac{\ln n}{n}}\right)$-optimal. That is, for any aggregator $f$,
$$\max_{\vp,\vg} R(f^{PAA},\vp,\vg)\le \max_{\vp,\vg} R(f,\vp,\vg)+O\left(m^2\sqrt{\frac{\ln n}{n}}\right).$$ 
\end{theorem}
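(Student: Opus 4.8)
The plan is to sandwich the finite‑sample minimax regret between $V:=\big(\tfrac{(m-1)(1-q)}{2(1+q)}\big)^2$ (the asymptotic value of \Cref{prop:PAA}) and $V+O\!\big(m^2\sqrt{\ln n/n}\big)$. The lower side, $\min_f\max_{\vp,\vg}R(f,\vp,\vg)\ge V$, follows from \Cref{cor:lower} with $a=\tfrac1{1+q}$: then $aq=1-a$, the bracketed factor collapses to $\tfrac12$, the bound becomes $\E\big[\big(\tfrac{(m-1)\,n_u}{2n}\big)^2\big]$ with $n_u\sim\mathrm{Binomial}\!\big(n,\tfrac{1-q}{1+q}\big)$, and its second moment equals $V+\Theta(m^2/n)\ge V$. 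Since $\min_f\max R(f)\le\max R(f)$ for every $f$, it remains to prove $\max_{\vp,\vg}R(f^{PAA},\vp,\vg)\le V+O\!\big(m^2\sqrt{\ln n/n}\big)$. Fix $(\vp,\vg)$; let $\rvx=(\rx_1,\dots,\rx_n)$ be the true ratings, $\bar x=\tfrac1n\sum_i\rx_i$, let $\tilde{\vp}$ be the histogram of $\rvx$, and let $\tilde{\vp}^{\circ}:=\tfrac{\tilde{\vp}\circ\vg}{\langle\tilde{\vp},\vg\rangle}$ be the expected observed distribution given $\rvx$. Because $(\tilde{\vp},\vg)$ is exactly consistent with $\tilde{\vp}^{\circ}$, the characterization from the proof of \Cref{thm:PAA} (that $l,u$ are the minimum and maximum of $\E_{\rx\sim\vp'}[\rx]$ over pairs consistent with the empirical distribution) gives $l(\tilde{\vp}^{\circ})\le\bar x\le u(\tilde{\vp}^{\circ})$; together with the universal gap bound $u(\cdot)-l(\cdot)\le\tfrac{(m-1)(1-q)}{1+q}=2\sqrt V$ (established, for the extremal $\hat\vp$, inside the proof of \Cref{prop:PAA}) this forces $|f^{PAA}(\tilde{\vp}^{\circ})-\bar x|\le\sqrt V$ \emph{deterministically}. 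Expanding $(f^{PAA}(\hat\vp)-\mu)^2$ around $\bar x$ and cancelling the ideal aggregator's error,
\[
R(f^{PAA},\vp,\vg)=\E\big[(f^{PAA}(\hat\vp)-\bar x)^2\big]+2\,\E\big[(f^{PAA}(\hat\vp)-\bar x)(\bar x-\mu)\big],
\]
and I condition on $\rvx$ throughout, so that $\hat\vp$ is the empirical distribution of the observed subset and concentrates at $\tilde{\vp}^{\circ}$.

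For the first term, $\E[(f^{PAA}(\hat\vp)-\bar x)^2\mid\rvx]=\Var(f^{PAA}(\hat\vp)\mid\rvx)+(\E[f^{PAA}(\hat\vp)\mid\rvx]-\bar x)^2$. The variance term uses Lipschitzness plus concentration: $l=l^*$ and $u=u^*$ are minima/maxima of the linear‑fractional maps $\vg'\mapsto\tfrac{\sum_r r\,\hat p_r/g'_r}{\sum_r\hat p_r/g'_r}$, whose $\hat\vp$‑gradients have entries $\tfrac{r-l}{g'_r\sum_s\hat p_s/g'_s}$ of absolute value $\le\tfrac{m-1}{q}$, so $f^{PAA}$ is $O(m/q)$‑Lipschitz; a union‑of‑Hoeffding bound (with $n_o\ge nq/2$ except with probability $e^{-\Omega(n)}$, and the trivial bound $(m-1)^2$ on the super‑polynomially rare complement) yields $\Var(f^{PAA}(\hat\vp)\mid\rvx)=O(m^4\ln n/n)$, which is $O(m^2\sqrt{\ln n/n})$ precisely when $m=o\!\big((n/\ln n)^{1/4}\big)$.

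The remaining piece, $(\E[f^{PAA}(\hat\vp)\mid\rvx]-\bar x)^2$, is the crux. Write it as $\big(\beta+(f^{PAA}(\tilde{\vp}^{\circ})-\bar x)\big)^2\le(\sqrt V+|\beta|)^2=V+2\sqrt V\,|\beta|+\beta^2$ with $\beta:=\E[f^{PAA}(\hat\vp)\mid\rvx]-f^{PAA}(\tilde{\vp}^{\circ})$ the estimation bias; since $\sqrt V=\Theta(m)$, I must show $|\beta|=O(m\sqrt{\ln n/n})$ (a plain Lipschitz/Jensen estimate only gives $O(m^{3/2}/\sqrt n)$, too weak by a factor $\sqrt{m/\ln n}$). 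The improvement exploits the structure proved in \Cref{thm:PAA}: the participation profile achieving $l(\hat\vp)$ (resp.\ $u(\hat\vp)$) is a \emph{threshold} profile, so $l(\hat\vp)=\min_{0\le k\le m}h_k^{-}(\hat\vp)$ and $u(\hat\vp)=\max_{0\le k\le m}h_k^{+}(\hat\vp)$ for only $m+1$ smooth functions $h_k^{\pm}$. By Jensen $\E[l(\hat\vp)\mid\rvx]\le\min_k\E[h_k^{-}(\hat\vp)\mid\rvx]$, and a second‑order Taylor expansion of each $h_k^{-}$ at $\tilde{\vp}^{\circ}$ (Hessian entries $O(m/q^2)$, paired with $\Tr\Cov(\hat\vp\mid\rvx)=O(1/(nq))$) gives $\min_k\E[h_k^{-}(\hat\vp)\mid\rvx]\le l(\tilde{\vp}^{\circ})+O(m/n)$; in the other direction $l(\hat\vp)\ge\min_k\E[h_k^{-}(\hat\vp)\mid\rvx]-\max_k\big|h_k^{-}(\hat\vp)-\E[h_k^{-}(\hat\vp)\mid\rvx]\big|$, and each $h_k^{-}(\hat\vp)$ is an $O\!\big(m/(qn)\big)$‑bounded‑difference function of the $n$ independent ``observed?'' indicators, hence sub‑Gaussian with proxy $O(m^2/n)$, so a maximal inequality over these $m+1$ variables bounds the expected discrepancy by $O\!\big(\tfrac m{\sqrt n}\sqrt{\ln m}\big)=O(m\sqrt{\ln n/n})$. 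Thus $\E[l(\hat\vp)\mid\rvx]=l(\tilde{\vp}^{\circ})\pm O(m\sqrt{\ln n/n})$, symmetrically for $u$, so $|\beta|=O(m\sqrt{\ln n/n})$ and $2\sqrt V\,|\beta|+\beta^2=O(m^2\sqrt{\ln n/n})$.

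For the cross term, conditioning on $\rvx$ gives $2(\bar x-\mu)\big((f^{PAA}(\tilde{\vp}^{\circ})-\bar x)+\beta\big)$; the $\beta$‑contribution is at most $2\,\E|\bar x-\mu|\cdot O(m\sqrt{\ln n/n})\le\tfrac{m-1}{\sqrt n}\cdot O(m\sqrt{\ln n/n})=O(m^2\sqrt{\ln n}/n)$, and the other contribution equals $2\Cov\!\big(\bar x,f^{PAA}(\tilde{\vp}^{\circ})\big)-2\Var(\bar x)\le 2\sqrt{\Var(\bar x)\,\Var\!\big(f^{PAA}(\tilde{\vp}^{\circ})\big)}+2\Var(\bar x)=O(m^{5/2}/n)$, using $\Var(\bar x)=O(m^2/n)$ and $\Var\!\big(f^{PAA}(\tilde{\vp}^{\circ})\big)=O(m^3/n)$ (Lipschitzness of $f^{PAA}$ at the fluctuating $\tilde{\vp}^{\circ}$). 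All error terms are $O(m^2\sqrt{\ln n/n})$ under the hypothesis, so $\max_{\vp,\vg}R(f^{PAA},\vp,\vg)\le V+O(m^2\sqrt{\ln n/n})$ and the sandwich closes. \textbf{The main obstacle} is the estimation‑bias bound $|\beta|=O(m\sqrt{\ln n/n})$: this is exactly where the final rate acquires its $m^2$ (rather than $m^{5/2}$) and its $\sqrt{\ln n}$, and it genuinely needs $f^{PAA}$ to be a minimum/maximum of only $O(m)$ smooth pieces — the structural dividend of the optimality proof of PAA — together with a sub‑Gaussian maximal inequality; the rest is bookkeeping to verify that every error term is $o\!\big(m^2\sqrt{\ln n/n}\big)$ precisely under $m=o\!\big((n/\ln n)^{1/4}\big)$, plus uniform handling of the rare bad concentration events.
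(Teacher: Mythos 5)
Your proposal is correct in outline and reaches the theorem by a genuinely different route than the paper's proof. The paper argues pointwise in $(\vp,\vg)$: by Chernoff and union bounds, except on an event of probability $O(1/n)$ the observed empirical distribution is within $O(\sqrt{\ln n/n})$ per coordinate of $\vp\circ\vg/\langle\vp,\vg\rangle$ and the full-sample mean is close to $\mu$, so $R_n(f^{PAA},\vp,\vg)\le R_\infty(f^{PAA},\vp,\vg)+O\bigl(m^2\sqrt{\ln n/n}\bigr)$; it then uses \Cref{thm:PAA} and \Cref{prop:PAA} for the asymptotic worst case and the claim that the optimal regret is non-increasing in $n$ to compare with any finite-$n$ aggregator. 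You instead obtain the lower bound directly from \Cref{cor:lower} at $a=\frac{1}{1+q}$ (a nice observation: the bound collapses to $\E[((m-1)n_u/(2n))^2]\ge V$), which sidesteps the paper's somewhat delicate monotonicity-in-$n$ argument and applies verbatim to aggregators that know $n$; for the upper bound you condition on the realized ratings $\rvx$, exploit the deterministic sandwich $l(\tilde{\vp}^{\circ})\le\bar x\le u(\tilde{\vp}^{\circ})$ together with the universal gap bound $u-l\le\frac{(m-1)(1-q)}{1+q}$ extracted from the proof of \Cref{prop:PAA}, and control the estimation bias of $l,u$ via their min/max-of-$(m{+}1)$-linear-fractional structure and a sub-Gaussian maximal inequality. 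What each buys: the paper's proof is far shorter, but as written it identifies the finite-sample regret with $\E[(f-\bar x)^2]$ (dropping the cross term $2\E[(f-\bar x)(\bar x-\mu)]$), and its key step $\bigl(B+O(m^2\sqrt{\ln n/n})\bigr)^2=B^2+O(m^2\sqrt{\ln n/n})$ in fact yields an $O(m^3\sqrt{\ln n/n})$ cross term when $|B|=\Theta(m)$; your finer bias bound $|\beta|=O(m\sqrt{\ln n/n})$ and explicit treatment of both cross terms are precisely what secure the stated $m^2$ rate rigorously, at the price of heavier machinery (bounded differences with the random observed count $n_o$, ratio-estimator Taylor bias) whose remaining technicalities you rightly flag as routine. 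A few intermediate constants in your sketch are slightly off (the Taylor remainder is $O(m^2/n)$ rather than $O(m/n)$; $\Var\bigl(f^{PAA}(\tilde{\vp}^{\circ})\bigr)=O(m^2/n)$ by bounded differences rather than $O(m^3/n)$), but every such term stays $o\bigl(m^2\sqrt{\ln n/n}\bigr)$ under $m=o\bigl((n/\ln n)^{1/4}\bigr)$, so the conclusion is unaffected.
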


\paragraph{Proof Sketch}
The input to \( f^{PAA} \) is the empirical distribution of observed ratings. In the asymptotic scenario, this corresponds to \( \vp \circ \vg \), and we have demonstrated that \( f^{PAA} \) is optimal under these conditions. However, in finite sample situations, the empirical distribution is only an approximate version of \( \vp \circ \vg \). Consequently, the performance of \( f^{PAA} \) hinges on the quality of this approximation. We will further analyze the performance by employing the Chernoff bound and union bound to assess the approximation's accuracy.

Furthermore, when $n$ is finite, unlike the asymptotic case, the regret is not the loss. When $n$ is different, the ideal aggregator is also different. The analysis should also consider this.

\section{Virtualization of Aggregators}
\label{sec:vir}

\paragraph{Visualization of BEA} \Cref{fig:BEA_heatmap} and \Cref{fig:BEA_value} visualize BEA for a specific case where the sample size $n=20$, and the rating values are \([m]=[2]=\{1, 2\}\). Compared to simple averaging, BEA tends to be more conservative (i.e. close to $\frac{m+1}{2}=3/2$). Note as the lower bound of the participation probability $q$ becomes larger, BEA becomes more conservative.

\begin{figure}[h]
  \centering
  \begin{subfigure}[b]{0.23\textwidth}
    \centering
    \includegraphics[width=\textwidth,keepaspectratio]{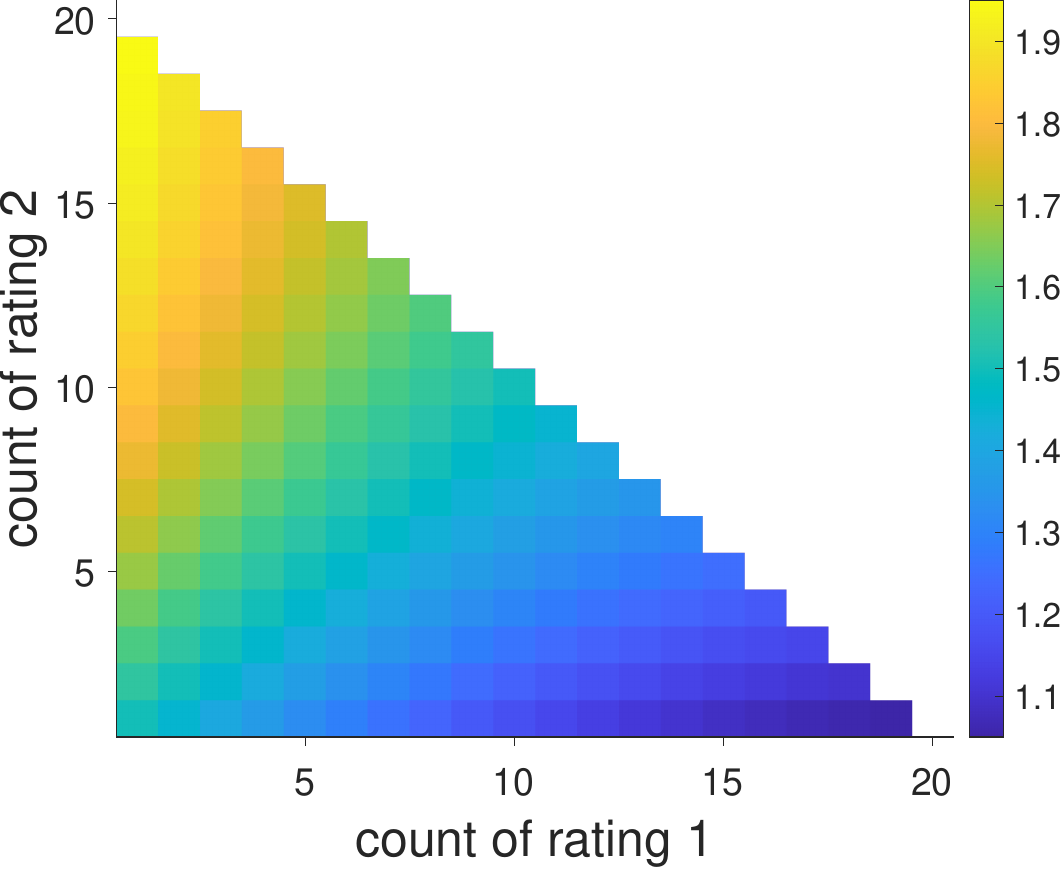}
    \caption{$q=0.1$}
  \end{subfigure}
  \hspace{0.005\textwidth}
  \begin{subfigure}[b]{0.23\textwidth}
    \centering
    \includegraphics[width=\textwidth,keepaspectratio]{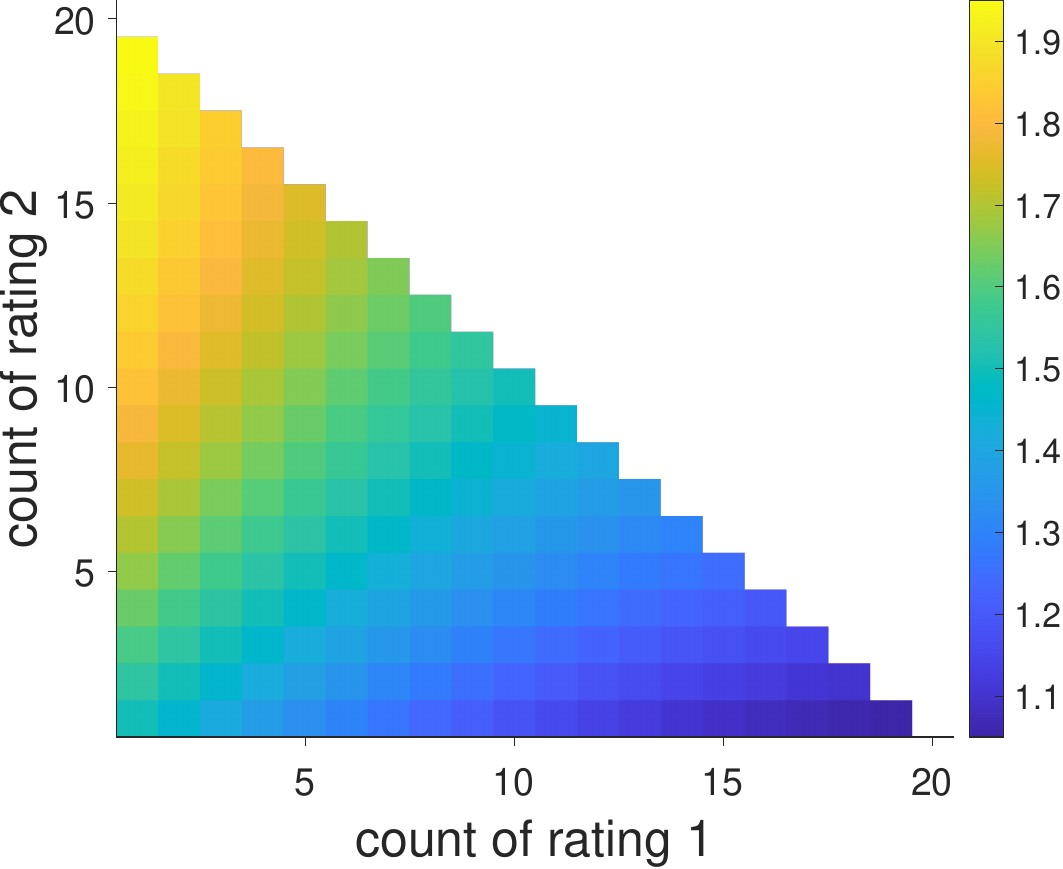}
    \caption{$q=0.7$}
  \end{subfigure}
  
  \vspace{0.01\textheight} 
  
  \begin{subfigure}[b]{0.23\textwidth}
    \centering
    \includegraphics[width=\textwidth,keepaspectratio]{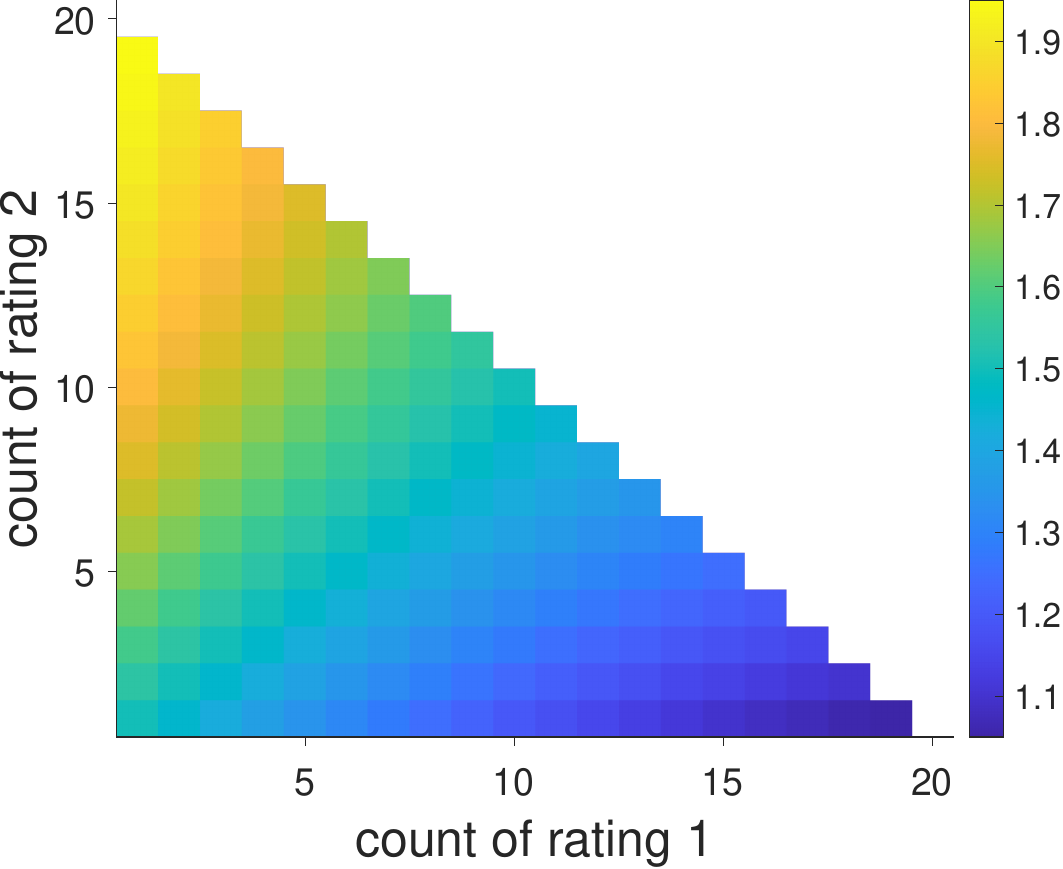} 
     \caption{$q=0.9$}
  \end{subfigure}
  \hspace{0.005\textwidth}
  \begin{subfigure}[b]{0.23\textwidth}
    \centering
    \includegraphics[width=\textwidth,keepaspectratio]{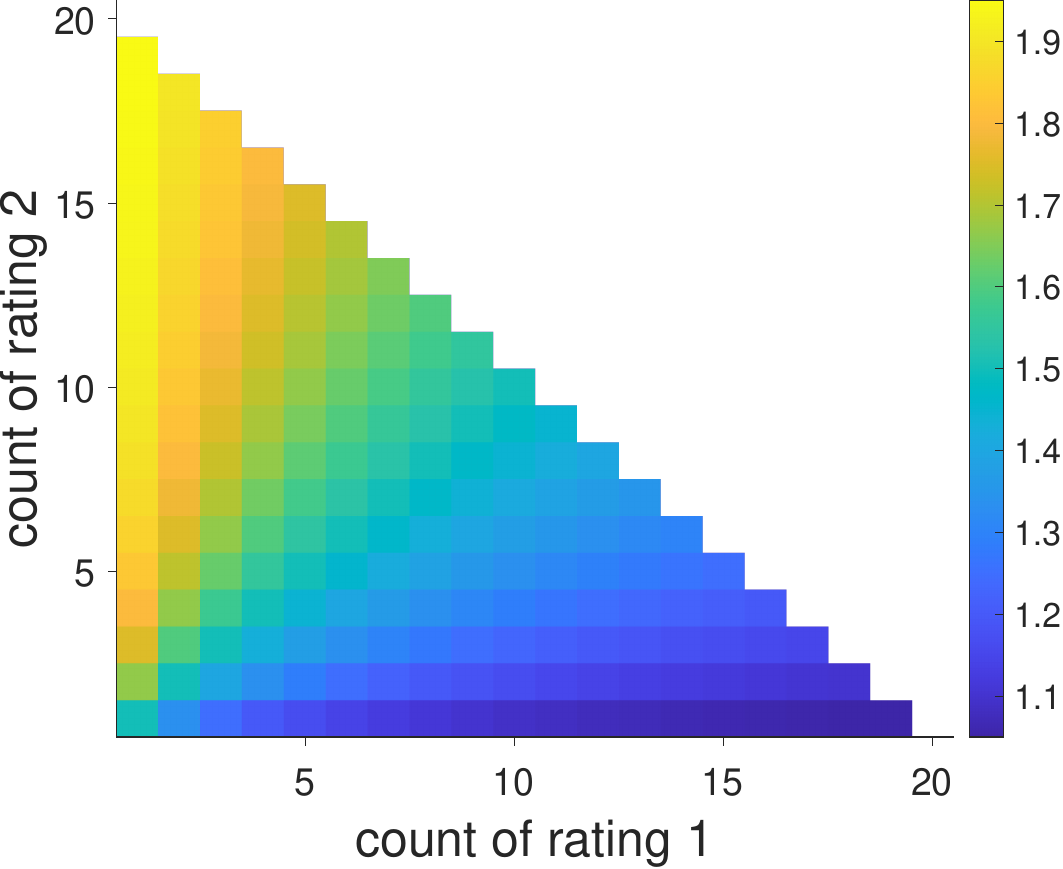}     \caption{simple averaging ($q=1$)}
  \end{subfigure}

  \caption{Heat map of BEA for a specific case where the sample size $n=20$, and the rating values are \([m]=[2]=\{1, 2\}\). The x-axis represents the count of rating \(1\), and the y-axis represents the count of rating \(2\). We vary the lower bound of the participation probability \( q \). }
  \label{fig:BEA_heatmap}
\end{figure}

\begin{figure}[H]
  \centering
  \includegraphics[width=0.45\textwidth,keepaspectratio]{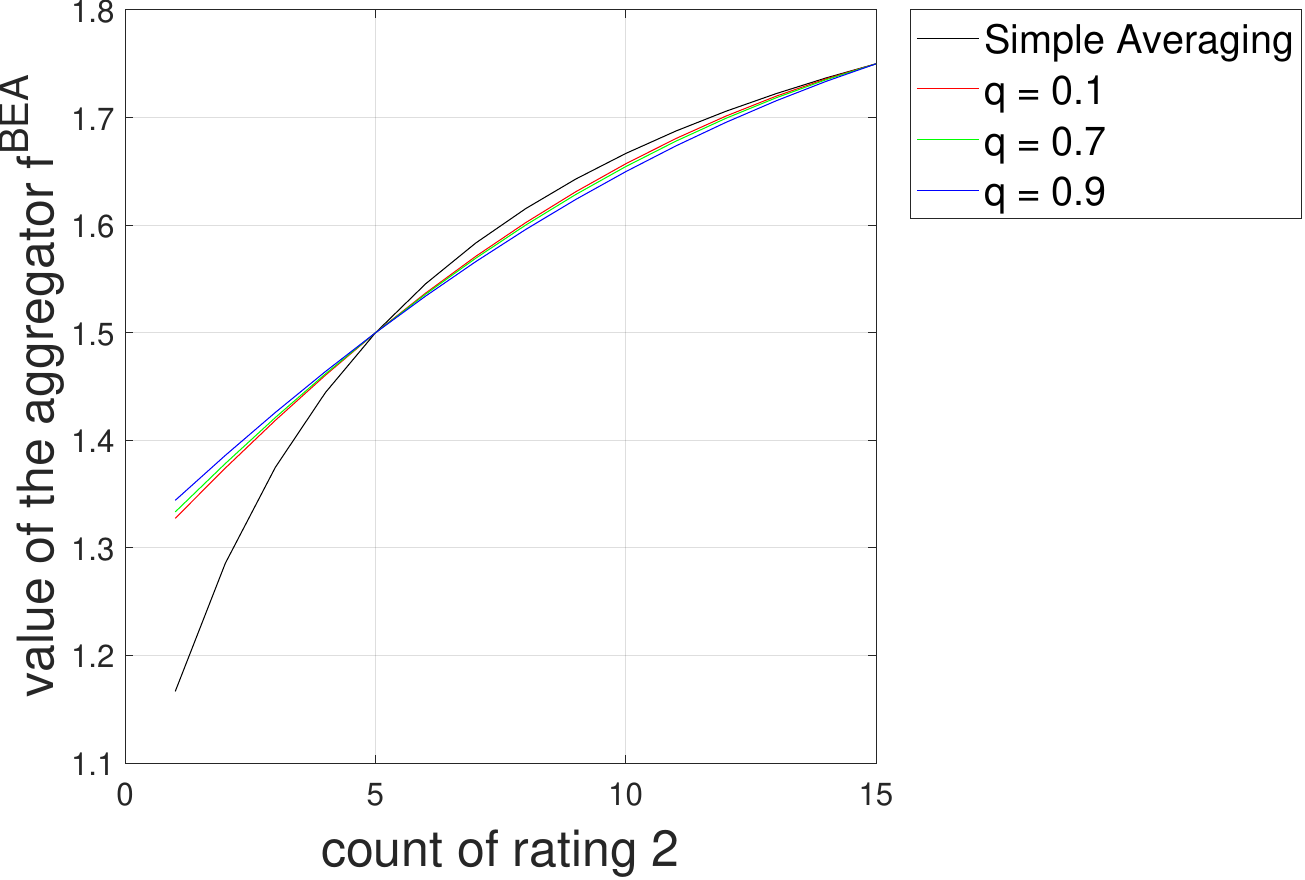}
  \caption{Value of BEA for a specific case where the sample size $n=20$, and the rating values are \([m]=[2]=\{1, 2\}\). The count of rating $1$ is fixed as $n_1=5$. The x-axis is the count of rating $2$, $n_2$. The y-axis is the value of BEA. We vary the lower bound of the participation probability \( q \).}
  \label{fig:BEA_value}
\end{figure}

\paragraph{Visualization of PAA}

\begin{figure}[h]
  \centering
  \begin{subfigure}[b]{0.23\textwidth}
    \centering
    \includegraphics[width=\textwidth,keepaspectratio]{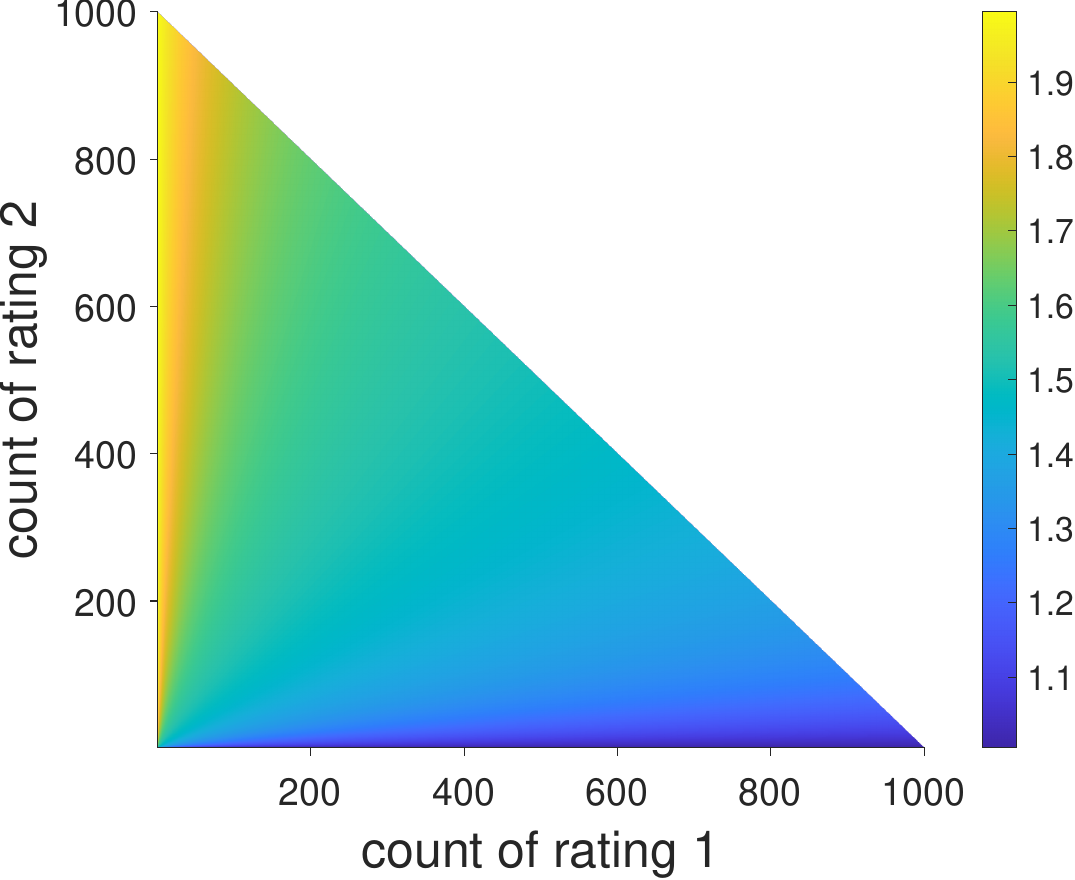}
    \caption{$q=0.1$}
  \end{subfigure}
  \hspace{0.005\textwidth}
  \begin{subfigure}[b]{0.23\textwidth}
    \centering
    \includegraphics[width=\textwidth,keepaspectratio]{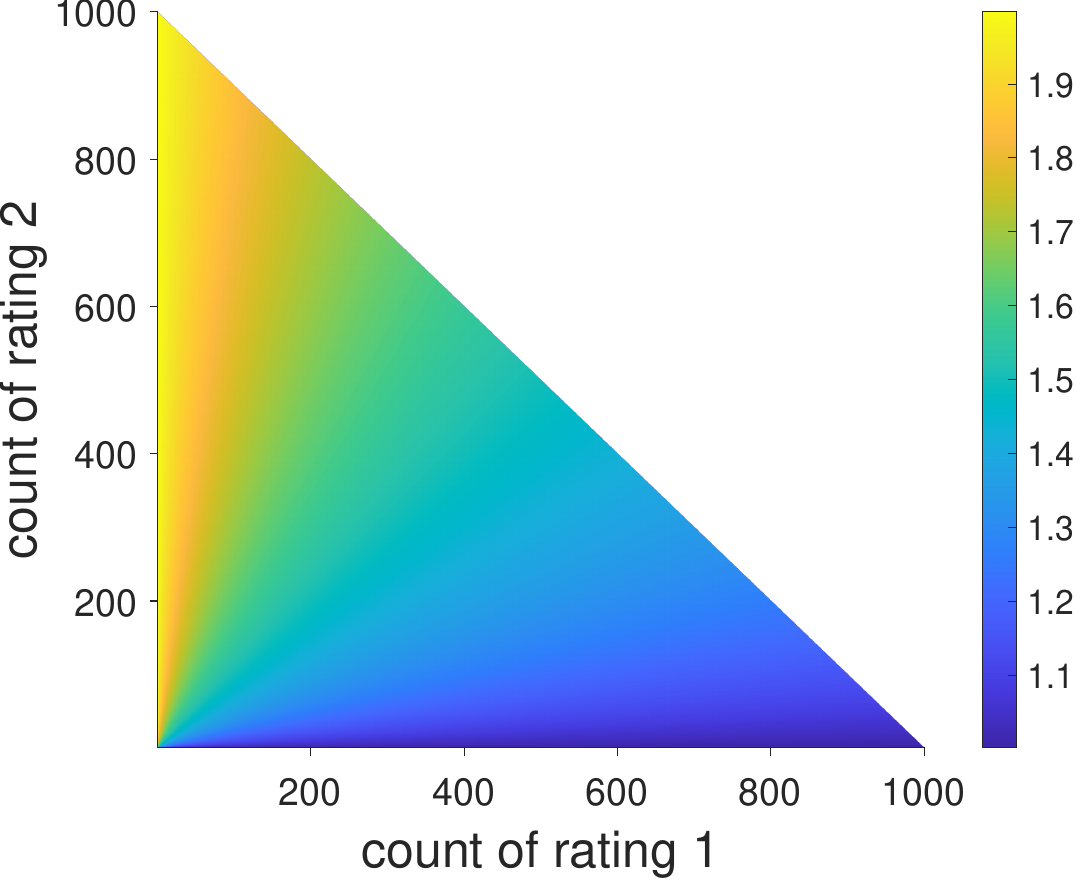}
    \caption{$q=0.3$}
  \end{subfigure}
  
  \vspace{0.01\textheight} 
  
  \begin{subfigure}[b]{0.23\textwidth}
    \centering
    \includegraphics[width=\textwidth,keepaspectratio]{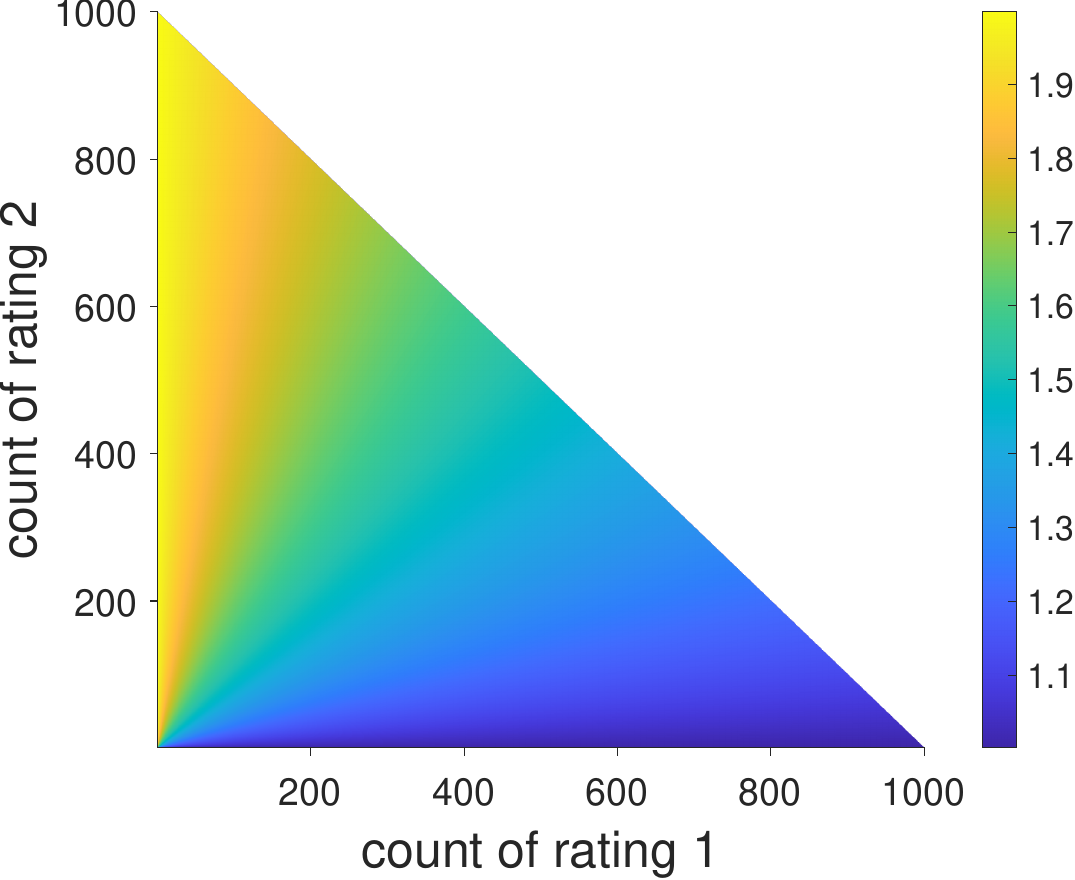} 
     \caption{$q=0.5$}
  \end{subfigure}
  \hspace{0.005\textwidth}
  \begin{subfigure}[b]{0.23\textwidth}
    \centering
    \includegraphics[width=\textwidth,keepaspectratio]{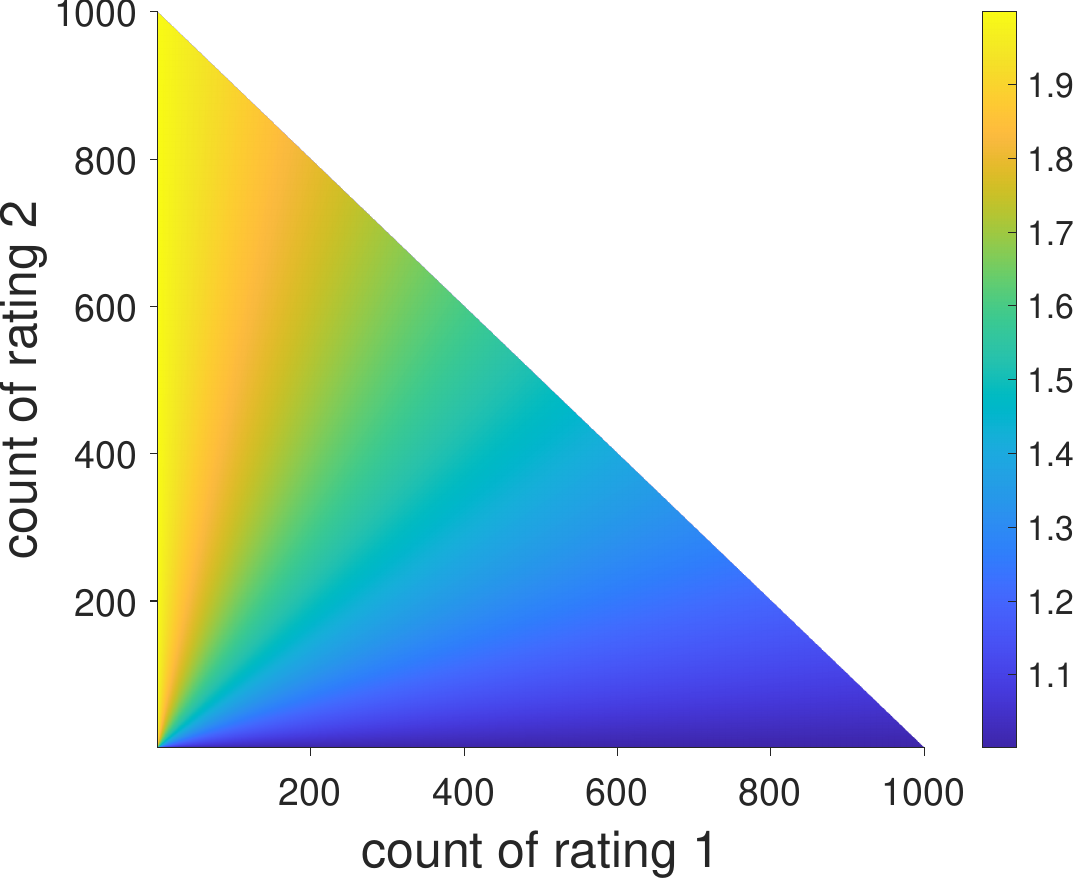}     \caption{simple averaging ($q=1$)}
  \end{subfigure}

  \caption{Heat map of PAA for a specific case where the sample size $n=1000$, and the rating values are \([m]=[2]=\{1, 2\}\). The x-axis represents the count of rating \(1\), and the y-axis represents the count of rating \(2\). We vary the lower bound of the participation probability \( q \). When \( q = 1 \), PAA degenerates to simple averaging.}
  \label{fig:PAA_heatmap}
\end{figure}

\begin{figure}[h]
  \centering
  \includegraphics[width=0.45\textwidth,keepaspectratio]{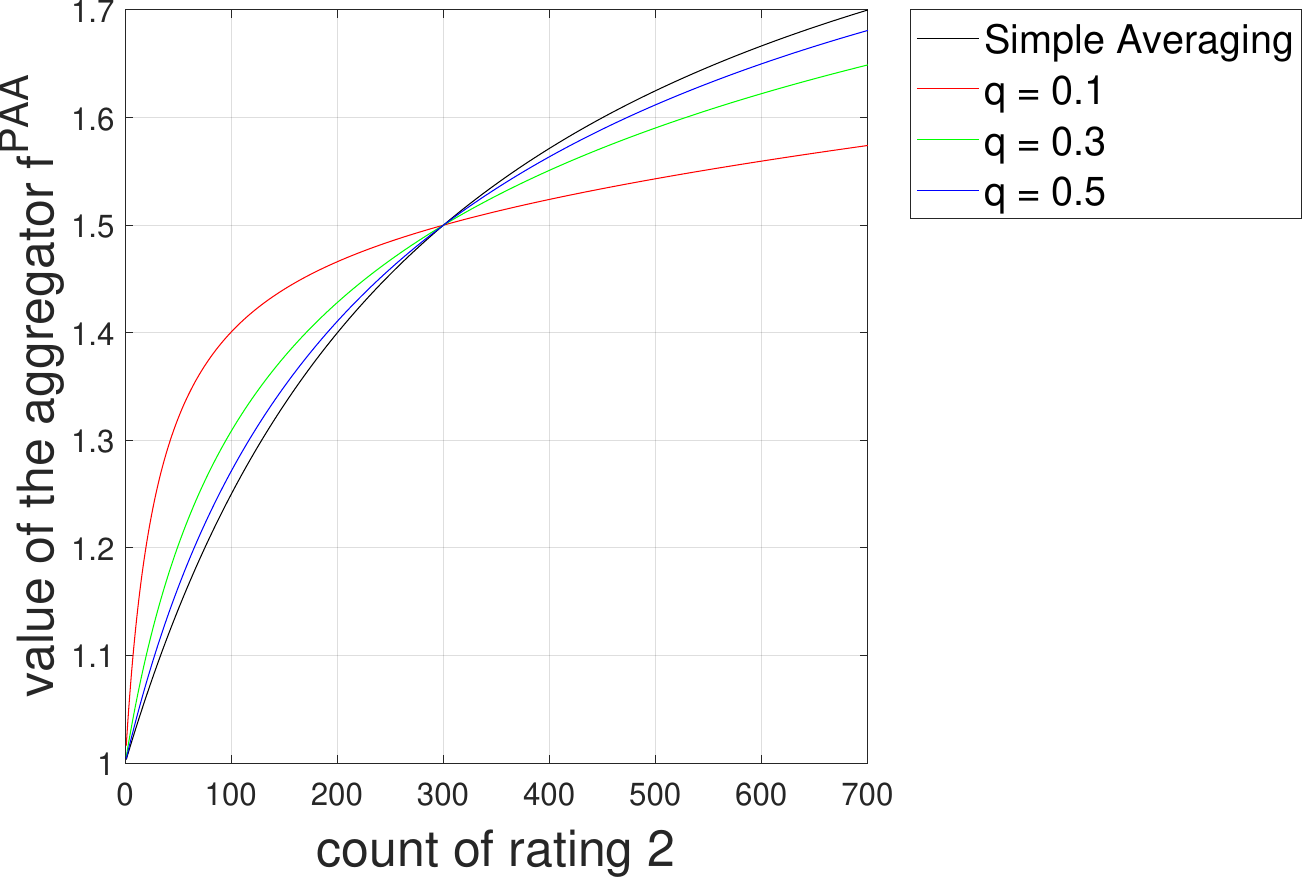}
  \caption{Value of PAA for a specific case where the sample size $n=1000$, and the rating values are \([m]=[2]=\{1, 2\}\). The count of rating $1$ is fixed as $n_1=300$. The x-axis is the count of rating $2$, $n_2$. The y-axis is the value of PAA. We vary the lower bound of the participation probability \( q \). When \( q = 1 \), PAA degenerates to simple averaging.}
  \label{fig:x=300}
\end{figure}

\Cref{fig:PAA_heatmap} and \Cref{fig:x=300} visualize PAA for a specific case where the sample size $n=1000$, and the rating values are \([m]=[2]=\{1, 2\}\). Note as the lower bound of the participation probability $q$ becomes smaller, PAA becomes more conservative (i.e. close to $\frac{m+1}{2}=3/2$). We can see it more clearly in \Cref{fig:x=300}.

When $n_1=n_2=300$, all the aggregators report the same value $\frac{m+1}{2}=\frac{3}{2}$. In general, $f^{BEA}$ also reports $\frac{m+1}{2}$ when $n_1=n_m$. When $n$ is large, the worst case is roughly $n_1=n_m$, when $n \to \infty$, the worst $\hat{\vp}$ is $(\frac{1}{2},0,\cdots,0,\frac{1}{2})$. This represents a polarized situation where participants are split between extreme approval or disapproval, with equal numbers on both sides. 




\section{Experiment}
\label{sec:exp}
\subsection{Experiment on real-world data}

We validate our aggregators on a real-world dataset about the ratings of hotels \cite{karaman2021online}. This dataset includes  96,646 survey ratings and 47,820 online reviews on a scale from 1 to 10. Survey ratings are privately collected and are considered to be true ratings of survey takers. Then survey takers are invited to post their hotel experience as online reviews, which is the observed ratings we used to test our aggregators. The detailed distributions of the data is in \Cref{sec:fig}.

For simplicity, we cut the range of ratings by mapping agent $i$'s old rating $s(i)$ into a new rating $s'(i)$ as follows,
\begin{equation*}
    s'(i):=\begin{cases}
    1 & 1\leq s(i) \leq 4;\\
    s(i)-3 & 5 \leq s(i)\leq 10.
    \end{cases}
\end{equation*}

\begin{table}[h]
\centering
\resizebox{0.3\textwidth}{!}{
\begin{tabular}{|c|c|c|c|}
\hline
AVG & PAA & BEA & Ground truth\\
\hline
5.75 & \textbf{5.66} & 6.03 & 5.53\\
\hline
\end{tabular}
}
\caption{Result of PAA, BEA and the simple average (AVG) for $q=0.3$. PAA is closest to the ground truth.}
\label{table:res}
\end{table}

\Cref{table:res} shows the result of our aggregators and the simple average aggregator. We set $q=0.3$, which is a little smaller than the empirical average participation probability $\frac{47820}{96646}$. The true empirical mean is $5.53$ and all aggregators overestimate the ratings. While PAA is closer to the true empirical mean than the simple average, BEA\footnote{Since the gap $n_1-n_m$ between the number of agents rating $1$ and the number of agents rating $m=7$ is large, we use the Monte Carol method to estimate $a^*$.} has a higher regret than the simple average. A possible reason is that ratings often exhibit a smoother distribution in real-world scenarios. BEA, designed to mitigate the impact of extreme ratings in the worst scenario, may not be well-suited for datasets with smoother rating distributions.

\subsection{Comparison with other aggregators}

We also compare our aggregator to the spectral method (SPE) \cite{xiao2017score} which takes an observed rating matrix as input. In the single-item setting, SPE degenerates to the root sum squared,i.e.
$$
    f^{SPE}(\hat{\vx})=\sqrt{\frac{\sum_{\hat{x}_i\neq 0}{\hat{x}_i^2}}{\sum_i \mathbbm{1}(\hat{x}_i\neq 0)}}.
$$
\Cref{fig:SPE} illustrates the result. Both BEA and PAA outperform SPE in the worst-case scenario.

\begin{figure}[h]
  \centering
  \includegraphics[width=0.3\textwidth,keepaspectratio]{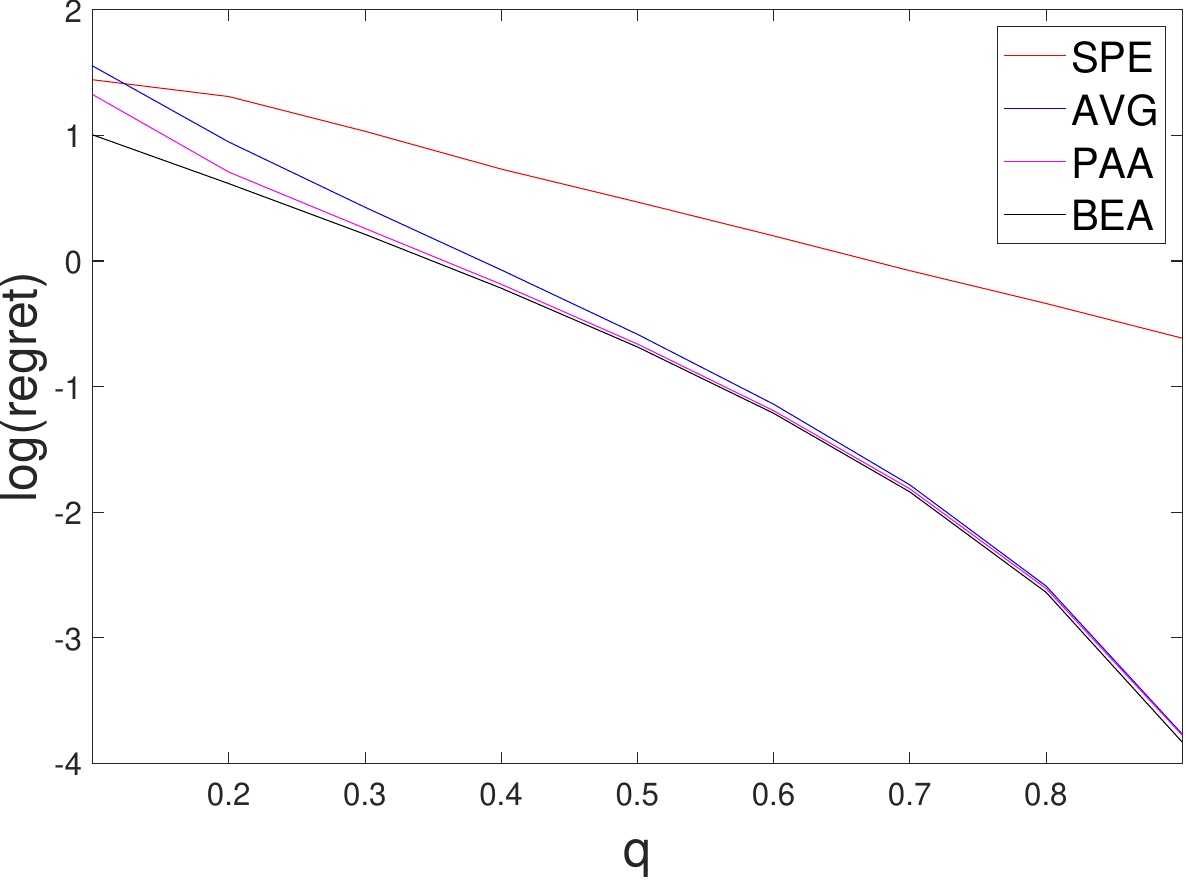}
  \caption{Regret of different aggregators when $n=20, m=5$. The x-axis is the lower bound of the participation probability $q$, and the y-axis is the natural logarithm of the regret. Both BEA and PAA outperform the spectral method (SPE) for any $q$.}
  \label{fig:SPE}
\end{figure}

\section{Conclusion}
In this work, we explored the problem of rating aggregation with the participation. We focus on two scenarios where the sample size may be either known or unknown. For the known case, we introduced the Balanced Extremes Aggregator (BEA), which balances the extreme ratings by predicting the unobserved ratings with the observed ratings. We evaluate its performance by numerical results. For the unknown sample size setting, we presented the Polarizing-Averaging Aggregator (PAA), which achieves near-optimal performance by averaging two polarized true distributions given the observed ratings. We validate both BEA and PAA in a real-world dataset and compare them to other aggregators. The experiment shows the advantages of our aggregators.

As for the future work, we could explore adaptive algorithms that dynamically adjust to varying participation probabilities over time could enhance robustness, especially in rapidly changing environments like e-commerce and social media platforms.

In this work, we assume the ratings are independent. Incorporating user behavior analysis, such as identifying and adjusting for systematic biases based on user demographics or past behavior, could yield more general aggregators.

Another promising direction involves extending our aggregators to handle multi-dimensional ratings, which are common in modern applications where users rate multiple attributes (e.g., service, quality, and price).

\begin{acks}
This work is supported by National Natural Science Foundation of China award number 62372007.
\end{acks}

\bibliographystyle{ACM-Reference-Format}
\balance
\bibliography{ref}


\begin{thebibliography}{59}


\ifx \showCODEN    \undefined \def \showCODEN     #1{\unskip}     \fi
\ifx \showISBNx    \undefined \def \showISBNx     #1{\unskip}     \fi
\ifx \showISBNxiii \undefined \def \showISBNxiii  #1{\unskip}     \fi
\ifx \showISSN     \undefined \def \showISSN      #1{\unskip}     \fi
\ifx \showLCCN     \undefined \def \showLCCN      #1{\unskip}     \fi
\ifx \shownote     \undefined \def \shownote      #1{#1}          \fi
\ifx \showarticletitle \undefined \def \showarticletitle #1{#1}   \fi
\ifx \showURL      \undefined \def \showURL       {\relax}        \fi
\providecommand\bibfield[2]{#2}
\providecommand\bibinfo[2]{#2}
\providecommand\natexlab[1]{#1}
\providecommand\showeprint[2][]{arXiv:#2}

\bibitem[Ahn and Powell(1993)]%
        {ahn1993semiparametric}
\bibfield{author}{\bibinfo{person}{Hyungtaik Ahn} {and} \bibinfo{person}{James~L Powell}.} \bibinfo{year}{1993}\natexlab{}.
\newblock \showarticletitle{Semiparametric estimation of censored selection models with a nonparametric selection mechanism}.
\newblock \bibinfo{journal}{\emph{Journal of Econometrics}} \bibinfo{volume}{58}, \bibinfo{number}{1-2} (\bibinfo{year}{1993}), \bibinfo{pages}{3--29}.
\newblock


\bibitem[Alanya et~al\mbox{.}(2015)]%
        {alanya2015comparing}
\bibfield{author}{\bibinfo{person}{Ahu Alanya}, \bibinfo{person}{Christof Wolf}, {and} \bibinfo{person}{Cristina Sotto}.} \bibinfo{year}{2015}\natexlab{}.
\newblock \showarticletitle{Comparing multiple imputation and propensity-score weighting in unit-nonresponse adjustments: a simulation study}.
\newblock \bibinfo{journal}{\emph{Public Opinion Quarterly}} \bibinfo{volume}{79}, \bibinfo{number}{3} (\bibinfo{year}{2015}), \bibinfo{pages}{635--661}.
\newblock


\bibitem[Arieli et~al\mbox{.}(2018)]%
        {arieli2018robust}
\bibfield{author}{\bibinfo{person}{Itai Arieli}, \bibinfo{person}{Yakov Babichenko}, {and} \bibinfo{person}{Rann Smorodinsky}.} \bibinfo{year}{2018}\natexlab{}.
\newblock \showarticletitle{Robust forecast aggregation}.
\newblock \bibinfo{journal}{\emph{Proceedings of the National Academy of Sciences}} \bibinfo{volume}{115}, \bibinfo{number}{52} (\bibinfo{year}{2018}), \bibinfo{pages}{E12135--E12143}.
\newblock


\bibitem[Benevenuto et~al\mbox{.}(2009)]%
        {benevenuto2009detecting}
\bibfield{author}{\bibinfo{person}{Fabr{\'\i}cio Benevenuto}, \bibinfo{person}{Tiago Rodrigues}, \bibinfo{person}{Virg{\'\i}lio Almeida}, \bibinfo{person}{Jussara Almeida}, {and} \bibinfo{person}{Marcos Gon{\c{c}}alves}.} \bibinfo{year}{2009}\natexlab{}.
\newblock \showarticletitle{Detecting spammers and content promoters in online video social networks}. In \bibinfo{booktitle}{\emph{Proceedings of the 32nd international ACM SIGIR conference on Research and development in information retrieval}}. \bibinfo{pages}{620--627}.
\newblock


\bibitem[Bhole and Hanna(2017)]%
        {bhole2017effectiveness}
\bibfield{author}{\bibinfo{person}{Bharat Bhole} {and} \bibinfo{person}{Br{\'\i}d Hanna}.} \bibinfo{year}{2017}\natexlab{}.
\newblock \showarticletitle{The effectiveness of online reviews in the presence of self-selection bias}.
\newblock \bibinfo{journal}{\emph{Simulation Modelling Practice and Theory}}  \bibinfo{volume}{77} (\bibinfo{year}{2017}), \bibinfo{pages}{108--123}.
\newblock


\bibitem[Chirita et~al\mbox{.}(2005)]%
        {chirita2005preventing}
\bibfield{author}{\bibinfo{person}{Paul-Alexandru Chirita}, \bibinfo{person}{Wolfgang Nejdl}, {and} \bibinfo{person}{Cristian Zamfir}.} \bibinfo{year}{2005}\natexlab{}.
\newblock \showarticletitle{Preventing shilling attacks in online recommender systems}. In \bibinfo{booktitle}{\emph{Proceedings of the 7th annual ACM international workshop on Web information and data management}}. \bibinfo{pages}{67--74}.
\newblock


\bibitem[Coon et~al\mbox{.}(2020)]%
        {coon2020evaluating}
\bibfield{author}{\bibinfo{person}{Jaime~J Coon}, \bibinfo{person}{Carena~J van Riper}, \bibinfo{person}{Lois~Wright Morton}, {and} \bibinfo{person}{James~R Miller}.} \bibinfo{year}{2020}\natexlab{}.
\newblock \showarticletitle{Evaluating nonresponse bias in survey research conducted in the rural Midwest}.
\newblock \bibinfo{journal}{\emph{Society \& Natural Resources}} \bibinfo{volume}{33}, \bibinfo{number}{8} (\bibinfo{year}{2020}), \bibinfo{pages}{968--986}.
\newblock


\bibitem[Dahrouge et~al\mbox{.}(2019)]%
        {dahrouge2019high}
\bibfield{author}{\bibinfo{person}{Simone Dahrouge}, \bibinfo{person}{Catherine~Deri Armstrong}, \bibinfo{person}{William Hogg}, \bibinfo{person}{Jatinderpreet Singh}, {and} \bibinfo{person}{Clare Liddy}.} \bibinfo{year}{2019}\natexlab{}.
\newblock \showarticletitle{High-performing physicians are more likely to participate in a research study: findings from a quality improvement study}.
\newblock \bibinfo{journal}{\emph{BMC medical research methodology}}  \bibinfo{volume}{19} (\bibinfo{year}{2019}), \bibinfo{pages}{1--9}.
\newblock


\bibitem[De~Oliveira et~al\mbox{.}(2021)]%
        {de2021robust}
\bibfield{author}{\bibinfo{person}{Henrique De~Oliveira}, \bibinfo{person}{Yuhta Ishii}, {and} \bibinfo{person}{Xiao Lin}.} \bibinfo{year}{2021}\natexlab{}.
\newblock \showarticletitle{Robust merging of information}.
\newblock \bibinfo{journal}{\emph{arXiv preprint arXiv:2106.00088}} (\bibinfo{year}{2021}).
\newblock


\bibitem[de~Souza~Bastos et~al\mbox{.}(2022)]%
        {de2022generalized}
\bibfield{author}{\bibinfo{person}{Fernando de Souza~Bastos}, \bibinfo{person}{Wagner Barreto-Souza}, {and} \bibinfo{person}{Marc~G Genton}.} \bibinfo{year}{2022}\natexlab{}.
\newblock \showarticletitle{A generalized Heckman model with varying sample selection bias and dispersion parameters}.
\newblock \bibinfo{journal}{\emph{Statistica Sinica}} \bibinfo{volume}{32}, \bibinfo{number}{4} (\bibinfo{year}{2022}), \bibinfo{pages}{1911--1938}.
\newblock


\bibitem[Elston(2021)]%
        {elston2021participation}
\bibfield{author}{\bibinfo{person}{Dirk~M Elston}.} \bibinfo{year}{2021}\natexlab{}.
\newblock \showarticletitle{Participation bias, self-selection bias, and response bias}.
\newblock \bibinfo{journal}{\emph{Journal of the American Academy of Dermatology}} (\bibinfo{year}{2021}).
\newblock


\bibitem[Fakhouri et~al\mbox{.}(2020)]%
        {fakhouri2020investigation}
\bibfield{author}{\bibinfo{person}{Tala~H Fakhouri}, \bibinfo{person}{Crescent~B Martin}, \bibinfo{person}{Te-Ching Chen}, \bibinfo{person}{Lara~J Akinbami}, \bibinfo{person}{Cynthia~L Ogden}, \bibinfo{person}{Ryne Paulose-Ram}, \bibinfo{person}{Minsun~K Riddles}, \bibinfo{person}{Wendy Van~de Kerckhove}, \bibinfo{person}{Shelley~B Roth}, \bibinfo{person}{Jason Clark}, {et~al\mbox{.}}} \bibinfo{year}{2020}\natexlab{}.
\newblock \showarticletitle{An investigation of nonresponse bias and survey location variability in the 2017- 2018 National Health and Nutrition Examination Survey}.
\newblock  (\bibinfo{year}{2020}).
\newblock


\bibitem[Fu et~al\mbox{.}(2021)]%
        {fu2021iterative}
\bibfield{author}{\bibinfo{person}{Quan-Yun Fu}, \bibinfo{person}{Jian-Feng Ren}, {and} \bibinfo{person}{Hong-Liang Sun}.} \bibinfo{year}{2021}\natexlab{}.
\newblock \showarticletitle{Iterative group-based and difference ranking method for online rating systems with spamming attacks}.
\newblock \bibinfo{journal}{\emph{International Journal of Modern Physics C}} \bibinfo{volume}{32}, \bibinfo{number}{05} (\bibinfo{year}{2021}), \bibinfo{pages}{2150059}.
\newblock


\bibitem[Fujimura and Nishihara(2003)]%
        {fujimura2003reputation}
\bibfield{author}{\bibinfo{person}{Ko Fujimura} {and} \bibinfo{person}{Takuo Nishihara}.} \bibinfo{year}{2003}\natexlab{}.
\newblock \showarticletitle{Reputation rating system based on past behavior of evaluators}. In \bibinfo{booktitle}{\emph{Proceedings of the 4th ACM Conference on Electronic Commerce}}. \bibinfo{pages}{246--247}.
\newblock


\bibitem[Gao et~al\mbox{.}(2015)]%
        {gao2015group}
\bibfield{author}{\bibinfo{person}{Jian Gao}, \bibinfo{person}{Yu-Wei Dong}, \bibinfo{person}{Ming-Sheng Shang}, \bibinfo{person}{Shi-Min Cai}, {and} \bibinfo{person}{Tao Zhou}.} \bibinfo{year}{2015}\natexlab{}.
\newblock \showarticletitle{Group-based ranking method for online rating systems with spamming attacks}.
\newblock \bibinfo{journal}{\emph{Europhysics Letters}} \bibinfo{volume}{110}, \bibinfo{number}{2} (\bibinfo{year}{2015}), \bibinfo{pages}{28003}.
\newblock


\bibitem[Gao and Zhou(2017)]%
        {gao2017evaluating}
\bibfield{author}{\bibinfo{person}{Jian Gao} {and} \bibinfo{person}{Tao Zhou}.} \bibinfo{year}{2017}\natexlab{}.
\newblock \showarticletitle{Evaluating user reputation in online rating systems via an iterative group-based ranking method}.
\newblock \bibinfo{journal}{\emph{Physica A: Statistical Mechanics and its Applications}}  \bibinfo{volume}{473} (\bibinfo{year}{2017}), \bibinfo{pages}{546--560}.
\newblock


\bibitem[Gorman et~al\mbox{.}(2017)]%
        {gorman2017adjustment}
\bibfield{author}{\bibinfo{person}{Emma Gorman}, \bibinfo{person}{Alastair~H Leyland}, \bibinfo{person}{Gerry McCartney}, \bibinfo{person}{Srinivasa~Vittal Katikireddi}, \bibinfo{person}{Lisa Rutherford}, \bibinfo{person}{Lesley Graham}, \bibinfo{person}{Mark Robinson}, {and} \bibinfo{person}{Linsay Gray}.} \bibinfo{year}{2017}\natexlab{}.
\newblock \showarticletitle{Adjustment for survey non-representativeness using record-linkage: refined estimates of alcohol consumption by deprivation in Scotland}.
\newblock \bibinfo{journal}{\emph{Addiction}} \bibinfo{volume}{112}, \bibinfo{number}{7} (\bibinfo{year}{2017}), \bibinfo{pages}{1270--1280}.
\newblock


\bibitem[Gray et~al\mbox{.}(2020)]%
        {gray2020correcting}
\bibfield{author}{\bibinfo{person}{Linsay Gray}, \bibinfo{person}{Emma Gorman}, \bibinfo{person}{Ian~R White}, \bibinfo{person}{S~Vittal Katikireddi}, \bibinfo{person}{Gerry McCartney}, \bibinfo{person}{Lisa Rutherford}, {and} \bibinfo{person}{Alastair~H Leyland}.} \bibinfo{year}{2020}\natexlab{}.
\newblock \showarticletitle{Correcting for non-participation bias in health surveys using record-linkage, synthetic observations and pattern mixture modelling}.
\newblock \bibinfo{journal}{\emph{Statistical methods in medical research}} \bibinfo{volume}{29}, \bibinfo{number}{4} (\bibinfo{year}{2020}), \bibinfo{pages}{1212--1226}.
\newblock


\bibitem[Gray et~al\mbox{.}(2013)]%
        {gray2013use}
\bibfield{author}{\bibinfo{person}{Linsay Gray}, \bibinfo{person}{Gerry McCartney}, \bibinfo{person}{Ian~R White}, \bibinfo{person}{Srinivasa~Vittal Katikireddi}, \bibinfo{person}{Lisa Rutherford}, \bibinfo{person}{Emma Gorman}, {and} \bibinfo{person}{Alastair~H Leyland}.} \bibinfo{year}{2013}\natexlab{}.
\newblock \showarticletitle{Use of record-linkage to handle non-response and improve alcohol consumption estimates in health survey data: a study protocol}.
\newblock \bibinfo{journal}{\emph{BMJ open}} \bibinfo{volume}{3}, \bibinfo{number}{3} (\bibinfo{year}{2013}), \bibinfo{pages}{e002647}.
\newblock


\bibitem[Guo et~al\mbox{.}(2024)]%
        {guo2024algorithmic}
\bibfield{author}{\bibinfo{person}{Yongkang Guo}, \bibinfo{person}{Jason~D. Hartline}, \bibinfo{person}{Zhihuan Huang}, \bibinfo{person}{Yuqing Kong}, \bibinfo{person}{Anant Shah}, {and} \bibinfo{person}{Fang-Yi Yu}.} \bibinfo{year}{2024}\natexlab{}.
\newblock \bibinfo{title}{Algorithmic Robust Forecast Aggregation}.
\newblock
\showeprint[arxiv]{2401.17743}~[cs.LG]


\bibitem[Heckman(1974)]%
        {heckman1974shadow}
\bibfield{author}{\bibinfo{person}{James Heckman}.} \bibinfo{year}{1974}\natexlab{}.
\newblock \showarticletitle{Shadow prices, market wages, and labor supply}.
\newblock \bibinfo{journal}{\emph{Econometrica: journal of the econometric society}} (\bibinfo{year}{1974}), \bibinfo{pages}{679--694}.
\newblock


\bibitem[Heckman(1979)]%
        {heckman1979sample}
\bibfield{author}{\bibinfo{person}{J Heckman}.} \bibinfo{year}{1979}\natexlab{}.
\newblock \showarticletitle{Sample selection bias as a specification error}.
\newblock \bibinfo{journal}{\emph{Econometrica}} (\bibinfo{year}{1979}).
\newblock


\bibitem[Heckman(1976)]%
        {heckman1976common}
\bibfield{author}{\bibinfo{person}{James~J Heckman}.} \bibinfo{year}{1976}\natexlab{}.
\newblock \showarticletitle{The common structure of statistical models of truncation, sample selection and limited dependent variables and a simple estimator for such models}.
\newblock In \bibinfo{booktitle}{\emph{Annals of economic and social measurement, volume 5, number 4}}. \bibinfo{publisher}{NBER}, \bibinfo{pages}{475--492}.
\newblock


\bibitem[Heckman and Robb~Jr(1985)]%
        {heckman1985alternative}
\bibfield{author}{\bibinfo{person}{James~J Heckman} {and} \bibinfo{person}{Richard Robb~Jr}.} \bibinfo{year}{1985}\natexlab{}.
\newblock \showarticletitle{Alternative methods for evaluating the impact of interventions: An overview}.
\newblock \bibinfo{journal}{\emph{Journal of econometrics}} \bibinfo{volume}{30}, \bibinfo{number}{1-2} (\bibinfo{year}{1985}), \bibinfo{pages}{239--267}.
\newblock


\bibitem[Hu et~al\mbox{.}(2009)]%
        {hu2009online}
\bibfield{author}{\bibinfo{person}{Nan Hu}, \bibinfo{person}{Paul~A Pavlou}, {and} \bibinfo{person}{Jie~Jennifer Zhang}.} \bibinfo{year}{2009}\natexlab{}.
\newblock \showarticletitle{Why do online product reviews have a J-shaped distribution? Overcoming biases in online word-of-mouth communication}.
\newblock \bibinfo{journal}{\emph{Commun. ACM}} \bibinfo{volume}{52}, \bibinfo{number}{10} (\bibinfo{year}{2009}), \bibinfo{pages}{144--147}.
\newblock


\bibitem[J{\o}sang et~al\mbox{.}(2007)]%
        {josang2007survey}
\bibfield{author}{\bibinfo{person}{Audun J{\o}sang}, \bibinfo{person}{Roslan Ismail}, {and} \bibinfo{person}{Colin Boyd}.} \bibinfo{year}{2007}\natexlab{}.
\newblock \showarticletitle{A survey of trust and reputation systems for online service provision}.
\newblock \bibinfo{journal}{\emph{Decision support systems}} \bibinfo{volume}{43}, \bibinfo{number}{2} (\bibinfo{year}{2007}), \bibinfo{pages}{618--644}.
\newblock


\bibitem[Karaman(2021)]%
        {karaman2021online}
\bibfield{author}{\bibinfo{person}{H{\"u}lya Karaman}.} \bibinfo{year}{2021}\natexlab{}.
\newblock \showarticletitle{Online review solicitations reduce extremity bias in online review distributions and increase their representativeness}.
\newblock \bibinfo{journal}{\emph{Management Science}} \bibinfo{volume}{67}, \bibinfo{number}{7} (\bibinfo{year}{2021}), \bibinfo{pages}{4420--4445}.
\newblock


\bibitem[Koh et~al\mbox{.}(2010)]%
        {koh2010online}
\bibfield{author}{\bibinfo{person}{Noi~Sian Koh}, \bibinfo{person}{Nan Hu}, {and} \bibinfo{person}{Eric~K Clemons}.} \bibinfo{year}{2010}\natexlab{}.
\newblock \showarticletitle{Do online reviews reflect a product’s true perceived quality? An investigation of online movie reviews across cultures}.
\newblock \bibinfo{journal}{\emph{Electronic commerce research and applications}} \bibinfo{volume}{9}, \bibinfo{number}{5} (\bibinfo{year}{2010}), \bibinfo{pages}{374--385}.
\newblock


\bibitem[Kong et~al\mbox{.}(2024)]%
        {kong2024surprising}
\bibfield{author}{\bibinfo{person}{Yuqing Kong}, \bibinfo{person}{Shu Wang}, {and} \bibinfo{person}{Ying Wang}.} \bibinfo{year}{2024}\natexlab{}.
\newblock \showarticletitle{The Surprising Benefits of Base Rate Neglect in Robust Aggregation}.
\newblock \bibinfo{journal}{\emph{arXiv preprint arXiv:2406.13490}} (\bibinfo{year}{2024}).
\newblock


\bibitem[Laureti et~al\mbox{.}(2006)]%
        {laureti2006information}
\bibfield{author}{\bibinfo{person}{Paolo Laureti}, \bibinfo{person}{Lionel Moret}, \bibinfo{person}{Y-C Zhang}, {and} \bibinfo{person}{Y-K Yu}.} \bibinfo{year}{2006}\natexlab{}.
\newblock \showarticletitle{Information filtering via iterative refinement}.
\newblock \bibinfo{journal}{\emph{Europhysics Letters}} \bibinfo{volume}{75}, \bibinfo{number}{6} (\bibinfo{year}{2006}), \bibinfo{pages}{1006}.
\newblock


\bibitem[Lee(1982)]%
        {lee1982some}
\bibfield{author}{\bibinfo{person}{Lung-Fei Lee}.} \bibinfo{year}{1982}\natexlab{}.
\newblock \showarticletitle{Some approaches to the correction of selectivity bias}.
\newblock \bibinfo{journal}{\emph{The Review of Economic Studies}} \bibinfo{volume}{49}, \bibinfo{number}{3} (\bibinfo{year}{1982}), \bibinfo{pages}{355--372}.
\newblock


\bibitem[Lee(1983)]%
        {lee1983generalized}
\bibfield{author}{\bibinfo{person}{Lung-Fei Lee}.} \bibinfo{year}{1983}\natexlab{}.
\newblock \showarticletitle{Generalized econometric models with selectivity}.
\newblock \bibinfo{journal}{\emph{Econometrica: Journal of the Econometric Society}} (\bibinfo{year}{1983}), \bibinfo{pages}{507--512}.
\newblock


\bibitem[Lee(1994)]%
        {lee1994semiparametric}
\bibfield{author}{\bibinfo{person}{Lung-Fei Lee}.} \bibinfo{year}{1994}\natexlab{}.
\newblock \showarticletitle{Semiparametric instrumental variable estimation of simultaneous equation sample selection models}.
\newblock \bibinfo{journal}{\emph{Journal of Econometrics}} \bibinfo{volume}{63}, \bibinfo{number}{2} (\bibinfo{year}{1994}), \bibinfo{pages}{341--388}.
\newblock


\bibitem[Liao et~al\mbox{.}(2014)]%
        {liao2014ranking}
\bibfield{author}{\bibinfo{person}{Hao Liao}, \bibinfo{person}{An Zeng}, \bibinfo{person}{Rui Xiao}, \bibinfo{person}{Zhuo-Ming Ren}, \bibinfo{person}{Duan-Bing Chen}, {and} \bibinfo{person}{Yi-Cheng Zhang}.} \bibinfo{year}{2014}\natexlab{}.
\newblock \showarticletitle{Ranking reputation and quality in online rating systems}.
\newblock \bibinfo{journal}{\emph{PloS one}} \bibinfo{volume}{9}, \bibinfo{number}{5} (\bibinfo{year}{2014}), \bibinfo{pages}{e97146}.
\newblock


\bibitem[Little(1993)]%
        {little1993pattern}
\bibfield{author}{\bibinfo{person}{Roderick~JA Little}.} \bibinfo{year}{1993}\natexlab{}.
\newblock \showarticletitle{Pattern-mixture models for multivariate incomplete data}.
\newblock \bibinfo{journal}{\emph{J. Amer. Statist. Assoc.}} \bibinfo{volume}{88}, \bibinfo{number}{421} (\bibinfo{year}{1993}), \bibinfo{pages}{125--134}.
\newblock


\bibitem[Marchenko and Genton(2012)]%
        {marchenko2012heckman}
\bibfield{author}{\bibinfo{person}{Yulia~V Marchenko} {and} \bibinfo{person}{Marc~G Genton}.} \bibinfo{year}{2012}\natexlab{}.
\newblock \showarticletitle{A Heckman selection-t model}.
\newblock \bibinfo{journal}{\emph{J. Amer. Statist. Assoc.}} \bibinfo{volume}{107}, \bibinfo{number}{497} (\bibinfo{year}{2012}), \bibinfo{pages}{304--317}.
\newblock


\bibitem[Mukherjee et~al\mbox{.}(2012)]%
        {mukherjee2012spotting}
\bibfield{author}{\bibinfo{person}{Arjun Mukherjee}, \bibinfo{person}{Bing Liu}, {and} \bibinfo{person}{Natalie Glance}.} \bibinfo{year}{2012}\natexlab{}.
\newblock \showarticletitle{Spotting fake reviewer groups in consumer reviews}. In \bibinfo{booktitle}{\emph{Proceedings of the 21st international conference on World Wide Web}}. \bibinfo{pages}{191--200}.
\newblock


\bibitem[Neyman and Roughgarden(2022)]%
        {neyman2022you}
\bibfield{author}{\bibinfo{person}{Eric Neyman} {and} \bibinfo{person}{Tim Roughgarden}.} \bibinfo{year}{2022}\natexlab{}.
\newblock \showarticletitle{Are you smarter than a random expert? The robust aggregation of substitutable signals}. In \bibinfo{booktitle}{\emph{Proceedings of the 23rd ACM Conference on Economics and Computation}}. \bibinfo{pages}{990--1012}.
\newblock


\bibitem[Pan et~al\mbox{.}(2023)]%
        {pan2023robust}
\bibfield{author}{\bibinfo{person}{Yuqi Pan}, \bibinfo{person}{Zhaohua Chen}, {and} \bibinfo{person}{Yuqing Kong}.} \bibinfo{year}{2023}\natexlab{}.
\newblock \showarticletitle{Robust Decision Aggregation with Second-order Information}.
\newblock \bibinfo{journal}{\emph{arXiv preprint arXiv:2311.14094}} (\bibinfo{year}{2023}).
\newblock


\bibitem[Peytchev(2012)]%
        {peytchev2012multiple}
\bibfield{author}{\bibinfo{person}{Andy Peytchev}.} \bibinfo{year}{2012}\natexlab{}.
\newblock \showarticletitle{Multiple imputation for unit nonresponse and measurement error}.
\newblock \bibinfo{journal}{\emph{Public Opinion Quarterly}} \bibinfo{volume}{76}, \bibinfo{number}{2} (\bibinfo{year}{2012}), \bibinfo{pages}{214--237}.
\newblock


\bibitem[Resnick et~al\mbox{.}(2000)]%
        {resnick2000reputation}
\bibfield{author}{\bibinfo{person}{Paul Resnick}, \bibinfo{person}{Ko Kuwabara}, \bibinfo{person}{Richard Zeckhauser}, {and} \bibinfo{person}{Eric Friedman}.} \bibinfo{year}{2000}\natexlab{}.
\newblock \showarticletitle{Reputation systems}.
\newblock \bibinfo{journal}{\emph{Commun. ACM}} \bibinfo{volume}{43}, \bibinfo{number}{12} (\bibinfo{year}{2000}), \bibinfo{pages}{45--48}.
\newblock


\bibitem[Robins et~al\mbox{.}(1994)]%
        {robins1994estimation}
\bibfield{author}{\bibinfo{person}{James~M Robins}, \bibinfo{person}{Andrea Rotnitzky}, {and} \bibinfo{person}{Lue~Ping Zhao}.} \bibinfo{year}{1994}\natexlab{}.
\newblock \showarticletitle{Estimation of regression coefficients when some regressors are not always observed}.
\newblock \bibinfo{journal}{\emph{Journal of the American statistical Association}} \bibinfo{volume}{89}, \bibinfo{number}{427} (\bibinfo{year}{1994}), \bibinfo{pages}{846--866}.
\newblock


\bibitem[Schoeler et~al\mbox{.}(2022)]%
        {schoeler2022correction}
\bibfield{author}{\bibinfo{person}{Tabea Schoeler}, \bibinfo{person}{Doug Speed}, \bibinfo{person}{Eleonora Porcu}, \bibinfo{person}{Nicola Pirastu}, \bibinfo{person}{Jean-Baptiste Pingault}, {and} \bibinfo{person}{Zolt{\'a}n Kutalik}.} \bibinfo{year}{2022}\natexlab{}.
\newblock \showarticletitle{Correction for participation bias in the UK Biobank reveals non-negligible impact on genetic associations and downstream analyses}.
\newblock \bibinfo{journal}{\emph{bioRxiv}} (\bibinfo{year}{2022}), \bibinfo{pages}{2022--09}.
\newblock


\bibitem[Schoeler et~al\mbox{.}(2023)]%
        {schoeler2023participation}
\bibfield{author}{\bibinfo{person}{Tabea Schoeler}, \bibinfo{person}{Doug Speed}, \bibinfo{person}{Eleonora Porcu}, \bibinfo{person}{Nicola Pirastu}, \bibinfo{person}{Jean-Baptiste Pingault}, {and} \bibinfo{person}{Zolt{\'a}n Kutalik}.} \bibinfo{year}{2023}\natexlab{}.
\newblock \showarticletitle{Participation bias in the UK Biobank distorts genetic associations and downstream analyses}.
\newblock \bibinfo{journal}{\emph{Nature Human Behaviour}} \bibinfo{volume}{7}, \bibinfo{number}{7} (\bibinfo{year}{2023}), \bibinfo{pages}{1216--1227}.
\newblock


\bibitem[Sterne et~al\mbox{.}(2009)]%
        {sterne2009multiple}
\bibfield{author}{\bibinfo{person}{Jonathan~AC Sterne}, \bibinfo{person}{Ian~R White}, \bibinfo{person}{John~B Carlin}, \bibinfo{person}{Michael Spratt}, \bibinfo{person}{Patrick Royston}, \bibinfo{person}{Michael~G Kenward}, \bibinfo{person}{Angela~M Wood}, {and} \bibinfo{person}{James~R Carpenter}.} \bibinfo{year}{2009}\natexlab{}.
\newblock \showarticletitle{Multiple imputation for missing data in epidemiological and clinical research: potential and pitfalls}.
\newblock \bibinfo{journal}{\emph{Bmj}}  \bibinfo{volume}{338} (\bibinfo{year}{2009}).
\newblock


\bibitem[Tabandeh et~al\mbox{.}(2022)]%
        {tabandeh2022review}
\bibfield{author}{\bibinfo{person}{Armin Tabandeh}, \bibinfo{person}{Gaofeng Jia}, {and} \bibinfo{person}{Paolo Gardoni}.} \bibinfo{year}{2022}\natexlab{}.
\newblock \showarticletitle{A review and assessment of importance sampling methods for reliability analysis}.
\newblock \bibinfo{journal}{\emph{Structural Safety}}  \bibinfo{volume}{97} (\bibinfo{year}{2022}), \bibinfo{pages}{102216}.
\newblock


\bibitem[Tokdar and Kass(2010)]%
        {tokdar2010importance}
\bibfield{author}{\bibinfo{person}{Surya~T Tokdar} {and} \bibinfo{person}{Robert~E Kass}.} \bibinfo{year}{2010}\natexlab{}.
\newblock \showarticletitle{Importance sampling: a review}.
\newblock \bibinfo{journal}{\emph{Wiley Interdisciplinary Reviews: Computational Statistics}} \bibinfo{volume}{2}, \bibinfo{number}{1} (\bibinfo{year}{2010}), \bibinfo{pages}{54--60}.
\newblock


\bibitem[Tompsett et~al\mbox{.}(2018)]%
        {tompsett2018use}
\bibfield{author}{\bibinfo{person}{Daniel~Mark Tompsett}, \bibinfo{person}{Finbarr Leacy}, \bibinfo{person}{Margarita Moreno-Betancur}, \bibinfo{person}{Jon Heron}, {and} \bibinfo{person}{Ian~R White}.} \bibinfo{year}{2018}\natexlab{}.
\newblock \showarticletitle{On the use of the not-at-random fully conditional specification (NARFCS) procedure in practice}.
\newblock \bibinfo{journal}{\emph{Statistics in medicine}} \bibinfo{volume}{37}, \bibinfo{number}{15} (\bibinfo{year}{2018}), \bibinfo{pages}{2338--2353}.
\newblock


\bibitem[Verger et~al\mbox{.}(2021)]%
        {verger2021online}
\bibfield{author}{\bibinfo{person}{Pierre Verger}, \bibinfo{person}{Dimitri Scronias}, \bibinfo{person}{Yves Fradier}, \bibinfo{person}{Malika Meziani}, {and} \bibinfo{person}{Bruno Ventelou}.} \bibinfo{year}{2021}\natexlab{}.
\newblock \showarticletitle{Online study of health professionals about their vaccination attitudes and behavior in the COVID-19 era: addressing participation bias}.
\newblock \bibinfo{journal}{\emph{Human Vaccines \& Immunotherapeutics}} \bibinfo{volume}{17}, \bibinfo{number}{9} (\bibinfo{year}{2021}), \bibinfo{pages}{2934--2939}.
\newblock


\bibitem[Vogel et~al\mbox{.}(2020)]%
        {vogel2020weighted}
\bibfield{author}{\bibinfo{person}{Robin Vogel}, \bibinfo{person}{Mastane Achab}, \bibinfo{person}{St{\'e}phan Cl{\'e}men{\c{c}}on}, {and} \bibinfo{person}{Charles Tillier}.} \bibinfo{year}{2020}\natexlab{}.
\newblock \showarticletitle{Weighted empirical risk minimization: Sample selection bias correction based on importance sampling}.
\newblock \bibinfo{journal}{\emph{arXiv preprint arXiv:2002.05145}} (\bibinfo{year}{2020}).
\newblock


\bibitem[Wang et~al\mbox{.}(2018)]%
        {wang2018graph}
\bibfield{author}{\bibinfo{person}{Zhuo Wang}, \bibinfo{person}{Songmin Gu}, \bibinfo{person}{Xiangnan Zhao}, {and} \bibinfo{person}{Xiaowei Xu}.} \bibinfo{year}{2018}\natexlab{}.
\newblock \showarticletitle{Graph-based review spammer group detection}.
\newblock \bibinfo{journal}{\emph{Knowledge and Information Systems}}  \bibinfo{volume}{55} (\bibinfo{year}{2018}), \bibinfo{pages}{571--597}.
\newblock


\bibitem[Wang et~al\mbox{.}(2016)]%
        {wang2016detecting}
\bibfield{author}{\bibinfo{person}{Zhuo Wang}, \bibinfo{person}{Tingting Hou}, \bibinfo{person}{Dawei Song}, \bibinfo{person}{Zhun Li}, {and} \bibinfo{person}{Tianqi Kong}.} \bibinfo{year}{2016}\natexlab{}.
\newblock \showarticletitle{Detecting review spammer groups via bipartite graph projection}.
\newblock \bibinfo{journal}{\emph{Comput. J.}} \bibinfo{volume}{59}, \bibinfo{number}{6} (\bibinfo{year}{2016}), \bibinfo{pages}{861--874}.
\newblock


\bibitem[Xiao and Wang(2017)]%
        {xiao2017score}
\bibfield{author}{\bibinfo{person}{Mingyu Xiao} {and} \bibinfo{person}{Yuqing Wang}.} \bibinfo{year}{2017}\natexlab{}.
\newblock \showarticletitle{Score Aggregation via Spectral Method.}. In \bibinfo{booktitle}{\emph{IJCAI}}. \bibinfo{pages}{451--457}.
\newblock


\bibitem[Zhang et~al\mbox{.}(2020)]%
        {zhang2020label}
\bibfield{author}{\bibinfo{person}{Fuzhi Zhang}, \bibinfo{person}{Xiaoyan Hao}, \bibinfo{person}{Jinbo Chao}, {and} \bibinfo{person}{Shuai Yuan}.} \bibinfo{year}{2020}\natexlab{}.
\newblock \showarticletitle{Label propagation-based approach for detecting review spammer groups on e-commerce websites}.
\newblock \bibinfo{journal}{\emph{Knowledge-Based Systems}}  \bibinfo{volume}{193} (\bibinfo{year}{2020}), \bibinfo{pages}{105520}.
\newblock


\bibitem[Zhelonkin et~al\mbox{.}(2016)]%
        {zhelonkin2016robust}
\bibfield{author}{\bibinfo{person}{Mikhail Zhelonkin}, \bibinfo{person}{Marc~G Genton}, {and} \bibinfo{person}{Elvezio Ronchetti}.} \bibinfo{year}{2016}\natexlab{}.
\newblock \showarticletitle{Robust inference in sample selection models}.
\newblock \bibinfo{journal}{\emph{Journal of the Royal Statistical Society Series B: Statistical Methodology}} \bibinfo{volume}{78}, \bibinfo{number}{4} (\bibinfo{year}{2016}), \bibinfo{pages}{805--827}.
\newblock


\bibitem[Zhou et~al\mbox{.}(2011)]%
        {zhou2011robust}
\bibfield{author}{\bibinfo{person}{Yan-Bo Zhou}, \bibinfo{person}{Ting Lei}, {and} \bibinfo{person}{Tao Zhou}.} \bibinfo{year}{2011}\natexlab{}.
\newblock \showarticletitle{A robust ranking algorithm to spamming}.
\newblock \bibinfo{journal}{\emph{Europhysics Letters}} \bibinfo{volume}{94}, \bibinfo{number}{4} (\bibinfo{year}{2011}), \bibinfo{pages}{48002}.
\newblock


\bibitem[Zhu et~al\mbox{.}(2024)]%
        {zhu2024robust}
\bibfield{author}{\bibinfo{person}{Huan Zhu}, \bibinfo{person}{Yu Xiao}, \bibinfo{person}{Dongmei Chen}, {and} \bibinfo{person}{Jun Wu}.} \bibinfo{year}{2024}\natexlab{}.
\newblock \showarticletitle{A Robust Rating Aggregation Method based on Rater Group Trustworthiness for Collusive Disturbance}.
\newblock \bibinfo{journal}{\emph{Information Systems Frontiers}} (\bibinfo{year}{2024}), \bibinfo{pages}{1--21}.
\newblock


\bibitem[Zhu et~al\mbox{.}(2023)]%
        {zhu2023robust}
\bibfield{author}{\bibinfo{person}{Huan Zhu}, \bibinfo{person}{Yu Xiao}, \bibinfo{person}{Zhi-Gang Wang}, {and} \bibinfo{person}{Jun Wu}.} \bibinfo{year}{2023}\natexlab{}.
\newblock \showarticletitle{A robust reputation iterative algorithm based on Z-statistics in a rating system with thorny objects}.
\newblock \bibinfo{journal}{\emph{Journal of the Operational Research Society}} \bibinfo{volume}{74}, \bibinfo{number}{6} (\bibinfo{year}{2023}), \bibinfo{pages}{1600--1612}.
\newblock


\bibitem[Zhu et~al\mbox{.}(2022)]%
        {zhu2022bias}
\bibfield{author}{\bibinfo{person}{Qian Zhu}, \bibinfo{person}{Leo Yu-Ho Lo}, \bibinfo{person}{Meng Xia}, \bibinfo{person}{Zixin Chen}, {and} \bibinfo{person}{Xiaojuan Ma}.} \bibinfo{year}{2022}\natexlab{}.
\newblock \showarticletitle{Bias-aware design for informed decisions: Raising awareness of self-selection bias in user ratings and reviews}.
\newblock \bibinfo{journal}{\emph{Proceedings of the ACM on Human-Computer Interaction}} \bibinfo{volume}{6}, \bibinfo{number}{CSCW2} (\bibinfo{year}{2022}), \bibinfo{pages}{1--31}.
\newblock


\end{thebibliography}

\appendix

\section{Omitted Proofs}
\label{sec:apx}

\begin{proof}[Proof of ~\Cref{lem:lower}]
Consider a linear combination information structure $\theta=\theta_1$ or $\theta_2$ with equal probability. Now we compute its best response $f'(\hat{\rvx})=\E\left[\frac{1}{n}\sum_i \rx_i|\hat{\rvx}\right]$, which is the posterior given the observed ratings. Given $s=\sum_i \mathbbm{1}(\hat{x}_i=1),t=\sum_i \mathbbm{1}(\hat{x}_i=m)$. We have

\begin{footnotesize}
\begin{align*}
    &\quad \ \Pr[\theta=\theta_1|\hat{\rvx}]\\
    &=\frac{\Pr[\hat{\rvx}|\theta_1]}{\Pr[\hat{\rvx}|\theta_1]+\Pr[\hat{\rvx}|\theta_2]}\\
    &=\frac{\binom{n}{t}\binom{n-t}{s}a^{n-t}(1-a)^tq^s(1-q)^{n-s-t}}{\binom{n}{t}\binom{n-t}{s}a^{n-t}(1-a)^tq^s(1-q)^{n-s-t}+\binom{n}{s}\binom{n-s}{t}a^{n-s}(1-a)^sq^t(1-q)^{n-s-t}}\\
    &=\frac{(aq)^{s-t}}{(aq)^{s-t}+(1-a)^{s-t}}.\\
    &=\frac{1}{1+(\frac{1-a}{aq})^{s-t}}
\end{align*}
\end{footnotesize}

Then  

\begin{align*}
    &\quad \ \E[\rx|\hat{\rx}=0]\\
    &=\Pr[\theta=\theta_1|\hat{\rvx}]\E[\rx|\hat{\rx}=0,\theta_1]+\Pr[\theta=\theta_2|\hat{\rvx}]\E[\rx|\hat{\rx}=0,\theta_2]\\
    &=1*\frac{1}{1+(\frac{1-a}{aq})^{s-t}}+m*(1-\frac{1}{1+(\frac{1-a}{aq})^{s-t}})\\
    &=m-(m-1)\frac{1}{1+(\frac{1-a}{aq})^{s-t}}.
\end{align*}

Thus the best response is $f'(\hat{\rvx})=\frac{\sum_i \hat{\rx}_i+(m-(m-1)\frac{1}{1+(\frac{1-a}{aq})^{s-t}})*(n-s-t)}{n}$. By simple calculation we obtain that for any $f$,
\begin{footnotesize}
\begin{align*}
    R(f,\Theta)&\ge R(f,\theta)\ge R(f',\theta)\\
    &=\sum_{s,t}\binom{n}{t}\binom{n-t}{s}a^{n-t}(1-a)^tq^s(1-q)^{n-s-t}\left(\frac{(n-s-t)(m-1)(1-\mu)}{n}\right)^2\\
\end{align*}
\end{footnotesize}
\end{proof}

\begin{proof}[Proof of ~\Cref{thm:PAA}]
When $n\to\infty$, the empirical distribution is propositional to the element-wise product of the true distribution $\vp$ and the participation probabilities $\vg$: $\hat{\vp}\propto \vp\circ\vg$. We want to minimize $\max_{\vp,\vg}(f(\hat{\vp})-\E_{\rx\sim\vp}[\rx])^2$.

Given $\hat{\vp}$, define the lower bound of the expectation of $\vp$ which satisfy the propositional constraint $l^*(\hat{\vp})=\min_{\vp,\vg:\hat{p}\propto \vp\circ\vg}\E_{\rx\sim\vp}[\rx]$ and the upper bound $u^*(\hat{\vp})=\max_{\vp,\vg:\hat{p}\propto \vp\circ\vg}\E_{\rx\sim\vp}[\rx]$. Since $(f(\hat{\vp})-\E_{\rx\sim\vp}[\rx])^2$ is a quadratic function about $\E_{\rx\sim\vp}[\rx]$, we have 
\begin{align*}
    \max_{\vp,g}(f(\hat{\vp})-\E_{\rx\sim\vp}[\rx])^2&=\max\{(f(\hat{\vp})-l^*(\hat{\vp}))^2,(f(\hat{\vp})-u^*(\hat{\vp}))^2\}\\
    &\geq (u^*(\hat{\vp})-l^*(\hat{\vp}))^2/4.
\end{align*}

The equality holds if and only if $f(\hat{\vp})=(u^*(\hat{\vp})+l^*(\hat{\vp}))/2$, so the best aggregator is $f(\hat{\vp})=(u^*(\hat{\vp})+l^*(\hat{\vp}))/2$. 
It's left to show that $l^*(\hat{\vp})=l(\hat{\vp})$ and $u^*(\hat{\vp})=u(\hat{\vp})$ where $l(\hat{\vp})$ and $u(\hat{\vp})$ are described in the definition of PAA (\Cref{def:paa}).

Given $\hat{\vp}$, we start to find $(\vp',\vg')$ to minimize or maximize $\E_{\rx\sim\vp}[\rx]$, conditional on $\hat{\vp}\propto \vp\circ\vg$. We first characterize $\vg'$. Here are two properties of the optimal $\vg'$. 
\begin{itemize}
    \item $\vg'$ is extreme: $g_r=1$ or $q$ for any rating $r$.
    \item $\vg'$ is monotonic: $g_r\le g_{r+1}$ or $g_r\ge g_{r+1}$ for any rating $r$.
\end{itemize}

Given $\hat{\vp}$, because $\hat{\vp}\propto \vp\circ\vg$, $\E_{\rx\sim\vp}[\rx]$ can be viewed as a function, denoted as $F$, of $\vg$. By calculating the partial derivative of $F$,  we have
    \begin{align*}
        \frac{dF}{dg_r}&=\frac{p_r\hat{g}_r^{-2}}{(\sum_j \hat{p}_{j}/g_j)^2}*(\sum_j j\frac{\hat{p}_{j}}{g_j}-r\sum_j \frac{\hat{p}_{j}}{g_j})\\
        &=\frac{p_r\hat{g}_r^{-2}}{(\sum_j \hat{p}_{j}/g_j)^2}*(\sum_{j=1}^{r-1}(j-r)\frac{\hat{p}_{j}}{g_j}+\sum_{j=r+1}^{m}(j-r)\frac{\hat{p}_{j}}{g_j})\\
        &=\frac{p_r\hat{g}_r^{-2}}{\sum_j \hat{p}_{j}/g_j}*(F(\vg)-r).
    \end{align*}

First Notice that the sign of $\frac{dF}{dg_r}$ is independent with $g_r$ as long as $g_r$ is positive, so the optimal $\vg$ is extreme: $g_r=1$ or $g_r=q$. Then Notice the sign of $\frac{dF}{dg_r}$ is determined by $F(\vg)-r$ where $F(\vg)=\E_{\rx \sim \vp}[\rx] \in [1,m]$, so the optimal $\vg$ is monotonic.

Using these two properties of $\vg'$, $\vg'$ can be given by an index strategy: 

for the minimum, there exists an index $k_1(\hat{\vp})$ such that 
\begin{equation*}
    g'_r=\left\{\begin{aligned}
        &q&\quad r\le k_1(\hat{\vp})\\
        &1&\quad r> k_1(\hat{\vp})\\
    \end{aligned}
    \right
    .
\end{equation*}
while for the maximum, there exists an index $k_2(\hat{\vp})$ such that
\begin{equation*}
    g'_r=\left\{\begin{aligned}
        &1&\quad r\le k_2(\hat{\vp})\\
        &q&\quad r> k_2(\hat{\vp})\\
    \end{aligned}
    \right
    .
\end{equation*}



To find the optimal index, we can simply calculate all the $m$ values, but here we make a more careful analysis. 

Take the minimum for example. We use $\vg^{(k)}$ to denote a $1\times m$ vector with the first $k$ elements being $q$ and the other elements being $1$. We claim $F(\vg^{(k)})$ has a single-bottom shape regarding the index $k$, or equivalently, $$F(\vg^{(k)})-F(\vg^{(k-1)})>0 \implies \forall k^{'}\geq k, \ F(\vg^{(k')})-F(\vg^{(k'-1)})\geq 0.$$

Let $D_k=\frac{1}{q}\sum_{i=1}^k{\hat{p}_{i}}+\sum_{i=k+1}^m\hat{p}_{i}$. We have $$
F(\vg^{(k)})-F(\vg^{(k-1)})=(\frac{1}{q}-1)\hat{p}_kD_k(k-F(g^k)).$$

We first prove $F(\vg^{(k)})< k \implies \forall k^{'}\geq k, \ F(\vg^{(k')}) < k^{'}$. Notice 

$$F(\vg^{(k+1)})-F(\vg^{(k)})=(\frac{1}{q}-1)\hat{p}_{k+1}D_{k+1}((k+1)-F(g^{k+1})).$$

So
$$F(\vg^{(k+1)})=\frac{F(k)+(\frac{1}{q}-1)(k+1)\hat{p}_{k+1}D_{k+1}}{1+(\frac{1}{q}-1)\hat{p}_{k+1}D_{k+1}}.$$

If $F(\vg^{(k)}) < k$, we have 
\begin{align*}
    F(g^{k+1})&=\frac{F(k)+(\frac{1}{q}-1)(k+1)\hat{p}_{k+1}D_{k+1}}{1+(\frac{1}{q}-1)\hat{p}_{k+1}D_{k+1}}\\
    &<\frac{k+(\frac{1}{q}-1)(k+1)\hat{p}_{k+1}D_{k+1}}{1+(\frac{1}{q}-1)\hat{p}_{k+1}D_{k+1}}\\
    &<\frac{(k+1)+(\frac{1}{q}-1)(k+1)\hat{p}_{k+1}D_{k+1}}{1+(\frac{1}{q}-1)\hat{p}_{k+1}D_{k+1}}\\
    &=k+1
\end{align*}
By induction, we have $\forall k^{'}\geq k, \ F(\vg^{(k')}) < k^{'}$. So 
\begin{align*}
    F(\vg^{(k)})-F(\vg^{(k-1)})>0 &\implies F(\vg^{(k)})< k \\
    &\implies \forall k^{'}\geq k, \ F(\vg^{(k')}) < k^{'} \\
    &\implies \forall k^{'}\geq k, \ F(\vg^{(k')})-F(\vg^{(k'-1)})\geq 0.
\end{align*}

So $k_1(\hat{\vp})$ is the optimal index for minimum $\iff F(\vg^{(k_1(\hat{\vp}))})-k_1(\hat{\vp})>0, F(\vg^{(k_1(\hat{\vp})+1)})-(k_1(\hat{\vp})+1)<0$ ( Since $\hat{\vp}$ can have zero entries, there might be some consecutive $k$ that all of them is "optimal". For clarity, we take this definition). 
i.e. $$k_1(\hat{\vp})=\max\{k:1\leq k\leq m,\sum_{i=1}^{k-1}(i-k)\frac{\hat{p}_{i}}{q}+\sum_{i=k+1}^{m}(i-k)\hat{p}_{i}\geq 0\}.$$

Similarly, $$k_2(\hat{\vp})=\max\{k:1\leq k\leq m,\sum_{i=1}^{k-1}(i-k)\hat{p}_{i}+\sum_{i=k+1}^{m}(i-k)\frac{\hat{p}_{i}}{q}\geq 0\}.$$



Notice the optimal index is also a rough approximation of the optimal value. Since $k_1(\hat{\vp})<l(\hat{\vp})<k_1(\hat{\vp})+1,k_2(\hat{\vp})<u(\hat{\vp})<k_2(\hat{\vp})+1$ and $l(\hat{\vp})<u(\hat{\vp})$, we have $k_1(\hat{\vp})\leq k_2(\hat{\vp})$.

\end{proof}

\begin{proof}[Proof of ~\Cref{prop:PAA}]
Fix the optimal aggregator PAA, we aim to find the worst information structure $(\vp^*,\vg^*)$ that maximizes the loss/regret $(f^{PAA}(\hat{\vp})-\E_{\rx\sim\vp}[\rx])^2$.

We have proved that the optimal aggregator is the midpoint of the extremes $l(\hat{\vp})$ and $u(\hat{\vp})$. The maximal loss is $\left(u(\hat{\vp})-l(\hat{\vp})\right)^2/4$. To obtain the maximal loss, we aim to find $\hat{\vp}^*$ to maximize $u(\hat{\vp})-l(\hat{\vp})$. We analyze the maximizer in the following four steps.

First we prove $\hat{\vp}^*$ has at most three non-zero entries. Then we prove $\hat{\vp}^*$ has exactly two non-zero entries. Next we prove the $\hat{\vp}^*$ is $(\frac{1}{2},0,\cdots,0,\frac{1}{2})$. Finally, we calculate the worst information structures $(\vp^*,\vg^*)$ given $\hat{\vp}^*$ by computing the maximizer and the minimizer of $\E_{\rx\sim\vp}[\rx]$ given $\hat{\vp}=\hat{\vp}^*$.

\textbf{First Step:}  For any $\hat{\vp}$, consider the optimal index for minimum $k_1(\hat{\vp})$ and the optimal index for maximum $k_2(\hat{\vp})$. Since $k_1(\hat{\vp})<l(\hat{\vp})<k_1(\hat{\vp})+1,k_2(\hat{\vp})<u(\hat{\vp})<k_2(\hat{\vp})+1$ and $l(\hat{\vp})<u(\hat{\vp})$, we have $k_1(\hat{\vp})\leq k_2(\hat{\vp})$. Define $a=\sum_{j=1}^{k_1(\hat{\vp})}\hat{p}_{j}$ and
$$\hat{\vp}_1=(a,0,\cdots,0,\hat{p}_{k_1(\hat{\vp})+1},\hat{p}_{k_1(\hat{\vp})+2},\cdots,\hat{p}_m).$$

For short, we denote $R(f,\hat{\vp})=\max_{\vp,\vg:\hat{\vp}\propto \vp\circ\vg} (f(\hat{\vp})-\E_{\rx\sim\vp}[\rx])^2$, which is the maximum regret of $f$ given the empirical distribution $\hat{\vp}$. We will show $R(f^{PAA},\hat{\vp})\leq R(f^{PAA},\hat{\vp}_1)$.

Note for $\hat{\vp}_1$, we have
$$l(\hat{\vp}_1)\leq \frac{\frac{1}{q}\sum_{j=1}^{k_1(\hat{\vp})}\hat{p}_{j}+\sum_{j=k_1(\hat{\vp})+1}^{m}j\hat{p}_{j}}{\frac{1}{q}\sum_{j=1}^{k_1(\hat{\vp})}\hat{p}_{j}+\sum_{j=k_1(\hat{\vp})+1}^{m}\hat{p}_{j}},
$$
and 
$$u(\hat{\vp}_1)\geq \frac{\sum_{j=1}^{k_2(\hat{\vp})}\hat{p}_{j}+\frac{1}{q}\sum_{j=k_2(\hat{\vp})+1}^{m}j\hat{p}_{j}}{\sum_{j=1}^{k_2(\hat{\vp})}\hat{p}_{j}+\frac{1}{q}\sum_{j=k_2(\hat{\vp})+1}^{m}\hat{p}_{j}}.$$

So \begin{align*}
    & \quad \, (u(\hat{\vp}_1)-l(\hat{\vp}_1))-(u(\hat{\vp})-l(\hat{\vp}))\\
    &=(l(\hat{\vp})-l(\hat{\vp}_1))-(u(\hat{\vp})-u(\hat{\vp}_1))\\
    &\geq (\frac{\frac{1}{q}\sum_{j=1}^{k_1(\hat{\vp})}j\hat{p}_{j}+\sum_{j=k_1(\hat{\vp})+1}^{m}j\hat{p}_{j}}{\frac{1}{q}\sum_{j=1}^{k_1(\hat{\vp})}\hat{p}_{j}+\sum_{j=k_1(\hat{\vp})+1}^{m}\hat{p}_{j}}-\frac{\frac{1}{q}\sum_{j=1}^{k_1(\hat{\vp})}\hat{p}_{j}+\sum_{j=k_1(\hat{\vp})+1}^{m}j\hat{p}_{j}}{\frac{1}{q}\sum_{j=1}^{k_1(\hat{\vp})}\hat{p}_{j}+\sum_{j=k_1(\hat{\vp})+1}^{m}\hat{p}_{j}})\\&-(\frac{\sum_{j=1}^{k_2(\hat{\vp})}j\hat{p}_{j}+\frac{1}{q}\sum_{j=k_2(\hat{\vp})+1}^{m}j\hat{p}_{j}}{\sum_{j=1}^{k_2(\hat{\vp})}\hat{p}_{j}+\frac{1}{q}\sum_{j=k_2(\hat{\vp})+1}^{m}\hat{p}_{j}}-\frac{\sum_{j=1}^{k_2(\hat{\vp})}\hat{p}_{j}+\frac{1}{q}\sum_{j=k_2(\hat{\vp})+1}^{m}j\hat{p}_{j}}{\sum_{j=1}^{k_2(\hat{\vp})}\hat{p}_{j}+\frac{1}{q}\sum_{j=k_2(\hat{\vp})+1}^{m}\hat{p}_{j}})\\
    &=\frac{\frac{1}{q}\sum_{j=1}^{k_1(\hat{\vp})}(j-1)\hat{p}_{j}}{\frac{1}{q}\sum_{j=1}^{k_1(\hat{\vp})}\hat{p}_{j}+\sum_{j=k_1(\hat{\vp})+1}^{m}\hat{p}_{j}}-\frac{\sum_{j=1}^{k_2(\hat{\vp})}(j-1)\hat{p}_{j}}{\sum_{j=1}^{k_2(\hat{\vp})}\hat{p}_{j}+\frac{1}{q}\sum_{j=k_2(\hat{\vp})+1}^{m}\hat{p}_{j}}\\
    &=\frac{\sum_{j=1}^{k_1(\hat{\vp})}(j-1)\hat{p}_{j}}{\sum_{j=1}^{k_1(\hat{\vp})}\hat{p}_{j}+q\sum_{j=k_1(\hat{\vp})+1}^{m}\hat{p}_{j}}-\frac{\sum_{j=1}^{k_2(\hat{\vp})}(j-1)\hat{p}_{j}}{\sum_{j=1}^{k_2(\hat{\vp})}\hat{p}_{j}+\frac{1}{q}\sum_{j=k_2(\hat{\vp})+1}^{m}\hat{p}_{j}}\\
    &\geq 0.  
\end{align*}

which implies $R(f^{PAA},\hat{\vp})\leq R(f^{PAA},\hat{\vp}_1)$.

We also use $\vp(1)$ to denote the first element of $\vp$. Define $b=\sum_{j=k_2(\hat{\vp}_1)+1}^{m}\hat{\vp}_{1}(j)$ and
$\hat{\vp}_2=(\hat{\vp}_1(1),\hat{\vp}_1(2),\cdots,\hat{\vp}_1(k_2(\hat{\vp}_1)),0,\cdots,0,b)$.

Similarly, we can prove $R(f^{PAA},\hat{\vp}_1)\leq R(f^{PAA},\hat{\vp}_2)$. 

We call this one iteration. Note the non-zero entries will not increase during the iteration. We iterate until the non-zero entries do not decrease. We use $\hat{\vp}'$ to denote the stable distribution after iterations. $\hat{\vp}'$ has the format of $(\hat{p}_1',0,\cdots,0,\hat{p}_l',\hat{p}_{l+1}',\cdots,\hat{p}_r',0,\cdots,0,\hat{p}_m')$, where $k_1(\hat{\vp}') < l$ and $k_2(\hat{\vp}') > r$.

Define $\hat{\vp}_3=(\hat{p}_1',0,\cdots,0,1-\hat{p}_1'-\hat{p}_m',0,\cdots,0,\hat{p}_m')$ where the $l$-th entry is $1-\hat{p}_1'-\hat{p}_m'$. Define $\hat{\vp}_4=(\hat{p}_1',0,\cdots,0,1-\hat{p}_1'-\hat{p}_m',0,\cdots,0,\hat{p}_m')$ where the $r$-th entry is $1-\hat{p}_1'-\hat{p}_m'$.

$$l(\hat{\vp}_3)\leq \frac{\frac{1}{q}\sum_{j=1}^{l-1}j\hat{\vp}_{3}(j)+\sum_{j=l}^{m}j\hat{\vp}_{3}(j)}{\frac{1}{q}\sum_{j=1}^{l-1}\hat{\vp}_{3}(j)+\sum_{j=l}^{m}\hat{\vp}_{3}(j)}
=\frac{\frac{1}{q}\hat{p}_1'+l(1-\hat{p}_1'-\hat{p}_m')+m\hat{p}_m'}{\frac{1}{q}\hat{p}_1'+(1-\hat{p}_1')}
$$

$$u(\hat{\vp}_3)\geq \frac{\sum_{j=1}^{r}j\hat{\vp}_{3}(j)+\frac{1}{q}\sum_{j=r+1}^{m}j\hat{\vp}_{3}(j)}{\sum_{j=1}^{r}\hat{\vp}_{3}(j)+\frac{1}{q}\sum_{j=r+1}^{m}\hat{\vp}_{3}(j)}
=\frac{\hat{p}_1'+l(1-\hat{p}_1'-\hat{p}_m')+\frac{m}{q}\hat{p}_m'}{(1-\hat{p}_m')+\frac{1}{q}\hat{p}_m'}
$$
If $\hat{p}_1' \leq \hat{p}_m'$, then
\begin{align*}
    & \quad \, (u(\hat{\vp}_3)-l(\hat{\vp}_3))-(u(\hat{\vp}')-l(\hat{\vp})')\\
    &=(l(\hat{\vp}')-l(\hat{\vp}_3))-(u(\hat{\vp}')-u(\hat{\vp}_3))\\
    &\geq (\frac{\frac{1}{q}\sum_{j=1}^{k_1(\hat{\vp}')}j\hat{p}_{j}'+\sum_{j=k_1(\hat{\vp}')+1}^{m}j\hat{p}_{j}'}{\frac{1}{q}\sum_{j=1}^{k_1(\hat{\vp}')}\hat{p}_{j}'+\sum_{j=k_1(\hat{\vp}')+1}^{m}\hat{p}_{j}'}
    -\frac{\frac{1}{q}\hat{p}_1'+l(1-\hat{p}_1'-\hat{p}_m')+m\hat{p}_m'}{\frac{1}{q}\hat{p}_1'+(1-\hat{p}_1')})\\
    &-(\frac{\sum_{j=1}^{k_2(\hat{\vp}')}j\hat{p}_{j}'+\frac{1}{q}\sum_{j=k_2(\hat{\vp}')+1}^{m}j\hat{p}_{j}'}{\sum_{j=1}^{k_2(\hat{\vp}'}\hat{p}_{j}'+\frac{1}{q}\sum_{j=k_2(\hat{\vp}')+1}^{m}\hat{p}_{j}'}
    -\frac{\hat{p}_1'+l(1-\hat{p}_1'-\hat{p}_m')+\frac{m}{q}\hat{p}_m'}{(1-\hat{p}_m')+\frac{1}{q}\hat{p}_m'})\\
    &=(\frac{\frac{1}{q}\hat{p}_1'+\sum_{j=l}^{r}j\hat{p}_j'+m\hat{p}_m'}{\frac{1}{q}\hat{p}_1'+(1-\hat{p}_1')}
    -\frac{\frac{1}{q}\hat{p}_1'+l(1-\hat{p}_1'-\hat{p}_m')+m\hat{p}_m'}{\frac{1}{q}\hat{p}_1'+(1-\hat{p}_1')}
    )\\
    &-(\frac{\hat{p}_1'+\sum_{j=l}^{r}j\hat{p}_j'+\frac{m}{q}\hat{p}_m'}{(1-\hat{p}_m')+\frac{1}{q}\hat{p}_m'}
    -\frac{\hat{p}_1'+l(1-\hat{p}_1'-\hat{p}_m')+\frac{m}{q}\hat{p}_m'}{(1-\hat{p}_m')+\frac{1}{q}\hat{p}_m'})\\
    &= \left(\sum_{j=l}^{r}j\hat{p}_j'-l(1-\hat{p}_1'-\hat{p}_m')\right)\left(\frac{1}{\frac{1}{q}\hat{p}_1'+(1-\hat{p}_1')}-\frac{1}{(1-\hat{p}_m')+\frac{1}{q}\hat{p}_m'}\right)\\
    &= \sum_{j=l}^{r}(j-l)\hat{p}_j'\left(\frac{1}{\frac{1}{q}\hat{p}_1'+(1-\hat{p}_1')}-\frac{1}{(1-\hat{p}_m')+\frac{1}{q}\hat{p}_m'}\right)\\
    &\geq 0.
\end{align*}

Similarly, if $\hat{p}_1' > \hat{p}_m'$, we have $$(u(\hat{\vp}_4)-l(\hat{\vp}_4))-(u(\hat{\vp}')-l(\hat{\vp})')\geq 0.$$

So $(u(\hat{\vp}')-l(\hat{\vp})')\leq \max\{(u(\hat{\vp}_3)-l(\hat{\vp}_3)),(u(\hat{\vp}_4)-l(\hat{\vp}_4))\}$, which implies $R(f^{PAA},\hat{\vp}')\leq \max\{R(f^{PAA},\hat{\vp}_3),R(f^{PAA},\hat{\vp}_4)\}$. So $\hat{\vp}^*$ has at most three non-zero entries.

\textbf{Second Step:} Now we prove  $\hat{\vp}^*$ has exactly two non-zero entries. Without loss of generality, we now suppose there are only three ratings: $1,k,m$, where $1<k<m$. For $\hat{\vp}=(a,1-a-b,b)$, suppose the optimal index for minimum is $k_1(\hat{\vp}) \in \{1,2\}$ and the optimal index for maximum is $k_2(\hat{\vp}) \in \{1,2\}$. Still we have $k_1(\hat{\vp})\leq k_2(\hat{\vp})$. Define $\hat{\vp}_5=(a,0,1-a),\hat{\vp}_6=(1-b,0,b)$. There are three cases.

\textbf{Case 1: }
We have
$k_1(\hat{\vp})=k_2(\hat{\vp})=1$. $$l(\hat{\vp})=\frac{a}{a+q(1-a)}+\frac{q(1-a-b)}{a+q(1-a)}\times k+\frac{qb}{a+q(1-a)}\times m$$
$$u(\hat{\vp})=\frac{qa}{qa+1-a}+\frac{1-a-b}{qa+(1-a)}\times k+\frac{b}{qa+(1-a)}\times m$$ $$l(\hat{\vp}_5)=\frac{a}{a+q(1-a)}+\frac{q(1-a)}{a+q(1-a)}\times m$$ 
$$u(\hat{\vp}_5)=\frac{qa}{qa+(1-a)}+\frac{1-a}{qa+(1-a)}\times m$$

Then
\begin{align*}
    & \quad \ (u(\hat{\vp}_5)-l(\hat{\vp}_5))-(u(\hat{\vp})-l(\hat{\vp}))\\
    &=(u(\hat{\vp}_5)-u(\hat{\vp}))-(l(\hat{\vp}_5)-l(\hat{\vp}))\\
    &=\frac{1-a-b}{qa+(1-a)}\times k- \frac{q(1-a-b)}{a+q(1-a)}\times k\\
    &=k(1-a-b)(\frac{1}{qa+(1-a)}-\frac{1}{\frac{a}{q}+(1-a)})\\
    &\geq 0.
\end{align*}
So $R(f^{PAA},\hat{\vp})\leq R(f^{PAA},\hat{\vp}^{5})$.

\textbf{Case 2: }$k_1(\hat{\vp})=k_2(\hat{\vp})=2$. Similarly, we can prove $R(f^{PAA},\hat{\vp})\leq R(f^{PAA},\hat{\vp}_{6})$.

\textbf{Case 3: }$k_1(\hat{\vp})=1, k_2(\hat{\vp})=2$. $$l(\hat{\vp})=\frac{a}{a+q(1-a)}+\frac{q(1-a-b)}{a+q(1-a)}\times k+\frac{qb}{a+q(1-a)}\times m$$
$$u(\hat{\vp})=\frac{qa}{b+q(1-b)}+\frac{q(1-a-b)}{b+q(1-b)}\times k+\frac{b}{b+q(1-b))}\times m$$
Define $$A=\frac{q(1-b)}{b+q(1-b)}+\frac{bm}{b+q(1-b)}-(1+\frac{qb(m-1)}{a+q(1-a)}),$$
$$B=1+\frac{(q(1-a-b)+b)(m-1)}{b+q(1-b)}-(\frac{a}{a+q(1-a)}+\frac{q(1-a)m}{a+q(1-a)}).$$

Since $u(\hat{\vp})-l(\hat{\vp})$ is linear on $k$, we have $$u(\hat{\vp})-l(\hat{\vp})\leq \max\{A,B\}.$$

Notice $$u(\hat{\vp}_6)=\frac{q(1-b)}{b+q(1-b)}+\frac{bm}{b+q(1-b)},\ l(\hat{\vp}_5)=\frac{a}{a+q(1-a)}+\frac{q(1-a)m}{a+q(1-a)}.$$ 

So

$$A=u(\hat{\vp}_6)-(1+\frac{qb(m-1)}{a+q(1-a)}),\ B=1+\frac{(q(1-a-b)+b)(m-1)}{b+q(1-b)}-l(\hat{\vp}_5)$$

\begin{align*}
    &\quad \; l(\hat{\vp}_6)-(1+\frac{qb(m-1)}{a+q(1-a)})\\
    &=\frac{1-b}{qb+1-b}+\frac{qbm}{qb+1-b}-(1+\frac{qb(m-1)}{a+q(1-a)})\\
    &=1+\frac{qb(m-1)}{qb+1-b}-((1+\frac{qb(m-1)}{a+q(1-a)}))\\
    &=qb(m-1)(\frac{1}{qb+1-b}-\frac{1}{a+q(1-a)})\\
    &=\frac{qb(m-1)}{(qb+1-b)(a+q(1-a))}(a+q(1-a)-(qb+1-b))\\
    &=\frac{qb(m-1)}{(qb+1-b)(a+q(1-a))}(q-1)(1-a-b)\\
    &\leq 0
\end{align*}
\begin{align*}
    &\quad \; u(\hat{\vp}_5)-(1+\frac{(q(1-a-b)+b)(m-1)}{b+q(1-b)})\\
    &=\frac{qa}{qa+(1-a)}+\frac{(1-a)m}{qa+(1-a)}-\left(1+\frac{(q(1-a-b)+b)(m-1)}{b+q(1-b)}\right)\\
    &=1+\frac{(1-a)(m-1)}{qa+(1-a)}-\left(1+\frac{(q(1-a-b)+b)(m-1)}{b+q(1-b)}\right)\\
    &=(m-1)(\frac{1-a}{qa+(1-a)}-\frac{q(1-a-b)+b}{b+q(1-b)})\\
    &=(m-1)\left(1-\frac{qa}{qa+(1-a)}-(1-\frac{qa}{b+q(1-b)})\right)\\
    &=qa(m-1)(\frac{1}{b+q(1-b)}-\frac{1}{qa+(1-a)})\\
    &=\frac{qa(m-1)}{(qa+(1-a))(b+q(1-b))}(qa+(1-a)-b+q(1-b))\\
    &=\frac{qa(m-1)}{(qa+(1-a))(b+q(1-b))}(1-q)(1-a-b)\\
    &\geq 0.
\end{align*}
So $A \leq u(\hat{\vp}_6)-l(\hat{\vp}_6), \ B \leq u(\hat{\vp}_5)-l(\hat{\vp}_5)$, which implies
$$u(\hat{\vp})-l(\hat{\vp})\leq \max\{u(\hat{\vp}_6)-l(\hat{\vp}_6),u(\hat{\vp}_5)-l(\hat{\vp}_5)\},$$ i.e., $$R(f^{PAA},\hat{\vp})\leq \max\{R(f^{PAA},\hat{\vp}_5),R(f^{PAA},\hat{\vp}_6)\}.$$

Putting the three pieces together, we have $$R(f^{PAA},\hat{\vp})\leq \max\{R(f^{PAA},\hat{\vp}_5),R(f^{PAA},\hat{\vp}_6)\},$$
which implies $\hat{\vp}^{*}$ has exactly two non-zero entries.

\textbf{Third Step:} Next, we prove  $\hat{\vp}^* = (\frac{1}{2},0,\cdots,0,\frac{1}{2})$. For $\hat{\vp}=(a,0,\cdots,0,1-a),0\leq a \leq 1$, $l(\hat{\vp})=\frac{a}{a+q(1-a)}+\frac{q(1-a)}{a+q(1-a)}\times m,\ u(\hat{\vp})=\frac{qa}{qa+(1-a)}+\frac{1-a}{qa+(1-a)}\times m$, so \begin{align*}
    u(\hat{\vp})-l(\hat{\vp})&=(m-1)\times \left(\frac{1-a}{qa+(1-a)}-\frac{q(1-a)}{a+q(1-a)}\right)\\
    &=(m-1)\times \frac{(1-q^2)(a-a^2)}{(1-q)^2(a-a^2)+q}\\
    &=(m-1)\times \frac{(1+q)(a-a^2)}{(1-q)(a-a^2)+\frac{q}{1-q}}.
\end{align*}

Let $x=a-a^2 \in [0,\frac{1}{4}],\ F(x)=\frac{u(\hat{\vp})-l(\hat{\vp})}{m-1}=\frac{(1+q)x}{(1-q)x+\frac{q}{1-q}}$. Then \begin{align*}
    F'(x)&=\frac{(1+q)\left((1-q)x+\frac{q}{1-q}\right)-(1-q)(1+q)x}{\left((1-q)x+\frac{q}{1-q}\right)^2}\\
    &=\frac{q(1+q)}{(1-q)\left((1-q)x+\frac{q}{1-q}\right)^2}\\
    &>0.
\end{align*}

So $x=\frac{1}{4}$ uniquely maximizes the regret. Since $x=\frac{1}{4} \iff a=\frac{1}{2}$, we prove $\hat{\vp}^{*} = (\frac{1}{2},0,\cdots,0,\frac{1}{2})$.

\textbf{Fourth Step:} Finally, it is easy to figure out the optimal index for minimum $k_1(\hat{\vp}^{*})$ and the optimal index for maximum $k_2(\hat{\vp}^{*})$ can both be any value in $[m-1]$. Without loss of generality, we set $k_1(\hat{\vp}^{*})=1, k_2(\hat{\vp}^{*})=m-1$. So the corresponding worst pair of information structure is 
    \begin{itemize}
        \item $\theta_1=(\vp_1=[\frac{1}{q+1},0,\cdots,0,\frac{q}{q+1}],\vg_1=[q,1,\cdots,1,1])$
        \item $\theta_2=(\vp_2=[\frac{q}{q+1},0,\cdots,0,\frac{1}{q+1}],\vg_2=[1,1,\cdots, 1,q])$
    \end{itemize}
\end{proof}

\begin{proof}[Proof of ~\Cref{thm:finite}]
    For clarity, we define $R_n(f^{PAA},\vp,\vg)$, which is the regret of PAA when the sample size is $n$. Suppose $f^*_n$ is the optimal aggregator when sample size is $n$. It is equivalent to prove $$\max_{\vp,\vg} R_n(f^{PAA},\vp,\vg)\le \max_{\vp,\vg} R_n(f^*_n,\vp,\vg)+O\left(m^2\sqrt{\frac{\ln n}{n}}\right)$$

    Use $R_n(f^*)$ to denote the optimal regret when the sample size is $n$, and $R^*$ is the optimal regret when $n \to \infty$. First Notice $R_n(f^*)$ is a non-increasing function with respect to $n$ since we can always choose the aggregator which only uses part of the ratings, so $R_n(f^*)\geq R^*$. Then We bound the difference between $R_n(f^{PAA},\vp,\vg)$ and $R^*$ by concentration inequality. 
    
    For any $(\vp,\vg)$, when $n \to \infty$, we obtain $\hat{\vp}=\frac{\vp \vg}{\sum_j{p_jg_j}}$ and  $$R_{\infty}(f^{PAA},\vp,\vg)=(f^{PAA}(\hat{\vp})-\E_{\rx\sim\vp}[\rx])^2=(\frac{u(\hat{\vp})+l(\hat{\vp})}{2}-\E_{\rx\sim\vp}[\rx])^2$$
    
    In the finite case, we get a noisy distribution $\hat{\vp}^{'}$. We use $n_r$ to denote the number of people whose rating is $r$ and $\hat{n}_r$ to denote the number of people who actually report $r$. 
    \begin{small}
        $$R_n(f^{PAA},\vp,\vg)=\E[(f^{PAA}(\hat{\vp}^{'})-\frac{\sum_r{rn_r}}{n}]=\E[(\frac{u(\hat{\vp}^{'})+l(\hat{\vp}^{'})}{2}-\frac{\sum_r{rn_r}}{n})^2]$$
    \end{small}

    By chernoff bound, we have $$\Pr[|n_r-E[n_r]|\geq \sqrt{n\ln n}]\leq 2e^{-\frac{2n\ln n}{n}}=\frac{2}{n^2}$$
    $$\Pr[|\hat{n}_r-E[\hat{n}_r]|\geq \sqrt{n\ln n}]\leq 2e^{-\frac{2n\ln n}{n}}=\frac{2}{n^2}$$
    $$\Pr[|\sum_r \hat{n}_r-E[\sum_r \hat{n}_r]|\geq \sqrt{n\ln n}]\leq 2e^{-\frac{2n\ln n}{n}}=\frac{2}{n^2}$$
    
    Define event $A_r=\{|n_r-E[n_r]|\geq \sqrt{n\ln n}\}$, event $A=\bigvee_{r=1}^{m}A_r$, event $B_r=\{|\hat{n_r}-E[\hat{n}_r]|\geq \sqrt{n\ln n}\}$, event $B=\bigvee_{r=1}^{m}B_r$, event $C=\{|\sum_r \hat{n}_r-E[\sum_r \hat{n}_r]|\geq \sqrt{n\ln n}\}$.  By union bound, we have $$
    \Pr[A\vee B \vee C]\leq \frac{4m+2}{n^2} \leq \frac{1}{n}
    $$

    \textbf{Case 1: } $A$ happens or $B$ happens or $C$ happens. $$\E\left[(\frac{u(\hat{\vp}^{'})+l(\hat{\vp}^{'})}{2}-\frac{\sum_r{rn_r}}{n})^2 \ | \ A\vee B \vee C\right]=O(m^2)$$

    \textbf{Case 2: } None of $A,B,C$ happens. We have \begin{align*}
        \left|\hat{p}_{r}^{'}-\hat{p}_{r}\right|&=\left|\frac{\hat{n}_r}{\sum_j{\hat{n}_j}}-\frac{p_rg_r}{\sum_j{p_jg_j}}\right|\\
        &=\left|\frac{E[\hat{n}_r]+O(\sqrt{n\ln n})}{E[\sum_j{\hat{n}_j}]+O(\sqrt{n\ln n})}-\frac{p_rg_r}{\sum_j{p_jg_j}}\right|\\
        &=\left|\frac{p_rg_rn+O(\sqrt{n\ln n})}{\sum_j{p_jg_j}n+O(\sqrt{n\ln n})}-\frac{p_rg_r}{\sum_j{p_jg_j}}\right|\\
        &=O\left(\sqrt{\frac{\ln n}{n}}\right)
    \end{align*}

    When $n$ is large enough, this error will not influence the optimal index. So
    \begin{small}
        \begin{align*}
        &\quad  \left|l(\hat{\vp}^{'})-l(\hat{\vp})\right|\\
        &=\left| \frac{\frac{1}{q}\sum_{j=1}^{k_1}j\hat{p}_{j}^{'}+\sum_{j=k_1+1}^{m}j\hat{p}_{j}^{'}}{\frac{1}{q}\sum_{j=1}^{k_1}\hat{p}_{j}^{'}+\sum_{j=k_1+1}^{m}\hat{p}_{j}^{'}}-\frac{\frac{1}{q}\sum_{j=1}^{k_1}j\hat{p}_{j}+\sum_{j=k_1+1}^{m}j\hat{p}_{j}}{\frac{1}{q}\sum_{j=1}^{k_1}\hat{p}_{j}+\sum_{j=k_1+1}^{m}\hat{p}_{j}}\right|\\
        &=\left|\frac{\frac{1}{q}\sum_{j=1}^{k_1}j\hat{p}_{j}+\sum_{j=k_1+1}^{m}j\hat{p}_{j}+O\left(m^2\sqrt{\frac{\ln n}{n}}\right)}{\frac{1}{q}\sum_{j=1}^{k_1}\hat{p}_{j}+\sum_{j=k_1+1}^{m}\hat{p}_{j}+O\left(m\sqrt{\frac{\ln n}{n}}\right)}-\frac{\frac{1}{q}\sum_{j=1}^{k_1}j\hat{p}_{j}+\sum_{j=k_1+1}^{m}j\hat{p}_{j}}{\frac{1}{q}\sum_{j=1}^{k_1}\hat{p}_{j}+\sum_{j=k_1+1}^{m}\hat{p}_{j}}\right|\\
        &=O\left(m^2\sqrt{\frac{\ln n}{n}}\right)
    \end{align*}
    \end{small}

    Similarly, $\left|u(\hat{\vp}^{'})-u(\hat{\vp})\right|=O\left(m^2\sqrt{\frac{\ln n}{n}}\right)$, so $$\left|\frac{u(\hat{\vp}^{'})+l(\hat{\vp}^{'})}{2}-\frac{u(\hat{\vp})+l(\hat{\vp})}{2}\right|=O\left(m^2\sqrt{\frac{\ln n}{n}}\right)$$
    
    Notice 
    \begin{footnotesize}
    $$\left|\frac{\sum_r rn_r}{n}-\E_{\rx\sim\vp}[\rx]\right|=\left|\frac{\sum_r rn_r}{n}-\frac{\sum_r r\E[n_r]}{n}\right|=\left|\frac{\sum_r r(n_r-\E[n_r])}{n}\right|=O\left(m^2\sqrt{\frac{\ln n}{n}}\right)$$ 
    \end{footnotesize}
    
    Then we have 
    \begin{align*}
        & \quad \E\left[(\frac{u(\hat{\vp}^{'})+l(\hat{\vp}^{'})}{2}-\sum_r{rn_r})^2 \ | \  \neg (A\vee B \vee C)\right]\\
        &=\left(\frac{u(\hat{\vp})+l(\hat{\vp})}{2}-\E_{\rx\sim\vp}[\rx]+O\left(m^2\sqrt{\frac{\ln n}{n}}\right)\right)^2\\
        &= R_{\infty}(f^{PAA},\vp,\vg)+O\left(m^2\sqrt{\frac{\ln n}{n}}\right)
    \end{align*}

    Putting the two pieces together, we have
    \begin{footnotesize}
        \begin{align*}
        &\quad R_n(f^{PAA},\vp,\vg)\\
        &= \Pr[A\vee B \vee C] \ \E\left[(\frac{u(\hat{\vp}^{'})+l(\hat{\vp}^{'})}{2}-\frac{\sum_r{rn_r}}{n})^2 \ | \ A\vee B \vee C\right]\\
        &+ \Pr[\neg(A\vee B \vee C)] \ \E\left[(\frac{u(\hat{\vp}^{'})+l(\hat{\vp}^{'})}{2}-\frac{\sum_r{rn_r}}{n})^2 \ | \ \neg(A\vee B \vee C)\right]\\
        &=\Pr[A\vee B \vee C] \ O(m^2)+ \Pr[\neg(A\vee B \vee C)] \left(R_n(f^{PAA},\vp,\vg)+O\left(m^2\sqrt{\frac{\ln n}{n}}\right)\right)\\
        &\leq \frac{m^2}{n}+(1-\frac{1}{n})\left(R_n(f^{PAA},\vp,\vg)+O\left(m^2\sqrt{\frac{\ln n}{n}}\right)\right)\\
        &=R_{\infty}(f^{PAA},\vp,\vg)+O\left(m^2\sqrt{\frac{\ln n}{n}}\right)
        \end{align*}
    \end{footnotesize}

Since for any $\vp,\vg$ the inequality holds, we have $$\max_{\vp,\vg} R_n(f^{PAA},\vp,\vg)\leq \max_{\vp,\vg} R_{\infty}(f^{PAA},\vp,\vg)+O\left(m^2\sqrt{\frac{\ln n}{n}}\right)$$

So for any sample size $n$,
\begin{align*}
    \max_{\vp,\vg} R_n(f^{PAA},\vp,\vg)&\leq \max_{\vp,\vg} R_{\infty}(f^{PAA},\vp,\vg)+O\left(m^2\sqrt{\frac{\ln n}{n}}\right)\\
    &=R^*+O\left(m^2\sqrt{\frac{\ln n}{n}}\right)\\
    &\leq R_n(f^*)+O\left(m^2\sqrt{\frac{\ln n}{n}}\right)\\
    &= \max_{\vp,\vg } R(f_n^*,\vp,\vg)+O\left(m^2\sqrt{\frac{\ln n}{n}}\right)
\end{align*}

The first equality holds since $f^{PAA}$ is the optimal aggregator when $n \to \infty$.

\end{proof}

\section{Omitted Figures}
\label{sec:fig}

\begin{figure}[H]
    \centering
    \begin{subfigure}[t]{0.45\columnwidth}
        \centering
        \includegraphics[width=\textwidth]{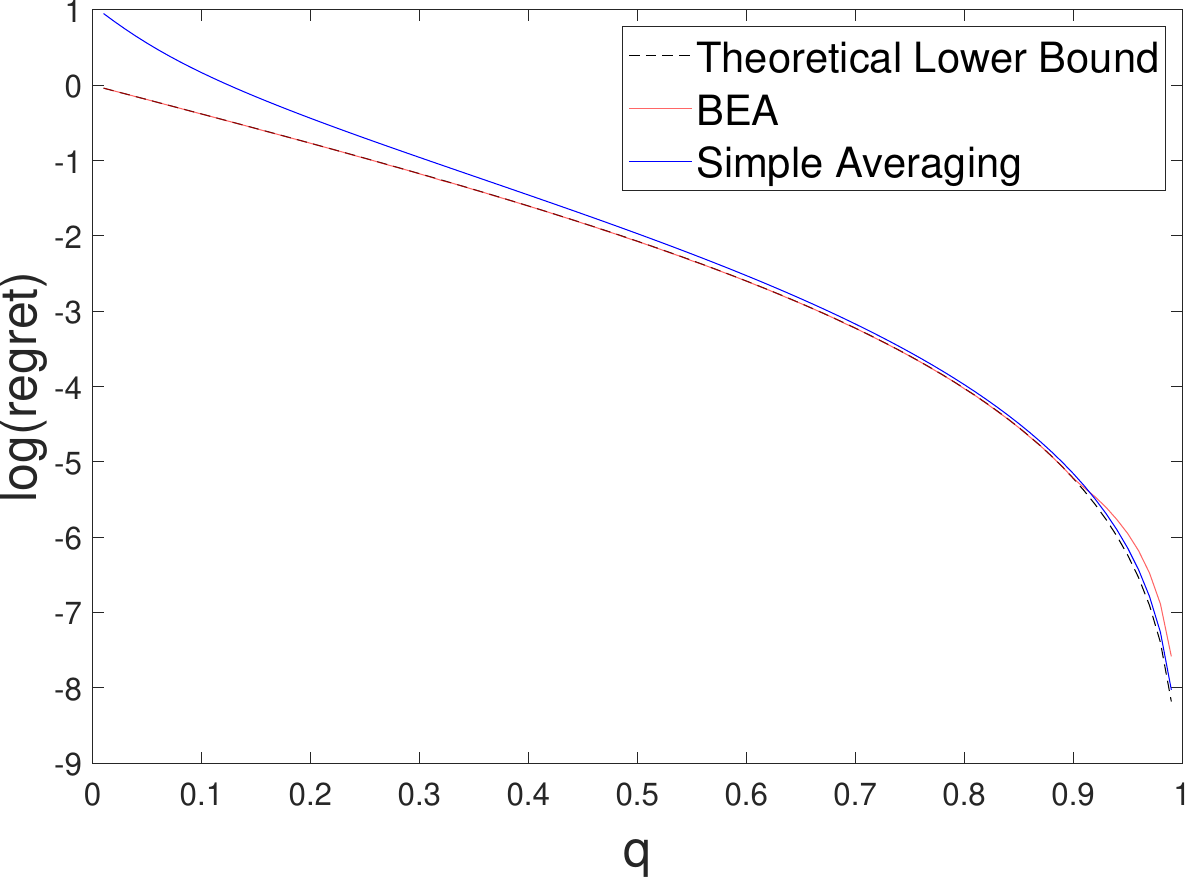}
        \caption{$n=20$, $m=3$}
    \end{subfigure}
    \hfill
    \begin{subfigure}[t]{0.45\columnwidth}
        \centering
        \includegraphics[width=\textwidth]{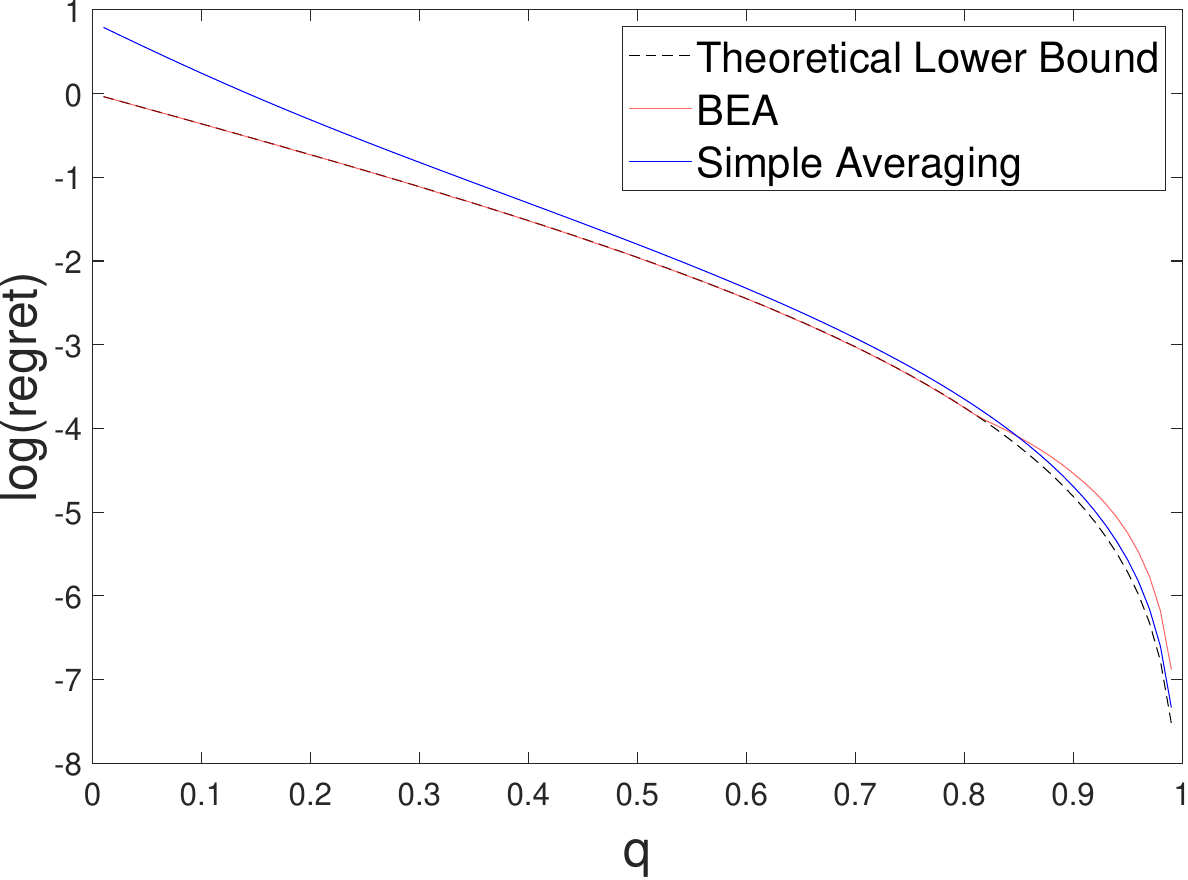}
        \caption{$n=10$, $m=3$}
    \end{subfigure}
    
    \vspace{0.01\textheight} 
    
    \begin{subfigure}[t]{0.45\columnwidth}
        \centering
        \includegraphics[width=\textwidth]{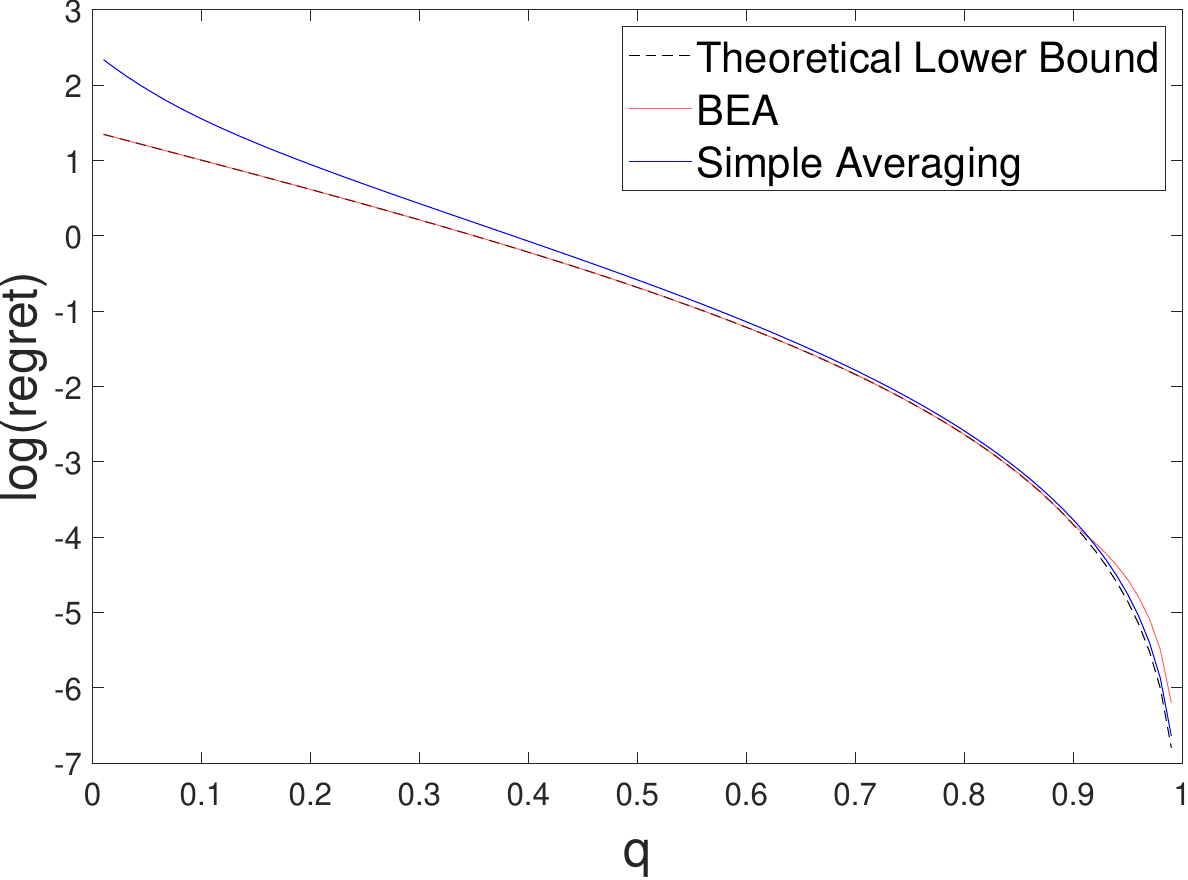}
        \caption{$n=20$, $m=5$}
    \end{subfigure}
    \hfill
    \begin{subfigure}[t]{0.45\columnwidth}
        \centering
        \includegraphics[width=\textwidth]{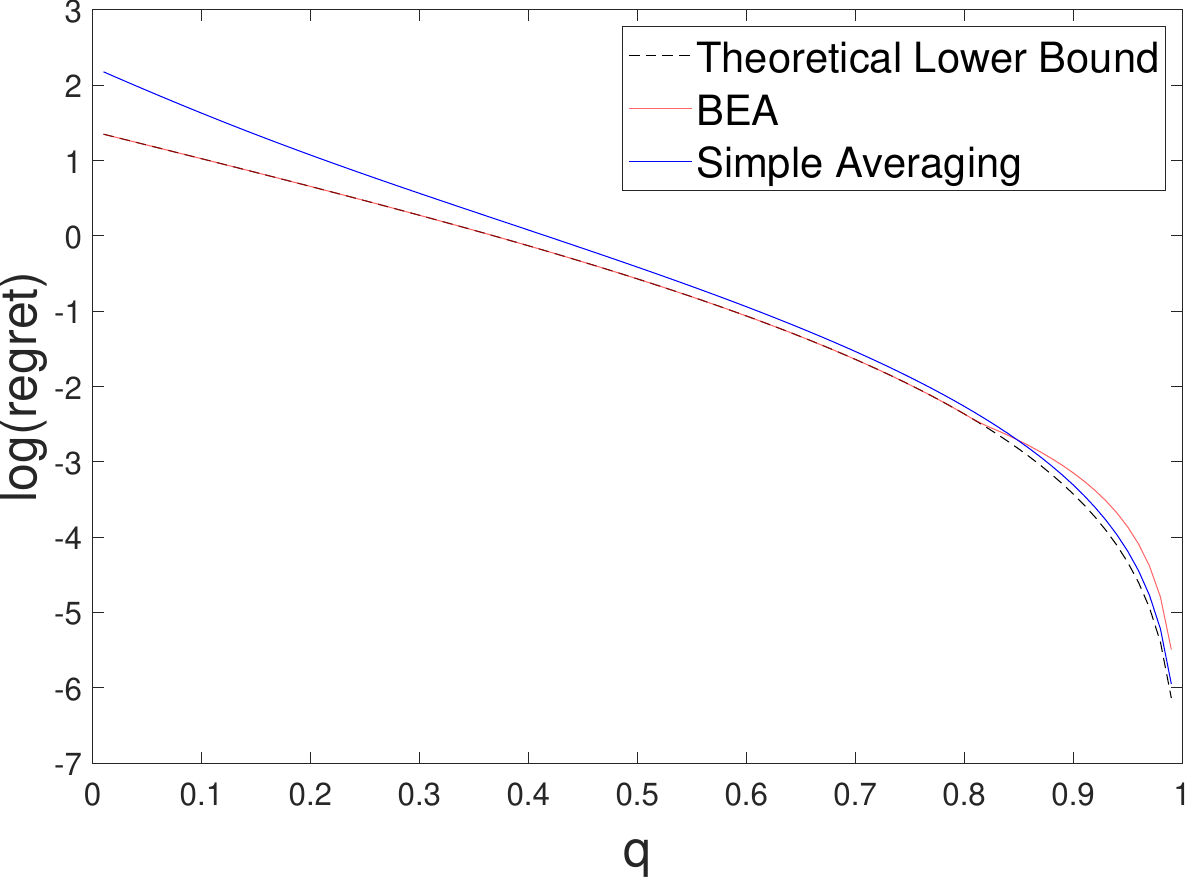}
        \caption{$n=10$, $m=5$}
    \end{subfigure}
    
    \caption{Simple averaging vs. BEA for different sample size $n$ and the number of rating categories $m$. The x-axis is the lower bound of the participation probability, $q$, and the y-axis is the natural logarithm of the regret. The regret of BEA almost matches the theoretical lower bound for a wide range of $q$.}
    \label{fig:full}
\end{figure}

\Cref{fig:full} are the regret of different aggregators with $q$ ranging from $0.01$ to $0.99$ with step $0.01$. Notice when $q$ is large (the threshold is around $0.82$ for $n=10$ and $0.91$ for $n=20$), the regret of BEA deviates from the theory lower bound and even exceeds the regret of simple average aggregator when $q$ is larger. The reason may be that the information structure we construct in \Cref{lem:lower}is not worst for BEA. Instead, the worst information structure will become $$\theta=(\vp=[b,0,\cdots,0,1-b],\vg_1=[q,q,\cdots, q,q])$$ where $b$ is a parameter.

\begin{figure}[H]
  \centering
  \includegraphics[width=0.45\textwidth,keepaspectratio]{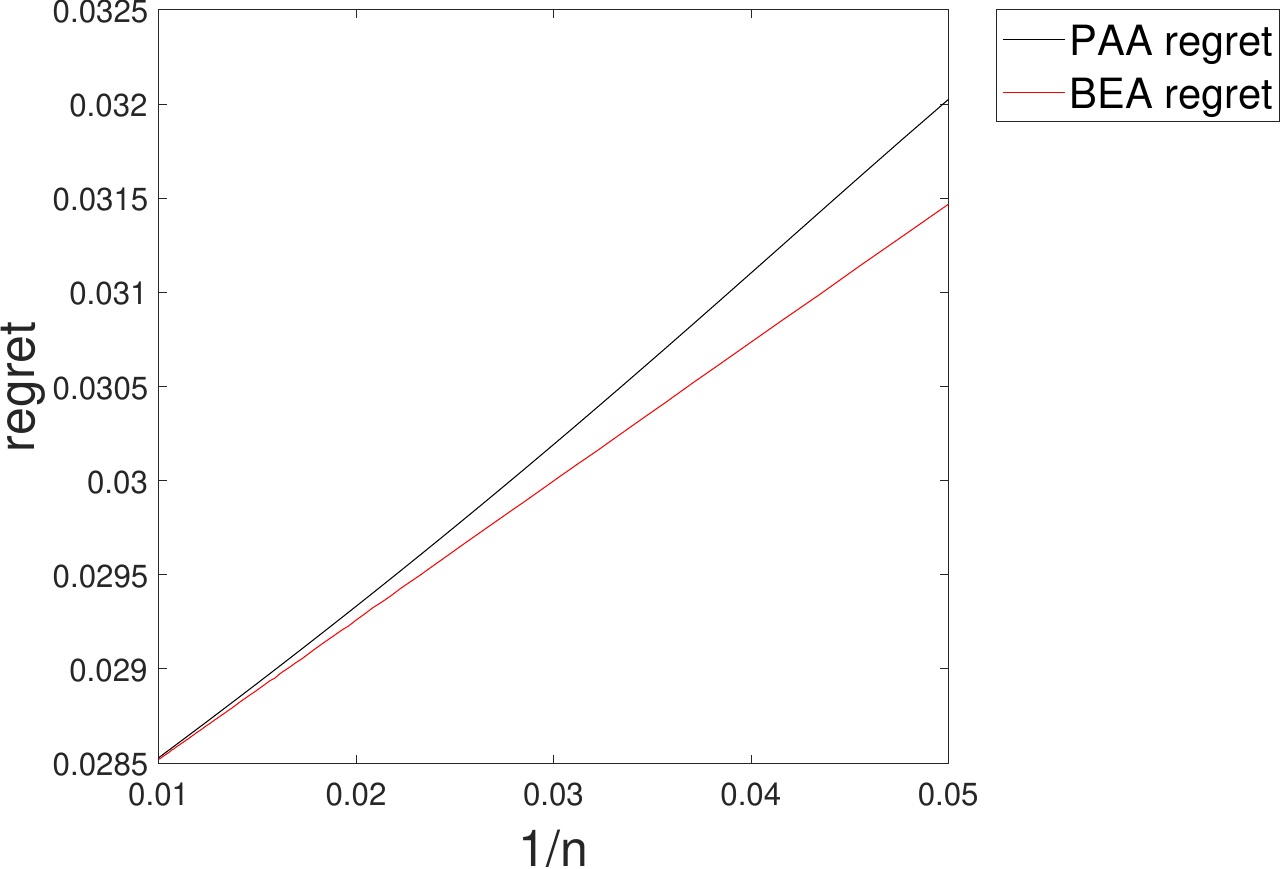}
  \caption{How the regret decreases with $n$ when $m=2,q=0.5,p=(0.67,0.33)$.}
  \label{fig:linear}
\end{figure}

\Cref{fig:linear} shows the regret of BEA and PAA with different number of agents $n$, ranging from $20$ to $100$. We set $m=2$, the rating distribution $p=(0.67,0.33)$ and lower bound of participation probability $q=0.5$. The regret is approximately equal to $k/n+b$ where $k,b$ are two constants. Both the worst-case regrets of BEA and PAA are proportional to $(m-1)^2$ when $n\to\infty$.

\begin{figure}[h]
  \centering
  \includegraphics[width=0.45\textwidth,keepaspectratio]{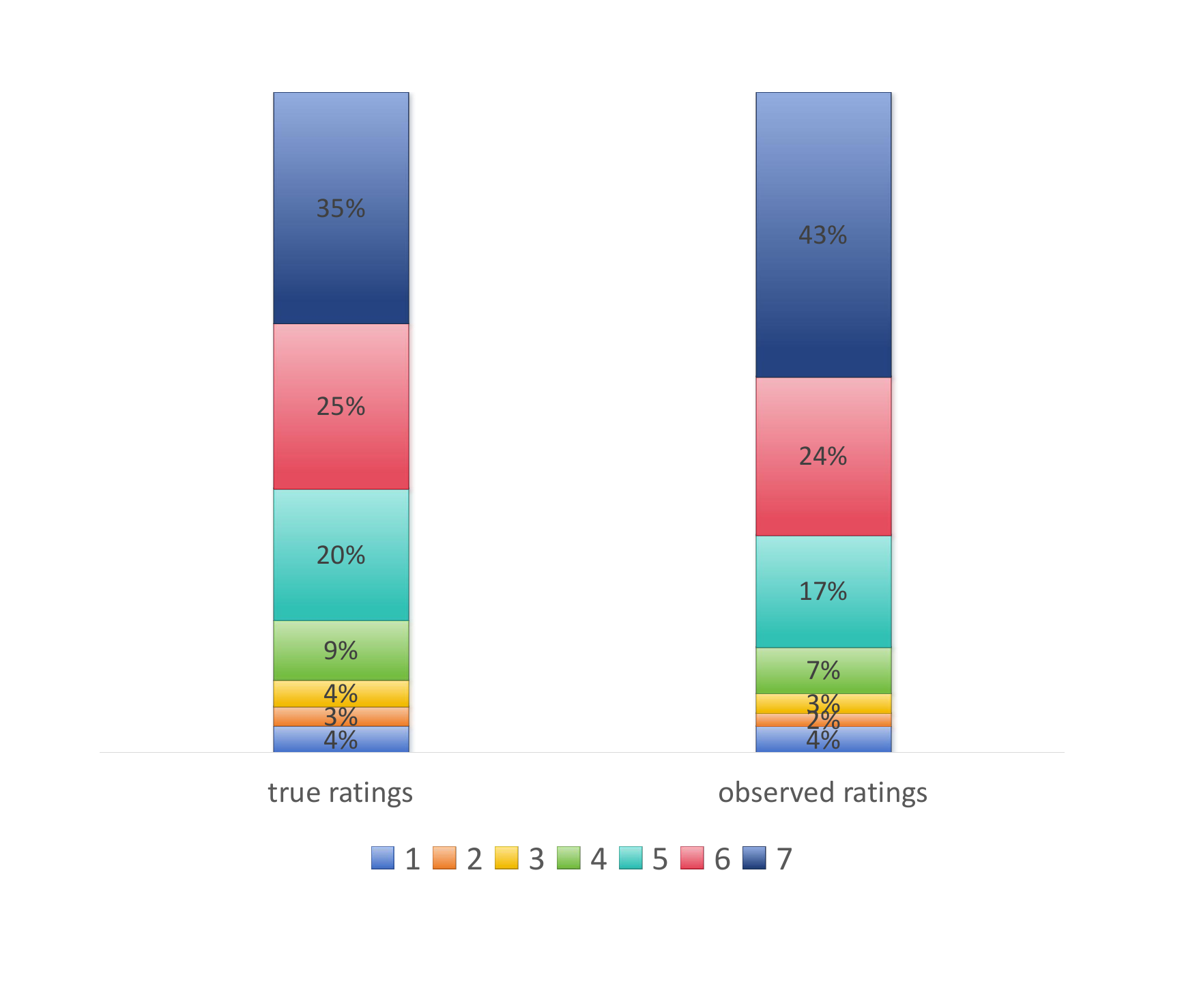}
  \caption{Distribution of 96,646 survey/true ratings and 48,826 posted/observed ratings.}
  \label{fig:data}
\end{figure}

\end{document}